\newtheorem{theorem}{Proposition}
\newtheorem{assumption}{Assumption}
\journal{Neural Networks (accepted: April 2021)}
\begin{document}

\begin{frontmatter}

\title{FastGAE: Scalable Graph Autoencoders with Stochastic Subgraph Decoding}

\author[1,2]{Guillaume Salha}
\author[1]{Romain Hennequin}
\author[2]{Jean-Baptiste Remy}

\author[1]{\\Manuel Moussallam}
\author[2]{Michalis Vazirgiannis}

\address[1]{Deezer Research\footnote{Contact: research@deezer.com}, Paris, France}
\address[2]{LIX, \'{E}cole Polytechnique, Palaiseau, France }

\begin{abstract}
Graph autoencoders (AE) and variational autoencoders (VAE) are powerful node embedding methods, but suffer from scalability issues. In this paper, we introduce FastGAE, a general framework to scale graph AE and VAE to large graphs with millions of nodes and edges. Our strategy, based on an effective stochastic subgraph decoding scheme, significantly speeds up the training of graph AE and VAE while preserving or even improving performances. We demonstrate the effectiveness of FastGAE on various real-world graphs, outperforming the few existing approaches to scale graph AE and VAE by a wide margin.
\end{abstract}

\begin{keyword}
Graph Autoencoders, Graph Variational Autoencoders, Scalability, Graph Convolutional Networks, Node Embedding, Link Prediction, Node Clustering
\end{keyword}

\end{frontmatter}

\section{Introduction}
\label{introduction}

Graph structures efficiently represent relationships and interactions among entities. Social networks, molecules, citations of scientific publications and web pages constitute some of the most famous examples of data usually represented as graphs, i.e. as \textit{nodes} connected via \textit{edges}. Extracting information from these connections is essential to address numerous graph-based learning problems, ranging from link prediction to influence maximization and node clustering. In this direction, several significant improvements were recently achieved by methods leveraging \textit{node embeddings} \cite{hamilton2017representation,wu2019comprehensive}. Instead of relying on hand made features, these methods aim at automatically \textit{learning} low-dimensional vector space representations of nodes capturing relevant information from the graph, such as structural proximity, notably by using graph neural networks \cite{kipf2016-1,hamilton2017inductive}, matrix factorization \cite{cao2015grarep} or random walk processes \cite{perozzi2014deepwalk,tang2015line,grover2016node2vec}. 

In particular, during the last few years, \textit{graph autoencoders} (graph AE)  and \textit{graph variational autoencoders} (graph VAE) emerged as two of the most promising and powerful \textit{unsupervised} node embedding methods \cite{tian2014learning,wang2016structural,kipf2016-2}. Introduced as extensions of standard AE \cite{Rumelhart1986,baldi2012autoencoders} and VAE \cite{kingma2013vae} to graph structures, they involve the combination of two stacked models. First, an \textit{encoder}, typically based on a graph neural network (GNN), maps the nodes into an embedding space; then, a \textit{decoder} tries to reconstruct the original graph structure from the vector representations. Both models are jointly trained to optimize the quality of the reconstruction from the embedding space, in an unsupervised fashion with (for VAE) or without (for AE) a probabilistic approach. Recently, graph AE and VAE have been widely adopted to tackle challenging graph-based problems, such as node clustering \cite{wang2017mgae,pan2018arga,semiimplicit2019,salha2019-1,aaai20}, graph generation \cite{molecule3,molecule1,molecule2,simonovsky2018graphvae,samanta2019nevae} and link prediction \cite{kipf2016-2,berg2018matrixcomp,tran2018multi,grover2019graphite,salha2019-2}, often reaching competitive results w.r.t. popular unsupervised baselines such as DeepWalk \cite{perozzi2014deepwalk} and node2vec \cite{grover2016node2vec}.

Nonetheless, graph AE, VAE and their extensions suffer from \textit{scalability} issues. As we explain in Section 2, they mainly result from the costly decoding operations involved in the graph reconstruction. While several recent works provided strategies to scale GNN models\footnote{We nonetheless point out that these works were done \textit{out} of the graph AE and VAE unsupervised frameworks that we consider in this paper. They aimed at scaling GNN models that were trained in a supervised or semi-supervised manner.} i.e. \textit{encoders} \cite{hamilton2017inductive,chen2018stochastic,chen2018fastgcn,chiang2019cluster}, the question of \textit{how to overcome complex decoders} in graph AE and VAE remains open, preventing them from scaling. As a consequence, existing graph AE and VAE have been mainly applied to relatively small graphs, with up to a few thousand nodes. As larger graphs are ubiquitous, we propose to address these scalability concerns in this paper, making the following contributions:

\begin{itemize}
    \item We introduce FastGAE, a general framework to scale graph AE and VAE models to large graphs with millions of nodes and edges. We leverage graph mining-based sampling schemes and an effective subgraph decoding strategy to significantly lower the computational complexity of graph AE and VAE models, while preserving or even improving their performances.
    \item We propose an in-depth theoretical and experimental analysis of our method. We demonstrate its empirical effectiveness on seven graphs, with various characteristics and natures, and with up to millions of nodes and edges.
    \item We publicly release the code of FastGAE\footnote{Our code is available at: \href{https://github.com/deezer/fastgae}{https://github.com/deezer/fastgae}}, to ensure reproducibility and future usages.

\end{itemize}
The remainder of this paper is organized as follows. After reviewing key notions on standard graph AE and VAE and on their complexity in Section 2, we present and analyze FastGAE, our scalable framework, in Section 3. We report our experimental evaluation and a discussion of our results in Section 4, and we conclude in Section~5.

\section{Preliminaries}

Throughout most of this paper, we consider an undirected graph $\mathcal{G} = (\mathcal{V},\mathcal{E})$ with $|\mathcal{V}| = n$ nodes and $|\mathcal{E}| = m$ edges ; extensions of our work to directed graphs are nonetheless discussed in Section 5. We denote by $A$ the binary and symmetric adjacency matrix of $\mathcal{G}$, and by $X$ an $n \times f$ matrix stacking up $f$-dimensional node-level features vectors, one for each node of $\mathcal{G}$. For featureless graphs, we simply assume $X = I_n$, where $I_n$ is the $n\times n$ identity matrix.

\subsection{Graph Autoencoders}

Graph autoencoders (AE) \cite{tian2014learning,wang2016structural,kipf2016-2} involve the combination of two stacked models: an \textit{encoder} and a \textit{decoder}.
\subsubsection{Encoder}

First, the \textit{encoder model} aims at learning an $n \times d$ matrix $Z$, whose rows $z_i$ are the $d$-dimensional embedding vectors of each node $i \in \mathcal{V}$, with $d \ll n$. This matrix is usually obtained through a \textit{graph neural network} (GNN) \cite{kipf2016-1,bruna2013spectral,defferrard2016} processing $A$ and $X$. More precisely, most recent variants of graph AE implement multi-layer \textit{graph convolutional networks} (GCN) encoders \cite{kipf2016-1,kipf2016-2,micheli2009neural}. In a $L$-layer GCN (with $L \geq 2$), with input layer $H^{(0)} = X$ and output layer $H^{(L)} = Z$ i.e. the embedding vectors, we have:
\begin{align}
&H^{(l)} = \text{ReLU} (\tilde{A} H^{(l-1)} W^{(l-1)}), \hspace{5pt} \text{for } l \in \{1,...,L-1\} \\
&H^{(L)} = \tilde{A} H^{(L-1)} W^{(L-1)}.
\nonumber
\end{align}
In the above equations, $\tilde{A} = D^{-1/2}(A + I_n) D^{-1/2}$ is the symmetric normalization of $A$, with $D$ denoting the diagonal degree matrix of $A + I_n$. In a nutshell, at each layer $l$ we compute a representation for each node, by averaging the representations from layer $l-1$ of its direct neighbors (that, from layer 2, have already aggregated information from \textit{their own} neighbors) and of itself (thus the $A + I_n$ in the normalization), together with a ReLU activation: $\text{ReLU}(x) = \max(x,0)$. $W^{(0)},...,W^{(L-1)}$ are weight matrices, to tune as subsequently detailed in Section 2.1.3.
\subsubsection{Decoder}

Then, the \textit{decoder model} aims at reconstructing the graph from the embedding. \citet{kipf2016-2} and most subsequent works on graph AE models implement a simple \textit{inner-product} decoder. The reconstructed adjacency matrix is then: 
\begin{align}
\hat{A} = \sigma(ZZ^T),
\end{align}
with $Z = \text{GCN}(A,X)$, and with $\sigma(\cdot)$ the sigmoid function: $\sigma(x) = 1/(1 + e^{-x})$. In other words, we have $\hat{A}_{ij} = \sigma (z^T_i z_j)$ for all $(i,j) \in \mathcal{V}\times\mathcal{V}$ i.e. nodes with large inner-products in the embedding are more likely to be connected in the graph according to the model. While we will also consider this decoder in our experiments for simplicity and consistency with previous works, we  point out the existence of more sophisticated decoders in recent research, such as the asymmetric
decoder of \citet{salha2019-2}, the
reverse message passing schemes of \citet{grover2019graphite}
and the decoder of \citet{aaai20} reconstructing node triads.

\subsubsection{Learning}

The objective of graph AE is to learn low-dimensional vector representations that ensure a good reconstruction $\hat{A}$ from the decoder. To achieve this, GCN weights are usually tuned by iteratively minimizing, by \textit{gradient descent} \cite{goodfellow2016deep}, a \textit{reconstruction loss} capturing the similarity between $A$ and $\hat{A}$. In the graph AE framework, this loss is usually formulated as a cross entropy loss \cite{kipf2016-2} i.e.:
\begin{align}
\mathcal{L} = \frac{-1}{n^2}\sum_{(i,j) \in \mathcal{V}^2} \Big[A_{ij}\log(\hat{A}_{ij}) + (1-A_{ij})\log(1 - \hat{A}_{ij})\Big].
\end{align}
Also, in practice, the pairs with $A_{ij} = 1$ are often re-weighted in the loss, when dealing with a sparse adjacency matrix $A$ \cite{kipf2016-2,salha2020simple}.

\subsection{Graph Variational Autoencoders}

\citet{kipf2016-2} also introduced graph extensions of \textit{variational autoencoders}~(VAE), a model originally introduced for general data by \citet{kingma2013vae}.

\subsubsection{Encoder}
In a graph VAE, we establish a probabilistic model on $A$, involving a $d$-dimensional latent variable $z_i$ for each node $i \in \mathcal{V}$, corresponding to its embedding vector. \citet{kipf2016-2} propose the following mean-field inference model as \textit{encoder}:
\begin{align}
q(Z|A,X) = \prod_{i=1}^n q(z_i|A,X),
\end{align}
with $q(z_i|A,X)$ corresponding to a $\mathcal{N}(\mu_i, \text{diag}(\sigma_i^2))$ distribution. They leverage two GCNs to learn the $d$-dimensional Gaussian mean and variance vectors $\mu_i$ and $\sigma_i$ for each node. In a nutshell, $\mu = \text{GCN}_{\mu}(A,X)$, with $\mu$ the matrix of mean vectors $\mu_i$ ; also, $\log \sigma = \text{GCN}_{\sigma}(A,X)$. The embedding vectors $z_i$ are samples subsequently drawn from these distributions.
\subsubsection{Decoder}
Then, from these embedding vectors, a \textit{generative model} aims at \textit{decoding} $A$ using, as for graph AE, inner-products together with sigmoid activations. We have:
\begin{align}\hat{A}_{ij} = p(A_{ij} =1|z_i, z_j) = \sigma(z_i^Tz_j).
\end{align}
Then:
\begin{align}p(A|Z) = \prod\limits_{i,j=1}^n  p(A_{ij}|z_i, z_j).\end{align}
\subsubsection{Learning}
\citet{kipf2016-2} propose to iteratively maximize a lower bound  of the model's likelihood (ELBO) \cite{kingma2013vae} by gradient descent w.r.t. the GCNs' weights:
\begin{align}\mathcal{L}^{\text{ELBO}} = \mathbb{E}_{q(Z|A,X)} \Big[\log
p(A|Z)\Big] - \mathcal{D}_{KL}\Big(q(Z|A,X)||p(Z)\Big).\end{align}
$\mathcal{D}_{KL}(\cdot||\cdot)$ denotes the Kullback-Leibler divergence \cite{kullback1951information}, and $p(Z)$ corresponds to an initial standard Gaussian prior on the distribution of latent vectors. We refer to \cite{kipf2016-2,kingma2013vae} for more technical details on computations.

\subsection{On Complexity and Scalability}

GCN models have become popular \textit{encoders} for graph AE and VAE, thanks to their relative simplicity w.r.t. other GNN architectures \cite{bruna2013spectral,defferrard2016}. The cost of evaluating each layer of a GCN encoder is \textit{linear} w.r.t. the number of edges $m$ \cite{kipf2016-1}, which can also be improved by instead encoding nodes with a FastGCN \cite{chen2018fastgcn}, with a Cluster-GCN \cite{chiang2019cluster}, or by using simple graph convolutions \cite{wu2019simplifying} or other stochastic strategies \cite{hamilton2017inductive,chen2018stochastic,ying2018graph,zeng2020graphsaint}. 

However, the inner-product \textit{decoders} of graph AE and VAE both involve the multiplication of the dense matrices $Z$ and $Z^T$ at each training iteration. It suffers from a quadratic $O(n^2)$ complexity, as the aforementioned alternative decoders that all require inner-products or Euclidean distances computations. Storing $n \times n$ dense matrices $\hat{A}$ can also lead to memory issues for large $n$. As a consequence, the recent aforementioned efforts to scale GCNs (that were achieved in a supervised setting, and out of the wider graph AE and VAE frameworks studied in this paper, where GCNs are only a building block) are \textit{not} sufficient to scale graph AE and VAE. These models still suffer from a quadratic time complexity due to their costly decoding operations, and therefore from scalability issues.

As a result, graph AE, VAE and their extensions were usually applied to relatively small graphs with up to a few thousand nodes and edges. We acknowledge that \citet{kipf2016-2,grover2019graphite} and \citet{salha2020simple} all very briefly mentioned \textit{sampling} ideas as possible extensions, but without further investigation. We will later observe that a direct uniform sampling of nodes is often sub-optimal. Recently, \citet{aaai20} incorporated more elaborated mini-batch sampling ideas in their work on graph AE and VAE. More specifically, authors proposed to reconstruct \textit{triads} of nodes in their decoder instead of simple inner-products. The total number of triads in a graph is ${n}\choose{3}$ which is very large, but authors proposed to sample a smaller number of triads to make their model tractable. Nonetheless, as the objective of their work was more to improve the performances of graph AE and VAE than to provide scalable models, they did not report running times nor experiments on large graphs.

In a wider analysis on scalable graph AE and VAE, \citet{salha2019-1} proposed to speed up computations by training the AE/VAE only on a smaller version of the graph, namely on the $k$-core of the graph \cite{malliaros2019}, then by propagating representations to other nodes via simpler but faster heuristics. While they did provide experiments on larger graphs, their performances tend to significantly deteriorate for smaller cores i.e. for faster models. To sum up, the question of how to effectively scale graph AE and VAE remains unsatisfactorily addressed.
\section{Scaling Graph AE and VAE with FastGAE}

In this Section, we introduce our proposed framework to scale graph AE and VAE. We refer to it as \textit{FastGAE}, and as \textit{variational FastGAE} when applied to graph VAE.

\subsection{Encoding the Entire Graph...}

As explained in Section 2.3, GCN models \cite{kipf2016-1} and their scalable extensions \cite{chen2018fastgcn,chiang2019cluster,wu2019simplifying,zeng2020graphsaint} can effectively process large graphs. Therefore, in our FastGAE framework, we rely on these models to \textit{encode all the nodes} into the embedding space. More precisely, in the following experiments, we implement \textit{standard GCN encoders} for the sake of simplicity and for an easier comparison to existing graph AE and VAE architectures. This design choice is made \textit{without loss of generality}, the FastGAE framework being valid for \textit{any} other encoder producing an embedding matrix $Z$.

\subsection{...But Decoding Stochastic Subgraphs}

However, while computing node embedding vectors through a \textit{forward} GCN pass is fast, \textit{tuning the weights} of this encoder in the graph AE and VAE settings requires the reconstruction of the entire matrix $\hat{A}$ at each training iteration which, as detailed in Section 2.3, suffers from a quadratic complexity and is intractable for large graphs.

\subsubsection{Subgraph Decoding}

 To overcome this issue, we propose to \textit{approximate reconstruction losses}, by computing their values only from \textit{wisely selected random subparts} of the original graph. More precisely, at each training iteration, we aim at decoding a different sampled subgraph of $\mathcal{G}$ with $n_{(S)}$ nodes, with $n_{(S)} < n$ being a fixed parameter. Let $\mathcal{G}_{(S)} =
(\mathcal{V}_{(S)},\mathcal{E}_{(S)})$ be such sampled subgraph, with $\mathcal{V}_{(S)} \subset\mathcal{V}$, $|\mathcal{V}_{(S)}| = n_{(S)}$, and with $\mathcal{E}_{(S)}$ denoting the subset of edges connecting the nodes in $\mathcal{V}_{(S)}$. Instead of reconstructing the $n \times n$ matrix $\hat{A}$, we propose to reconstruct the smaller $n_{(S)} \times n_{(S)}$ matrix $\hat{A}_{(S)}$ with:
\begin{align}\hat{A}_{(S)~ij} = \sigma(z^T_i z_j), \hspace{5pt} \forall (i,j) \in \mathcal{V}^2_{(S)},
\end{align}
and to only learn from the quality of $\hat{A}_{(S)}$ w.r.t. its ground truth counterpart $A_{(S)}$, as measured by a cross entropy loss for AE, or an ELBO loss for VAE. We propose to use the resulting approximate loss for gradients computations and GCN weights updates by gradient descent. We draw \textit{a different subgraph $\mathcal{G}_{(S)}$ at each training iteration}, using the sampling methods detailed next.

\subsubsection{(Naive) Uniform Node Sampling}

A very simple way to obtain such subgraphs would consist in \textit{uniformly sampling} $n_{(S)}$ nodes from the set $\mathcal{V}$ at each training iteration. However, with such strategy, there is no guarantee that the most important links (or absence of links) from the original graph structure will be preserved in the drawn subgraphs to reconstruct during the training phase. As we will experimentally show in Section 4, this usually significantly impacts the quality of the final node embedding, leading to underperforming performances on downstream evaluation tasks. As a consequence, in the following sections, we propose and study more refined strategies, aiming at \textit{leveraging the graph structure} to obtain a more effective sampling.

\subsubsection{Node Sampling with Graph Mining}

We propose to consider alternative sampling methods, that increase the probability of including some particular nodes in the drawn subgraph w.r.t. some others. Let $f: \mathcal{V} \rightarrow \mathbb{R}^+$ denote some measure of the relative \textit{importance} of nodes in the graph, obtained through graph mining methods. Assuming such function is available, we draw inspiration from word sampling in natural language processing \cite{w2v1,w2v2} and propose to set the probability to pick each node $i \in \mathcal{V}$ as the first element of $\mathcal{V}_{(S)}$ as:
\begin{align}
p_i = \frac{f(i)^{\alpha}}{\sum\limits_{j \in \mathcal{V} } (f(j)^{\alpha})},\end{align}
with $\alpha \in \mathbb{R}^+$. Then, assuming we sample $n_{(S)}$ \textit{distinct} nodes \textit{without replacement}, each remaining node $i \in \mathcal{V} \setminus \mathcal{V}_{(S)}$ has a probability $p_i / \sum_{j \notin \mathcal{V}_{(S)}} p_j$ to be picked as the second element of $\mathcal{V}_{(S)}$, and so on until $|\mathcal{V}_{(S)}| = n_{(S)}$. The previous division is a simple normalization to ensure $\sum_{j \notin \mathcal{V}_{(S)}} p_j = 1$ at each sampling step. Alternatively, one could also sample $n_{(S)}$ nodes \textit{with replacement}: it simplifies computations, as sampling probabilities are then independent of previous draws and remain fixed to $p_i$, but a node could then be drawn several times. We stress out that, in our implementation, both variants return very similar results. We later adopt the former.

In a nutshell, \textit{important nodes according to $f$ are more likely to be selected for decoding}, and the hyperparameter $\alpha$ helps sharpening (for $\alpha > 1$) or smoothing (for $\alpha < 1$) the distribution. Setting $\alpha = 0$ leads to the aforementioned uniform node sampling. In our experiments, we will consider and evaluate two importance measures $f$ from graph mining:
\begin{itemize}
    \item the \textit{degree} of each node, which is simply the number of direct connections of each node: $f(i) = \sum_{j \in \mathcal{V}} A_{ij}$.
    \item the \textit{core number} of each node: $f(i) = C(i)$. The $k$-core version of a graph is its largest subgraph for which every node has a degree higher or equal to $k$ \textit{within this subgraph}. The core number $C(i)$ of a node $i$ corresponds to the largest value of $k$ for which $i$ is in the $k$-core. Core decomposition has been widely used over the past years to quantify the significance of nodes and extract representative subgraphs (see e.g. \citet{malliaros2019} for a review). They constitute a more \textit{global} importance measure than the \textit{local} node degree.
\end{itemize}
Besides their popularity and their complementarity, we also chose to focus on these two metrics for \textit{computational efficiency}. Indeed, contrary to other potential importance metrics based on influence maximization \cite{kempe2003maximizing}, random walks \cite{leskovec2006sampling} or centrality measures \cite{newman}, both can be evaluated in a linear $O(m)$ running time \cite{batagelj2003m}. As we will empirically check in Section 4, this leads to \textit{fast and scalable computations of probability distributions}, which is crucial for our FastGAE framework whose primary objective is scalability. We refer the interested reader to the work of \citet{leskovec2006sampling}, \citet{hu2013survey} and \citet{chiericetti2016sampling} for a broader overview of other existing graph sampling methods.

\subsection{Theoretical Considerations}

In the Section 3.3.1, we provide a global overview  of some theoretical analyses that we subsequently further develop in Sections 3.3.2 and 3.3.3. For readability reasons, we will report the proofs of all propositions in the appendices.

\subsubsection{Overview of Theoretical Considerations}

\paragraph{On Approximate Losses} In the case of degree and core-based sampling strategies, some node pairs from the graph are more likely to appear in subgraphs than others. The probability to draw a node $i$, or an edge incident to $i$, increases with $p_i$ and with $f(i)$ for $\alpha > 0$. As a consequence, at each gradient descent iteration, the approximate loss (say $\mathcal{L}^{\text{\tiny FastGAE}}$) is \textit{biased} w.r.t. the standard graph AE or VAE loss that would have been computed on $\mathcal{G}$ (say $\mathcal{L}$), i.e. $\mathbb{E}(\mathcal{L}^{\text{\tiny FastGAE}}) \neq \mathcal{L}$ in general. For completeness, in Propositions 1, 2 and 3 of the upcoming Section 3.3.2, \textit{we provide a theoretical analysis, in which we fully explicit the expected loss} $\mathbb{E}(\mathcal{L}^{\text{\tiny FastGAE}})$ that we actually stochastically optimize in FastGAE, as well as the formal probabilities to sample a given node or node pair at each training iteration.  Moreover, we will show in Section 4 that, despite such bias, optimizing this alternative loss does not deteriorate the quality of node embeddings. On the contrary, we will provide insights exhibiting the fact that re-weighting node pairs from high degree/core nodes can actually be beneficial.

\paragraph{On the Selection of $n_{(S)}$} When selecting $n_{(S)}$, one faces a performance/speed trade-off: reconstructing very small subgraphs will speed up training but, as we later verify, this might also deteriorate performances. While we claimed in the previous paragraph that stochastically minimizing $\mathbb{E}(\mathcal{L}^{\text{\tiny FastGAE}})$ instead of $\mathcal{L}$ might be beneficial, we also acknowledge that, for small values of $n_{(S)}$, the actual loss $\mathcal{L}^{\text{\tiny FastGAE}}$ computed at a given training iteration can significantly deviate from its expectation. In this paper, we propose to use these deviations as a criterion to select a relevant subgraph size. In Propositions 4 and 5 of the upcoming Section 3.3.3, we leverage concentration inequalities to derive a \textit{theoretically-grounded threshold size}, denoted  $n^*_{(S)}$ in the following, for which, at each training iteration, the deviation between the evaluation of $\mathcal{L}^{\text{\tiny FastGAE}}$ for each node and its expectation  is proven to be bounded with a high probability. This proposed subgraph size is of the form:
\begin{align}
n^*_{(S)} = C \sqrt{n}
\end{align}
where constant $C >0$ depends on the deviation magnitude and probability, and is explicitly presented thereafter. Our experiments will confirm the relevance of this choice.

\subsubsection{On Approximate Losses}

Let us recall that, in our FastGAE framework, at each training iteration we run a full GCN forward pass and sample a subgraph $\mathcal{G}_{(S)} = (\mathcal{V}_{(S)},\mathcal{E}_{(S)})$. Then, we evaluate reconstruction losses only on this subgraph, which involves fewer operations w.r.t. standard decoders, and we use the resulting approximate loss for GCN weights updates via gradient descent. 
More precisely, in standard implementations of graph AE/VAE, the cross entropy loss (from Section 2.1 on AE) and the negative of the ELBO's expectation part (from Section 2.2 on VAE) are empirically derived by computing the following node pairs average at each training iteration:
\begin{equation}\mathcal{L} = \frac{1}{n^2}\sum_{(i,j) \in \mathcal{V}^2} \mathcal{L}_{ij}(A_{ij},\hat{A}_{ij}),\end{equation}
with\footnote{In most graph AE and VAE implementations (see e.g. \cite{kipf2016-2}), the terms with $A_{ij} = 1$ are often re-weighted in the loss, in case of sparse $A$. They are multiplied by $w \geq 1$, a positive links re-weighting scalar parameter which is usually inversely proportional to the graph sparsity. In our analyses, to clarify notations, we omit this scalar multiplication, which is equivalent to implicitly assuming that $w=1$. This simplification is made without loss of generality and all results remain valid for any $w>1$.}:
$$\mathcal{L}_{ij}(A_{ij},\hat{A}_{ij}) = - [A_{ij}\log(\hat{A}_{ij}) + (1-A_{ij})\log(1 - \hat{A}_{ij})].$$
In the FastGAE framework, we instead compute:
\begin{equation}\mathcal{L}^{\text{\tiny FastGAE}} = \frac{1}{n_{(S)}^2}\sum_{(i,j) \in \mathcal{V}^{2}} \mathds{1}_{((i,j) \in \mathcal{V}_{(S)}^2)}\mathcal{L}_{ij}(A_{ij},\hat{A}_{ij}),\end{equation}
with $\mathds{1}_{((i,j) \in \mathcal{V}_{(S)}^2)} = 1$ if $(i,j) \in \mathcal{V}_{(S)}^2$ and $0$ otherwise.

We recall that, for variational FastGAE, we need to substract the Kullback-Leibler (KL) divergence, as in the ELBO of standard graph VAE, to obtain our actual loss evaluation. At this stage, two options are possible:
\begin{itemize}
    \item Computing the KL term only on the nodes in the subgraph. 
    \item Or, computing the KL term on all nodes.
\end{itemize}
We consider that the two options are valid. The first one ensures that the resulting loss is a proper lower bound of the likelihood computed on this subgraph. The second one, despite violating this property, can nonetheless be empirically convenient and interpreted as the addition of a \textit{regularization term} on all node embedding vectors (penalizing large deviations w.r.t. a $\mathcal{N}(0,I_d)$ prior distribution on these vectors) to the \textit{performance} term $\mathcal{L}^{\text{\tiny FastGAE}}$. In our implementations, both options returned similar results. In the following propositions, we assume that the KL term is computed on \textit{all} nodes for simplicity, and we therefore only approximate the performance term $\mathcal{L}$, both in the AE and in the VAE settings.

Propositions 1 and 2 detail the formal probabilities to sample a given node or a given node pair at each training iteration. We consider both sampling variants \textit{with} and \textit{without} replacement (see Section 3.2) for this analysis, as the former significantly simplifies results w.r.t. the later. 

\begin{theorem}
Let $\mathcal{G}_{(S)} = (\mathcal{V}_{(S)},\mathcal{E}_{(S)})$ be a subgraph of $\mathcal{G}$ obtained from sampling $n_{(S)}$ nodes \textbf{with} replacement using the node sampling strategy of FastGAE. Let $i$ and $j$ denote two distinct nodes from the original graph $\mathcal{G}$: $(i,j) \in \mathcal{V}^2$. Then:
\begin{equation}\mathbb{P}\Big(i \in \mathcal{V}_{(S)}\Big) = 1 - (1 - p_i)^{n_{(S)}}.\end{equation}
Also:
\begin{align}
\mathbb{P}\Big((i,j) \in \mathcal{V}_{(S)}^2\Big) &= 1 - \Big[(1 - p_i)^{n_{(S)}} + (1 - p_j)^{n_{(S)}} \nonumber \\ 
&- (1 - p_i - p_j)^{n_{(S)}}\Big].
\end{align}
\end{theorem}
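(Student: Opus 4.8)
The plan is to exploit the defining feature of sampling \textbf{with} replacement: the $n_{(S)}$ draws are mutually independent, and at every draw each node $k \in \mathcal{V}$ is selected with the \emph{same} fixed probability $p_k$, irrespective of what was drawn before. This independence is exactly what makes the two formulas factor so cleanly, and it is the reason the statement restricts to the with-replacement variant (the without-replacement case, treated in the next proposition, loses this and is messier). So the first step is simply to record that, for a single draw, $\mathbb{P}(\text{draw} = i) = p_i$, $\mathbb{P}(\text{draw} \neq i) = 1 - p_i$, and these events are i.i.d.\ across the $n_{(S)}$ draws.

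For the single-node identity I would argue by complementation. The event $\{i \in \mathcal{V}_{(S)}\}$ is the event that $i$ is selected on at least one of the $n_{(S)}$ draws; its complement is that $i$ is missed on every draw. By independence the latter has probability $\prod_{k=1}^{n_{(S)}} (1-p_i) = (1-p_i)^{n_{(S)}}$, which immediately yields
\begin{equation}
\mathbb{P}\big(i \in \mathcal{V}_{(S)}\big) = 1 - (1-p_i)^{n_{(S)}}. \nonumber
\end{equation}

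For the pair identity, since $i \neq j$ the event $\{(i,j) \in \mathcal{V}_{(S)}^2\}$ is precisely $\{i \in \mathcal{V}_{(S)}\} \cap \{j \in \mathcal{V}_{(S)}\}$. I would apply inclusion--exclusion to the complementary events $\overline{A_i} = \{i \notin \mathcal{V}_{(S)}\}$ and $\overline{A_j} = \{j \notin \mathcal{V}_{(S)}\}$, writing
\begin{equation}
\mathbb{P}(A_i \cap A_j) = 1 - \big[\mathbb{P}(\overline{A_i}) + \mathbb{P}(\overline{A_j}) - \mathbb{P}(\overline{A_i} \cap \overline{A_j})\big]. \nonumber
\end{equation}
The first two terms are already computed above. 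The only genuinely new quantity is $\mathbb{P}(\overline{A_i} \cap \overline{A_j})$, the probability that \emph{neither} $i$ nor $j$ is ever drawn: on a single draw this event has probability $1 - p_i - p_j$ (the draws $i$ and $j$ are disjoint outcomes, so their probabilities subtract), and again by independence across draws it equals $(1 - p_i - p_j)^{n_{(S)}}$. Substituting the three pieces gives exactly the stated expression.

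I do not expect a serious obstacle here: the argument is a textbook complement-plus-inclusion-exclusion computation. The one point that warrants care, and which I would state explicitly, is the justification of independence and of the per-draw probability $1 - p_i - p_j$ — both hinge on the with-replacement assumption that keeps the sampling distribution fixed at $p_k$ throughout. Everything else is routine algebraic bookkeeping.
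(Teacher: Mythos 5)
Your proposal is correct and follows essentially the same route as the paper's own proof: complementation plus independence of draws for the single-node probability, then De Morgan / inclusion--exclusion on the complement events $\{i \notin \mathcal{V}_{(S)}\}$ and $\{j \notin \mathcal{V}_{(S)}\}$, with the joint miss probability $(1-p_i-p_j)^{n_{(S)}}$ obtained from the per-draw disjointness of the outcomes $i$ and $j$. No gaps; the explicit attention you give to where the with-replacement assumption enters matches the paper's reasoning.
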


\begin{theorem}
Let $\mathcal{G}_{(S)} = (\mathcal{V}_{(S)},\mathcal{E}_{(S)})$ be a subgraph of $\mathcal{G}$ obtained from sampling $n_{(S)}$ nodes \textbf{without} replacement using the node sampling strategy of FastGAE. Let $i$ and $j$ denote two distinct nodes from $\mathcal{G}$: $(i,j) \in \mathcal{V}^2$. Then:
\begin{equation}\mathbb{P}\Big(i \in \mathcal{V}_{(S)}\Big) = \sum_{\mathcal{U} \in \mathcal{\textbf{U}}(i)} p_{u_1} \prod_{k=2}^{n_{(S)}} \frac{p_{u_k}}{1 - \sum_{k'=1}^{k-1} p_{u_{k'}}},\end{equation}
where $\mathcal{\textbf{U}}(i) = \{\mathcal{U} \subset \mathcal{V}, |\mathcal{U}| = n_{(S)} \text{ and } i \in \mathcal{U}\}$ is the set of all ordered subsets of $n_{(S)}$ distinct nodes including node $i$. For a given set $\mathcal{U} \in \mathcal{\textbf{U}}(i)$, we denote by $(u_1, u_2,...,u_{n_{(S)}})$ its ordered elements. Also,
\begin{equation}\mathbb{P}\Big((i,j) \in \mathcal{V}_{(S)}^2\Big) = 
\sum_{\mathcal{U} \in \mathcal{\textbf{U}}(i) \cap \textbf{U}(j)} p_{u_1} \prod_{k=2}^{n_{(S)}} \frac{p_{u_k}}{1 - \sum_{k'=1}^{k-1} p_{u_{k'}}}.\end{equation}
\end{theorem}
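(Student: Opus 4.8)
The plan is to treat the without-replacement procedure as a process that generates an \emph{ordered} tuple of distinct nodes, to compute the probability of any fixed such tuple by the chain rule of conditional probability, and then to recover the two stated formulas by summing these elementary probabilities over all tuples compatible with the event of interest.

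First I would formalise the sampler. A single run produces an ordered sequence $(u_1,\dots,u_{n_{(S)}})$ of $n_{(S)}$ distinct nodes, where $u_1$ is drawn with probability $p_{u_1}$ and, given $u_1,\dots,u_{k-1}$, the $k$-th node is drawn from $\mathcal{V}\setminus\{u_1,\dots,u_{k-1}\}$ with the renormalised probability $p_{u_k}/\big(1-\sum_{k'=1}^{k-1}p_{u_{k'}}\big)$ defined in Section 3.2. Applying the chain rule to this sequence of draws gives, for a fixed ordered tuple $\mathcal{U}=(u_1,\dots,u_{n_{(S)}})$ of distinct nodes,
\[
\mathbb{P}(\text{the run returns } \mathcal{U}) = p_{u_1}\prod_{k=2}^{n_{(S)}}\frac{p_{u_k}}{1-\sum_{k'=1}^{k-1}p_{u_{k'}}},
\]
which is exactly the summand appearing in the statement. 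The point to verify here is that each denominator is the normalising constant ensuring $\sum_{j\notin\{u_1,\dots,u_{k-1}\}} p_j/\big(1-\sum_{k'<k}p_{u_{k'}}\big)=1$, so that every conditional draw is a genuine probability distribution and the product is well defined even in the degree/core-weighted case where the $p_{u_{k'}}$ are unequal.

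Next I would derive $\mathbb{P}(i\in\mathcal{V}_{(S)})$. The ordered tuples returned by the sampler are mutually exclusive and exhaustive elementary outcomes, and the event $\{i\in\mathcal{V}_{(S)}\}$ occurs if and only if the returned tuple contains $i$ in some position, i.e. if and only if it belongs to $\textbf{U}(i)$. Summing the elementary probabilities over $\mathcal{U}\in\textbf{U}(i)$ then yields the first claimed formula. For the node-pair probability I would use that, since $i\ne j$, the event $(i,j)\in\mathcal{V}_{(S)}^2$ is equivalent to $\{i\in\mathcal{V}_{(S)}\}\cap\{j\in\mathcal{V}_{(S)}\}$, hence to the returned tuple containing \emph{both} $i$ and $j$; the set of such tuples is precisely $\textbf{U}(i)\cap\textbf{U}(j)$, and summing the same elementary probabilities over this intersection gives the second formula.

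The computations are routine once this decomposition is in place; the main obstacle is conceptual bookkeeping rather than algebra. I must argue carefully that (i) the events indexed by distinct ordered tuples partition the sample space, so that summing is legitimate and no ordering of a given unordered subset is double-counted, and (ii) the renormalisation propagates correctly through the chain rule. As a consistency check I would specialise to the uniform case $p_i=1/n$, where the process reduces to sampling a uniformly random subset of size $n_{(S)}$ and both formulas must collapse to the classical hypergeometric values $n_{(S)}/n$ and $n_{(S)}(n_{(S)}-1)/\big(n(n-1)\big)$, confirming the derivation.
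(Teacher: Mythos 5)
Your proposal is correct and follows essentially the same route as the paper's proof: both compute the probability of each fixed ordered tuple via the chain rule of conditional probabilities (yielding the summand $p_{u_1}\prod_{k=2}^{n_{(S)}} p_{u_k}/(1-\sum_{k'<k}p_{u_{k'}})$), then sum these mutually exclusive elementary events over $\textbf{U}(i)$, respectively $\textbf{U}(i)\cap\textbf{U}(j)$, for the pair case. Your added remarks on the normalising constants and the hypergeometric sanity check in the uniform case are sound but do not change the argument.
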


Despite different formulations, both variants share a similar behaviour in practice on most real-world graphs. In this paper, as explained in Section 3.2, we sample nodes \textit{without replacement}. One can derive from the above expressions that the probability to draw a node $i$, or an edge incident to $i$, increases with $n_{(S)}$, with $p_i$ and with $f(i)$ for $\alpha > 0$. This also leads to the following formulation of the expected (re-weighted) loss that FastGAE stochastically optimizes.

\begin{theorem}
Using the expressions of Proposition 1 (with replacement) or Proposition 2 (without replacement):
\begin{equation}\mathbb{E}\Big[\mathcal{L}^{\text{\tiny FastGAE}}\Big] = \frac{1}{n_{(S)}^2}\sum_{(i,j) \in \mathcal{V}^2} \mathbb{P}\Big((i,j) \in \mathcal{V}_{(S)}^2\Big) \mathcal{L}_{ij}(A_{ij},\hat{A}_{ij}).\end{equation}
\end{theorem}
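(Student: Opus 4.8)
The plan is to obtain the result by a direct application of the linearity of expectation, once I have pinned down precisely where the randomness lives. At a fixed gradient-descent iteration, the encoder performs a full GCN forward pass on $(A,X)$ \emph{before} any subgraph is drawn; consequently the embedding matrix $Z$, and hence every reconstructed entry $\hat{A}_{ij} = \sigma(z_i^T z_j)$, is determined by the current weights, and the ground truth $A_{ij}$ is of course fixed. Therefore each term $\mathcal{L}_{ij}(A_{ij},\hat{A}_{ij})$ is a constant with respect to the sampling, and the \emph{only} random objects in the defining expression for $\mathcal{L}^{\text{\tiny FastGAE}}$ are the indicators $\mathds{1}_{((i,j) \in \mathcal{V}_{(S)}^2)}$.

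First I would take the expectation of the defining sum, treating $n_{(S)}$ as the fixed deterministic parameter it is, so that the factor $1/n_{(S)}^2$ factors straight out. Applying linearity of expectation across the finite sum over $(i,j) \in \mathcal{V}^2$ and pulling the constants $\mathcal{L}_{ij}(A_{ij},\hat{A}_{ij})$ out of each individual expectation, the whole computation reduces to evaluating $\mathbb{E}\big[\mathds{1}_{((i,j) \in \mathcal{V}_{(S)}^2)}\big]$ for each pair.

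The remaining step is to invoke the elementary identity that the expectation of an indicator equals the probability of its event, $\mathbb{E}\big[\mathds{1}_{((i,j) \in \mathcal{V}_{(S)}^2)}\big] = \mathbb{P}\big((i,j) \in \mathcal{V}_{(S)}^2\big)$, and then to substitute the explicit probabilities already derived in Proposition 1 (sampling with replacement) or Proposition 2 (without replacement). This immediately yields the claimed formula. One small point to handle carefully is the diagonal: for a pair $(i,i)$ the event $(i,i) \in \mathcal{V}_{(S)}^2$ coincides with $i \in \mathcal{V}_{(S)}$, so its probability is the single-node probability of Proposition 1 or 2; writing $\mathbb{P}\big((i,j) \in \mathcal{V}_{(S)}^2\big)$ uniformly over all of $\mathcal{V}^2$ is thus consistent once this reduction is noted.

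There is no genuine analytic obstacle here, as the statement is essentially a bookkeeping consequence of linearity of expectation; the only thing that must be \emph{argued} rather than computed is the independence of the loss terms from the sampling randomness. The mild subtlety worth stating explicitly is that the expectation is implicitly conditional on the current GCN weights (equivalently, on $Z$ and $\hat{A}$) at the given iteration, so that $\mathbb{E}[\cdot]$ is taken solely over the subgraph-sampling distribution and not over the training trajectory.
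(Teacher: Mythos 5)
Your proposal is correct and follows essentially the same route as the paper's own proof: linearity of expectation applied to the defining sum, the identity $\mathbb{E}[\mathds{1}_{((i,j) \in \mathcal{V}_{(S)}^2)}] = \mathbb{P}((i,j) \in \mathcal{V}_{(S)}^2)$, and substitution of the expressions from Propositions 1 and 2. Your additional remarks --- that the loss terms are deterministic because the full GCN forward pass precedes the subgraph draw, that the expectation is conditional on the current weights, and that diagonal pairs $(i,i)$ reduce to the single-node probability --- are correct refinements that the paper leaves implicit.
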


\subsubsection{On the Selection of $n_{(S)}$}

While our experiments will tend to show that stochastically minimizing $\mathbb{E}(\mathcal{L}^{\text{\tiny FastGAE}})$ (equation 17) instead of $\mathcal{L}$ (equation 11) might be beneficial, we also acknowledge that, for small values of $n_{(S)}$, the actual loss $\mathcal{L}^{\text{\tiny FastGAE}}$ computed at a given training iteration (equation 12) might significantly deviate from its expectation.

We propose to use these deviations as a criterion to automatically select a relevant subgraph size. More precisely, let us rewrite $\mathcal{L}^{\text{\tiny FastGAE}}$ from equation 12 as follows:
$$\mathcal{L}^{\text{\tiny FastGAE}} = \frac{1}{n_{(S)}} \sum_{i \in \mathcal{V}} \mathds{1}_{(i \in \mathcal{V}_{(S)})} \mathcal{L}^{\text{\tiny FastGAE}}(i),$$
where the node-level terms $\mathcal{L}^{\text{\tiny FastGAE}}(i)$ are defined as:
$$\mathcal{L}^{\text{\tiny FastGAE}}(i) =  \frac{1}{n_{(S)}} \sum_{j \in \mathcal{V}} \mathds{1}_{(j \in \mathcal{V}_{(S)})} \mathcal{L}_{ij}(A_{ij},\hat{A}_{ij})$$
and where $\mathcal{L}_{ij}$ denotes the cross entropy loss as in equation 11.
In the following, we leverage concentration inequalities \cite{hoeffding1963probability} to derive a theoretically-grounded threshold size, denoted  $n^*_{(S)}$ in the following, for which, under mild asssumptions, the (random) node-level deviation
$$|\mathcal{L}^{\text{\tiny FastGAE}}(i) - \mathbb{E}[\mathcal{L}^{\text{\tiny FastGAE}}(i)]|$$
at each training iteration is proven to be bounded with a high probability, for any node $i$. This proposed subgraph size is of the form:
$$n^*_{(S)} = C \sqrt{n},$$
where constant $C >0$ depends on the deviation magnitude and probability, and is explicitly presented in Proposition 5. In our empirical analysis, this criterion will allow us to significantly improve the scalability and training speed of graph AE and VAE models (see discussion on complexity in Section 3.4), while reaching fairly competitive performances in a majority of experiments (see Section 4).

To prove our bounds, we require the following technical assumption on the reconstructed matrix $\hat{A}$:

\begin{assumption}
Let $(i,j) \in \mathcal{V}^2$. We thereafter assume that $\hat{A}_{ij} = \sigma(z^T_i z_j)$ can actually be \textbf{capped}, and that:
$$\hat{A}_{ij} \in [\varepsilon,1-\varepsilon]$$
where $0 < \varepsilon < 1$ is a constant that can be arbitrarily close to 0.
\end{assumption}
Under this assumption, we derive Propositions 4 and 5.
\begin{theorem}
Let us consider a training iteration of the FastGAE framework, a sampled subgraph $\mathcal{G}_{(S)} = (\mathcal{V}_{(S)},\mathcal{E}_{(S)})$, with $|\mathcal{V}_{(S)}| = n_{(S)} < n$ nodes sampled without replacement, and the corresponding node-level approximate reconstruction computed for a given node $i$:
$$\mathcal{L}^{\text{\tiny FastGAE}}(i) =  \frac{1}{n_{(S)}} \sum_{j \in \mathcal{V}} \mathds{1}_{(j \in \mathcal{V}_{(S)})} \mathcal{L}_{ij}(A_{ij},\hat{A}_{ij}),$$
with the random variable $\mathds{1}_{(j \in \mathcal{V}_{(S)})} = 1$ if node $j \in \mathcal{V}_{(S)}$ and $0$ otherwise, with $A_{ij} \in \{0,1\}$ for all $(i,j) \in \mathcal{V}^2$ and with:
$$\mathcal{L}_{ij}(A_{ij},\hat{A}_{ij}) = - [A_{ij}\log(\hat{A}_{ij}) + (1-A_{ij})\log(1 - \hat{A}_{ij})].$$
Then, under Assumption 1, for any $\gamma \geq 0$, we have:
\begin{small}
\begin{equation}
\mathbb{P}(|\mathcal{L}^{\text{\tiny FastGAE}}(i) - \mathbb{E}[\mathcal{L}^{\text{\tiny FastGAE}}(i)]| \geq \gamma) \leq 2~\text{exp}\Big( - 2(\frac{\gamma}{\log(\varepsilon)})^2 \frac{n_{(S)}^2}{n}\Big).
\end{equation}
\end{small}
\end{theorem}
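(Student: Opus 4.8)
The plan is to isolate the only source of randomness at a fixed training iteration — the subgraph sampling — and to recognize $\mathcal{L}^{\text{\tiny FastGAE}}(i)$ as a normalized sum of $n$ bounded random variables to which a Hoeffding-type concentration inequality applies. First I would condition on the current GCN weights (equivalently, on the embeddings $Z$), so that every reconstructed entry $\hat{A}_{ij} = \sigma(z_i^T z_j)$ and every per-pair loss $\mathcal{L}_{ij}(A_{ij},\hat{A}_{ij})$ is a deterministic constant, and the inclusion indicators $\mathds{1}_{(j \in \mathcal{V}_{(S)})}$ are the only random objects. Writing $X_j = \mathds{1}_{(j \in \mathcal{V}_{(S)})}\,\mathcal{L}_{ij}(A_{ij},\hat{A}_{ij})$, we then have $\mathcal{L}^{\text{\tiny FastGAE}}(i) = \tfrac{1}{n_{(S)}}\sum_{j \in \mathcal{V}} X_j$, a normalized sum of $n$ terms whose deviation from its mean we must control.

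The next step is to bound the range of each summand through Assumption 1. Since $A_{ij} \in \{0,1\}$, the per-pair loss reduces to $-\log \hat{A}_{ij}$ when $A_{ij}=1$ and to $-\log(1-\hat{A}_{ij})$ when $A_{ij}=0$; in both cases the capping $\hat{A}_{ij}\in[\varepsilon,1-\varepsilon]$ forces $\mathcal{L}_{ij}\in[-\log(1-\varepsilon),\,-\log\varepsilon]$, a positive interval independent of $i$ and $j$. Hence each $X_j$ takes values in $[0,\mathcal{L}_{ij}]\subseteq[0,-\log\varepsilon]$, an interval of width at most $-\log\varepsilon = |\log\varepsilon|$. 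Summing the squared widths over the $n$ nodes yields the uniform control $\sum_{j \in \mathcal{V}}(\text{width of }X_j)^2 \leq n(\log\varepsilon)^2$, which is exactly the quantity that will appear in the denominator of the target exponent.

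With these ingredients, applying Hoeffding's inequality to $\sum_{j} X_j$ at deviation level $t = n_{(S)}\gamma$ gives $\mathbb{P}\big(|\sum_{j} X_j - \mathbb{E}[\sum_{j} X_j]| \geq n_{(S)}\gamma\big) \leq 2\exp\big(-2(n_{(S)}\gamma)^2/(n(\log\varepsilon)^2)\big)$, and rescaling the deviation by $n_{(S)}$ turns this into precisely the claimed bound $2\exp\big(-2(\gamma/\log\varepsilon)^2\, n_{(S)}^2/n\big)$. The computation is routine once the per-term bound $|\log\varepsilon|$ and the count of $n$ summands are in place.

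The main obstacle is that the $X_j$ are \emph{not} independent: because nodes are drawn without replacement and with unequal weights $p_j$, the inclusion indicators are tied together by the constraint $\sum_{j}\mathds{1}_{(j \in \mathcal{V}_{(S)})} = n_{(S)}$ and are therefore correlated, so Hoeffding's inequality cannot be invoked verbatim. I would resolve this by arguing that the inclusion indicators of the successive weighted sampling scheme are \emph{negatively associated}; negative association is preserved under the coordinatewise non-decreasing maps $\mathds{1}_j \mapsto \mathds{1}_j \mathcal{L}_{ij}$, and it delivers the product upper bound $\mathbb{E}[e^{s\sum_j X_j}] \leq \prod_j \mathbb{E}[e^{s X_j}]$ for $s>0$ that the Chernoff step in Hoeffding's proof requires, after which the argument proceeds identically to the independent case. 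Equivalently, one can appeal to Hoeffding's original treatment of sampling without replacement, which shows such a sample is no less concentrated than its with-replacement counterpart. Establishing the negative association cleanly for the weighted (rather than uniform) scheme is the one genuinely technical ingredient; everything else reduces to the elementary bounds above.
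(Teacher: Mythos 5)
Your proof is correct and its core mechanics coincide with the paper's: the same decomposition of $\mathcal{L}^{\text{\tiny FastGAE}}(i)$ into $n$ per-node terms (you keep the factor $1/n_{(S)}$ outside the sum and rescale the deviation level to $n_{(S)}\gamma$; the paper keeps it inside each summand, so that each term lies in $[0,-\log(\varepsilon)/n_{(S)}]$ --- the resulting exponent $-2\gamma^2 n_{(S)}^2/(n\log^2\varepsilon)$ is identical), the same use of Assumption 1 to confine the per-pair loss to $[-\log(1-\varepsilon),-\log\varepsilon]$, and the same Hoeffding-type step. The one genuine divergence is the treatment of the dependence among the indicators $\mathds{1}_{(j\in\mathcal{V}_{(S)})}$: the paper settles it with a single citation, namely that Hoeffding (1963) also proves his inequality for samples drawn without replacement from a finite population, whereas you propose to establish negative association of the inclusion indicators under the weighted successive-sampling scheme, observe that it survives the monotone maps $\mathds{1}_j \mapsto \mathds{1}_j\mathcal{L}_{ij}$, and feed the resulting MGF product bound into the Chernoff step. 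Your caution here is warranted and in fact exposes a looseness in the paper's own argument: Hoeffding's finite-population result is stated for \emph{uniform} simple random sampling without replacement, while FastGAE samples with unequal probabilities $p_i$, so the citation does not literally cover the scheme at hand --- you are right that the weighted case is the crux. On the other hand, your substitute lemma is itself only asserted, and it is not an off-the-shelf fact: negative association of inclusion indicators is classical for uniform sampling without replacement (permutation distributions) and known for some unequal-probability designs such as rejective/conditional Poisson sampling, but for sequential weighted sampling it requires a genuine argument that you would need to supply or locate in the literature. In short, both proofs hinge on the same technical step; you name it explicitly and sketch a plausible route to close it, while the paper buries it in a citation that, read strictly, applies only to the uniform case.
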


We note that the right hand side term tends to 0 exponentially fast w.r.t. the deviation magnitude $\gamma$ and w.r.t. the subgraph size $n_{(S)}$.

\begin{theorem}
For any confidence level $\alpha \in ]0,1[$ and node $i \in \mathcal{V}$, selecting a subgraph size $n_{(S)}$ such that
\begin{equation}
n_{(S)} \geq n^*_{(S)} = \sqrt{n} \underbrace{\sqrt{\frac{-\log (\frac{\alpha}{2})\log(\varepsilon)^2}{ 2\gamma^2}}}_{\text{denoted $C$ in eq. 10}}
\end{equation}
guarantees that
$$\mathbb{P}(|\mathcal{L}^{\text{\tiny FastGAE}}(i) - \mathbb{E}[\mathcal{L}^{\text{\tiny FastGAE}}(i)]| \geq \gamma) \leq \alpha.$$
\end{theorem}

As an opening, we note that, while the current bounds are empirically effective (see Section 4), future research will aim at directly bounding the deviation of $\mathcal{L}^{\text{\tiny FastGAE}}$ instead of the node-level terms $\mathcal{L}^{\text{\tiny FastGAE}}(i)$, which would be more ambitious and challenging due to the inherent dependencies among sampled node pairs in FastGAE. Also, while Proposition 4 and 5 focus on the case of the \textit{cross entropy loss} for consistency w.r.t. the main paper, a similar analysis (omitted here) could be performed to obtain comparable bounds for other \textit{bounded} reconstruction losses. For instance, in the case of the Frobenius loss, where $\mathcal{L}_{ij}(A_{ij},\hat{A}_{ij}) = (A_{ij} - \hat{A}_{ij})^2$, and without Assumption 1, one can obtain similar concentration guarantees as Proposition 5, with $C$ being replaced by the constant $C' = \sqrt{\frac{-\log(\alpha/2)}{2 \gamma^2}}$.

\paragraph{Numerical Application}

In our experiments, all $n^*_{(S)}$ threshold subgraph sizes will be computed by evaluating equation 19, setting the following values for hyperparameters: $\gamma = 1$, $\alpha = 0.1$ and $\varepsilon = 0.001$.

\subsection{Complexity of FastGAE}

As previously detailed, both the encoder and the sampling step of FastGAE have a linear time complexity w.r.t. $m$. Moreover, our decoder runs in $O(n_{(S)}^2)$ time, with $n_{(S)}$ being significantly smaller than $n$ in practice. In particular, setting  $n_{(S)} = n^*_{(S)}$ ensures a $O(n)$ time complexity for decoding (as $n^{*2}_{(S)} = (C \sqrt{n})^2 = C^2 n$) and an overall $O(m+n)$ linear time complexity for a complete FastGAE training iteration. Faster bounds can also be achieved by lowering $n_{(S)}$ or by replacing GCNs by another encoder. Therefore, as we will empirically verify in Section 4, our framework is significantly faster and more scalable than standard graph AE/VAE. 

\subsection{Differences with Related Work}

Before diving into experiments, we would like to emphasize some key differences between FastGAE and existing works. Foremost, FastGAE is \textit{not} directly comparable to the existing research cited in Section 2.3 to scale GCN models, e.g. to FastGCN \cite{chen2018fastgcn} that also sample nodes. Indeed, FastGCN is a GCN-like model, optimized to classify node labels in a (semi) supervised fashion. It samples the \textit{neighborhood} of each node when averaging vector representations in forward passes. 
On the contrary, in this paper, after \textit{full} GCN forward passes, we instead sample \textit{subgraphs to reconstruct}, in order to approximate the \textit{reconstruction losses} of two unsupervised models, in which GCNs are only a building part (the encoder) of a larger framework (the AE or the VAE).
Both settings therefore address different problems; as explained in Section 3.1, FastGCN, GraphSAGE or Cluster-GCN could actually be used \textit{in conjuction with} FastGAE, as encoders.

Futhermore, FastGAE is also more elaborated than data cleaning methods that simply consist in removing some nodes from a graph, e.g. the low-degree ones, to reduce its size. Indeed, in the case of FastGAE with degree sampling, low-degree nodes are still 1) \textit{fully} used in the GCN encoder, and 2) might also appear in \textit{some} subgraphs that we decode (but less often than high-degree nodes). As we leverage new different subgraphs at each iteration, we explore different parts of the \textit{entire} graph during training.

Last, we note that \textit{effective subset selection} for faster learning has already provided promising results in the machine learning community \cite{act1,act2,act3}; however, contrary to these works, we focus on an unsupervised graph-based problem, and our sampling methods remain fixed throughout learning as we rely on graph mining to select $\mathcal{G}_{(S)}$.

\section{Empirical Analysis}

In this section, we present an in-depth experimental evaluation of our proposed framework to scale graph AE and VAE models.

\subsection{Experimental Setting}

\subsubsection{Datasets}

\begin{table}[t]
\begin{center}
\begin{small}
\centering
\begin{tabular}{c|c|c}
\toprule
\textbf{Dataset} & \textbf{Number of nodes} & \textbf{Number of edges}  \\
\midrule
\midrule
\textbf{Cora} & 2708 & 5429  \\
\textbf{Citeseer} & 3327 & 4732  \\
\textbf{Pubmed} & 19717 & 44338 \\
\textbf{SBM} & 100000 & 1498844 \\
\textbf{Google} & 875713  & 4322051  \\
\textbf{Youtube} & 3223589  & 9375374 \\
\textbf{Patent} &  3774768  & 16518948 \\
\bottomrule
\end{tabular}
\caption{Datasets Statistics}
\end{small}
\end{center}
\end{table}
We provide experimental results on seven graphs of increasing sizes. Their statistics are presented in Table 1. We first study the Cora, Citeseer and Pubmed citation networks\footnote{\label{linqs}\href{https://linqs.soe.ucsc.edu/data}{https://linqs.soe.ucsc.edu/data}}, with and without node features corresponding to $f$-dimensional bag-of-words vectors (with $f =$ 1433, 3703 and 500 respectively). Nodes are clustered in respectively 6, 7 and 3 topic classes, acting as ground truth communities. These datasets are common benchmarks for evaluating graph AE and VAE (see \citet{kipf2016-2} and a majority of recent works \cite{wang2017mgae,pan2018arga,semiimplicit2019,salha2019-1,aaai20,tran2018multi,grover2019graphite,salha2019-2,huang2019rwr}). For these \textit{medium-size graphs}, we can directly compare the performance of FastGAE to standard graph AE and VAE, as training standard models is still computationally affordable.

Then, we consider four significantly \textit{larger graphs}, with up to millions of nodes and edges, and for which training standard graph AE or VAE is \textit{intractable}. We consider the Google\footnote{\label{snap}\href{http://snap.stanford.edu/data/index.html}{http://snap.stanford.edu/data/index.html}} hyperlinks web graph, the Youtube\footnote{\label{konect}\href{http://konect.cc/networks/}{http://konect.cc/networks/}} social network of users (friendship connections), the US Patent\textsuperscript{\ref{snap}} citation network, and a synthetic graph, denoted SBM, generated from a \textit{stochastic block model} which is a generative model for random graphs \cite{abbe2017community}. In this last graph, by design, nodes are clustered in 100 groups of 1000 nodes, acting as ground truth communities. Two nodes from a same community (resp. from different communities) are connected by an edge with probability $2 \times 10^{-2}$ (resp. $2 \times 10^{-4}$).

Our evaluation therefore includes graphs with various characteristics, sizes, and from four different families (citation networks, social networks, web graphs and stochastic block model graphs).

\subsubsection{Evaluation Tasks}

We consider two downstream tasks for evaluation:
\begin{itemize}
    \item First, we consider a \textit{link prediction} task. We train all models on masked graphs for which $15\%$ of edges were randomly removed. Then, we create validation and test sets from the removed edges (resp. from $5\%$ and $10\%$ of edges) and from the same number of sampled unconnected node pairs. Using decoder predictions $\hat{A}_{ij}$, we evaluate our ability to classify edges from non-edges, using the mean \textit{Area Under the ROC Curve} (AUC) and \textit{Average Precision} (AP) scores\footnote{We computed scores via scikit-learn \cite{pedregosa2011scikit}. Formulas are provided in \href{https://scikit-learn.org/stable/modules/generated/sklearn.metrics.roc_auc_score.html}{https://scikit-learn.org/stable/modules/generated/sklearn.metrics.roc\_auc\_score.html} for AUC, and in \href{https://scikit-learn.org/stable/modules/generated/sklearn.metrics.average_precision_score.html}{https://scikit-learn.org/stable/modules/generated/sklearn.metrics.average\_precision\_score.html} for AP.} on test sets. Link prediction is the most common task to evaluate graph AE and VAE models since the seminal research of \citet{kipf2016-2} (see e.g. \citet{salha2020simple} and references therein for an overview), and we therefore found essential to consider it as well in our work.
    \item We also perform \textit{node clustering} experiments, on datasets with ground truth communities. For this task, after training models on \textit{complete} versions of the graphs, we run $k$-means algorithms in embedding spaces to cluster the $z_i$ vectors. We compare these clusters to the ground truth ones using the mean \textit{Adjusted Mutual Information} (AMI) scores\footnote{We computed AMI scores via scikit-learn \cite{pedregosa2011scikit}. The formula is provided in \href{https://scikit-learn.org/stable/modules/generated/sklearn.metrics.adjusted_mutual_info_score.html}{https://scikit-learn.org/stable/modules/generated/sklearn.metrics.adjusted\_mutual\_info\_score.html}} on test sets. 
\end{itemize} 

We emphasize that we chose to focus on these two tasks instead of a more direct reconstruction task (despite working with autoencoders), as AUC and AP scores from link prediction, as well as AMI scores from node clustering, are more insighful and understandable metrics than a direct reporting of some cross-entropy or ELBO reconstruction losses. Besides, we also aimed at providing a consistent experimental setting w.r.t. the existing literature on graph AE and and VAE that, for the most part, focused on link prediction and, to some extent, on node clustering tasks.

\subsubsection{Details on Models: Hyperparameters and Model Selection Procedure for Standard and FastGAE-based AE/VAE}

In the upcoming experiments, for the aforementioned graphs and evaluation tasks, we compare standard graph AE and VAE models (when they are tractable) to FastGAE-based versions of these models.

All AE and VAE models, with and without FastGAE, were optimized for the \textit{link prediction} task. More specifically, we selected the best sets of hyperparameters in terms of mean AUC scores \textit{on the validation sets} introduced in Section 4.1.2. Instructions to easily run a similar validation are provided in our source code.

We trained models for 200 iterations (resp. 300) for graphs with $n <$ 100000 (resp. $n \geq$ 100000), and thoroughly checked the convergence of all models for these values (in terms of loss stability in the validation set). Other hyperparameters for these models are described thereafter. 

Our encoders are 2-layer GCNs (we tested models with 1 to 3 layers). They include 32-dim hidden layers, and 16-dim output layer, which means that the dimension of embedding vectors (denoted $d$ in Section 2.1.1 on AE and in Section 2.2.1 on VAE) is equal to $d = 16$. We emphasize that we also tested models with $d \in \{32,64,128\}$, reaching similar conclusions w.r.t. $d =16$ (the impact of $d$ is further discussed in Section 4.2.4). 

Besides, for all models, we used the Adam optimizer \cite{kingma2014adam}, without dropout (we tested models with dropout values in $\{0,0.1,0.2,0.3,0.4,0.5\}$). Regarding learning rates for such optimizer, we tested values from the grid $\{0.0001,0.0005,0.001,0.005,0.01,0.05,0.1,0.2\}$. We eventually picked a learning rate of 0.1 for Patent with uniform sampling, and of 0.01 otherwise as, one again, these values returned the best mean AUC scores on validation sets. Last, as \citet{kipf2016-2}, we considered all graphs as \textit{undirected} and ignored edges directions when available.

We used TensorFlow \cite{abadi2016tensorflow}, training models on an NVIDIA GTX 1080 GPU, and running other operations on a double Intel Xeon Gold 6134 CPU.

\subsubsection{Details on Models: Other Baselines} For completeness, we also compare standard graph AE/VAE and FastGAE-based models to the few other existing methods to scale graph AE/VAE, using a similar sets of hyperparameters and similar embedding values ($d=16$) as previous section:
\begin{itemize}
    \item We consider a simple \textit{negative sampling} strategy, briefly mentioned by \citet{kipf2016-2} and suggested by \citet{pytorchgeometric}, where we reconstruct all edges but only $|\mathcal{E}|$ randomly picked unconnected node pairs to compute losses. We leveraged methods made available by \citet{pytorchgeometric} to estimate losses, with consistent dropout values, learning rates and architectures w.r.t. Section 4.1.3.
    \item We also consider the framework recently proposed by \citet{salha2019-1}, denoted as \textit{Core-GAE} in next tables. Authors train the AE/VAE only on the smaller graph $k$-core, then propagate embedding representations to other nodes out of the $k$-core via simple heuristics; $k$ is a parameter tuning the size of the input graph for learning (which ranges from 1 to the maximal $k$ for which the corresponding $k$-core is not empty). We used the author's implementation \cite{salha2019-1} with optimal values (regarding mean AUC scores on validation sets) for the hyperparameter $k$ detailed in next tables, and with consistent dropout values, learning rates and architectures w.r.t. Section 4.1.3. 
    \item Besides, the other aforementioned sampling ideas briefly mentioned (as possible extensions) in the recent literature \cite{grover2019graphite,salha2020simple} actually are particular cases of FastGAE, namely with \textit{uniform} sampling. 
\end{itemize}
Last, in addition to an extensive comparison between the different AE/VAE models, we also report results obtained with the following non AE/VAE-based baselines:
\begin{itemize}
    \item A \textit{spectral embedding}, which is a powerful but not scalable baseline. We used the implementation provided by \citet{pedregosa2011scikit}; embedding axes correspond the the $d$ first eigenvectors of $\mathcal{G}$'s Laplacian matrix, excluding non-informative vectors, and with $d$ denoting the embedding dimension \cite{von2007tutorial} as for all other models.
    \item \textit{node2vec} \cite{grover2016node2vec}, another very popular and scalable node embedding method. We trained models with hyperparameters $p = 1$ and $q = 1$, from 10 random walks of length 80 per node, with a window size of 5 and on a single epoch, and using the author's implementation \cite{grover2016node2vec}. We omit comparison to other random walk-based models \cite{perozzi2014deepwalk,tang2015line} due to very similar performances on some of our preliminary tests. 
    \item For node clustering, we also compare our approach to \textit{Louvain}'s scalable community detection algorithm, from a direct usage of the authors' implementation \cite{blondel2008louvain}.
\end{itemize}

\subsection{Results}

In the remainder of this section, we provide an empirical evaluation of FastGAE and of its variational FastGAE variant.

\subsubsection{Preliminary Insights on High Degree/Core Nodes}

Before studying FastGAE we report important insights from preliminary experiments on standard graph AE/VAE. They motivated the design of our framework and emphasize the relevance of sampling high-degree/core nodes. On the medium-size Cora, Citeseer and Pubmed graphs, we trained standard graph AE/VAE models, but \textit{tried to mask $k$ nodes and their edges} from the computation of reconstruction losses, for different values of $k$. Such masking procedure is expected to lower performances, as the model leverages less information about the quality of the reconstruction for learning.

Figure 1 shows that, when these $k$ removed nodes are \textit{the top-$k$ highest degrees/cores nodes}, performances on the link prediction task tumble down. On the contrary, removing \textit{the $k$ nodes with minimal degrees or core numbers} from the loss leads to almost no drop, and even slightly better results on Pubmed, which suggests that removing non-informative nodes might even be beneficial for learning.

\begin{figure*}[!ht]
\centering
  \subfigure[Cora - Degree masking]{
  \scalebox{0.32}{\includegraphics{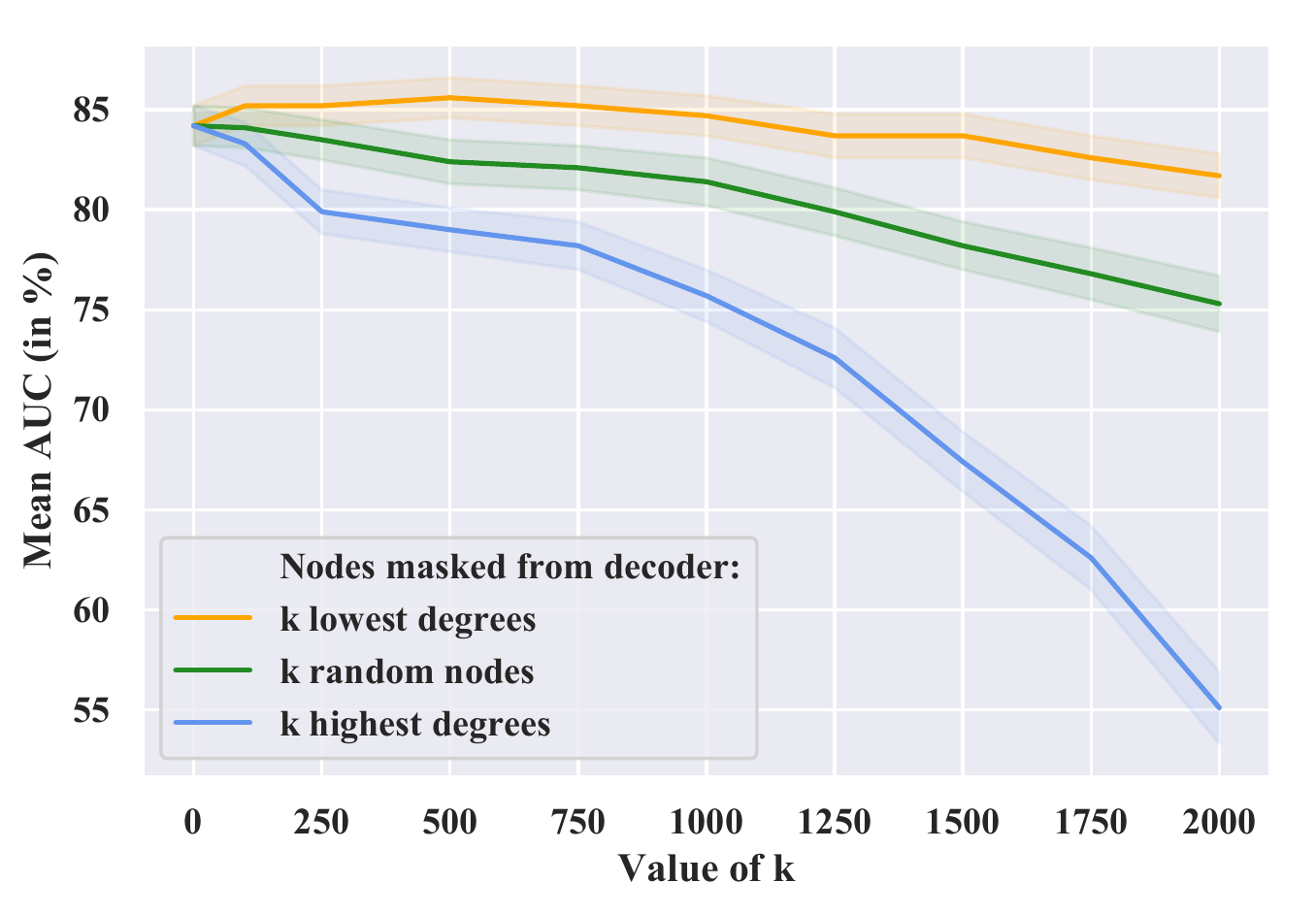}}}\subfigure[Citeseer - Degree masking]{
  \scalebox{0.32}{\includegraphics{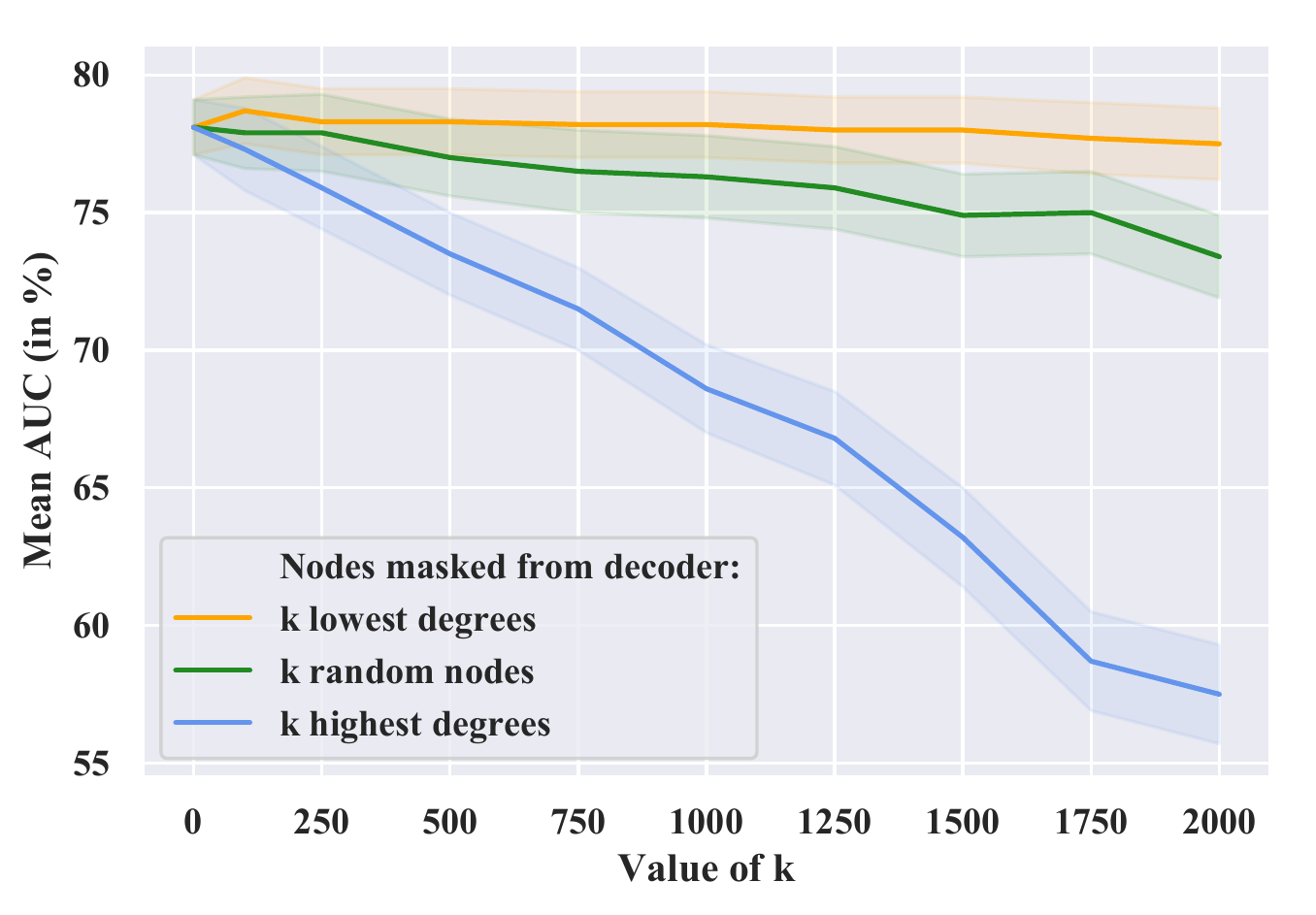}}}\subfigure[Pubmed - Degree masking]{
  \scalebox{0.32}{\includegraphics{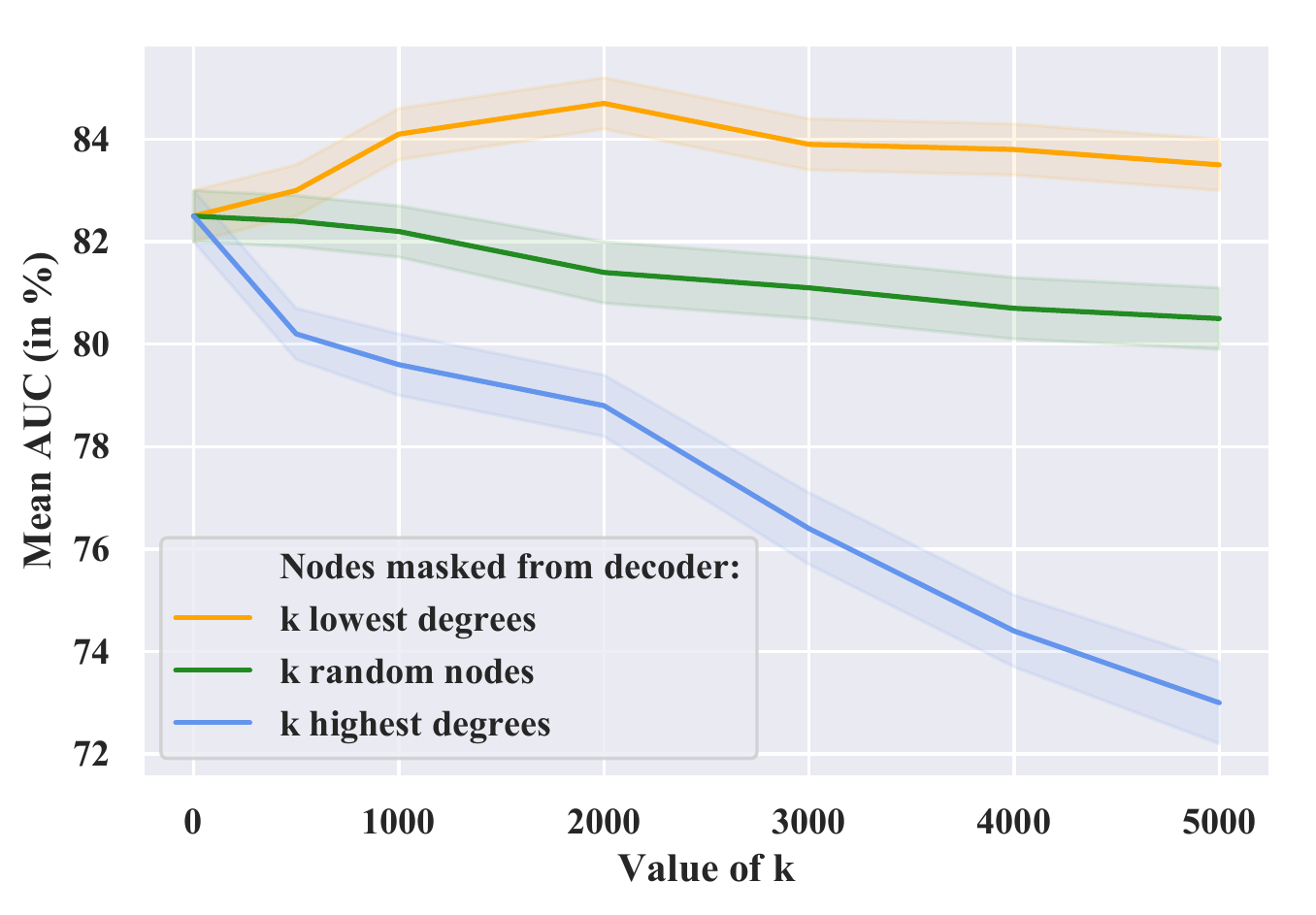}}}
    \subfigure[Cora - Core masking]{
  \scalebox{0.32}{\includegraphics{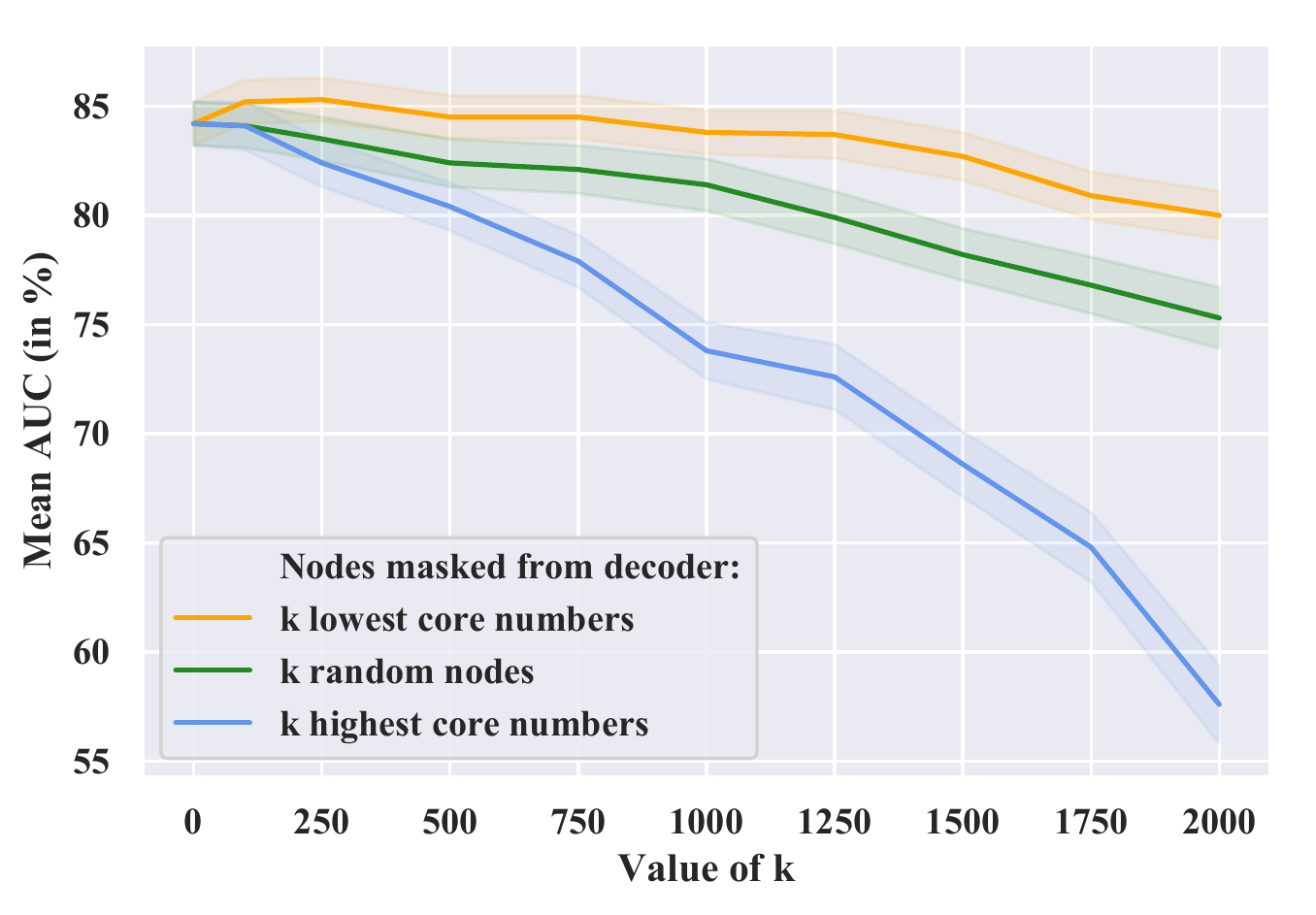}}}\subfigure[Citeseer - Core masking]{
  \scalebox{0.32}{\includegraphics{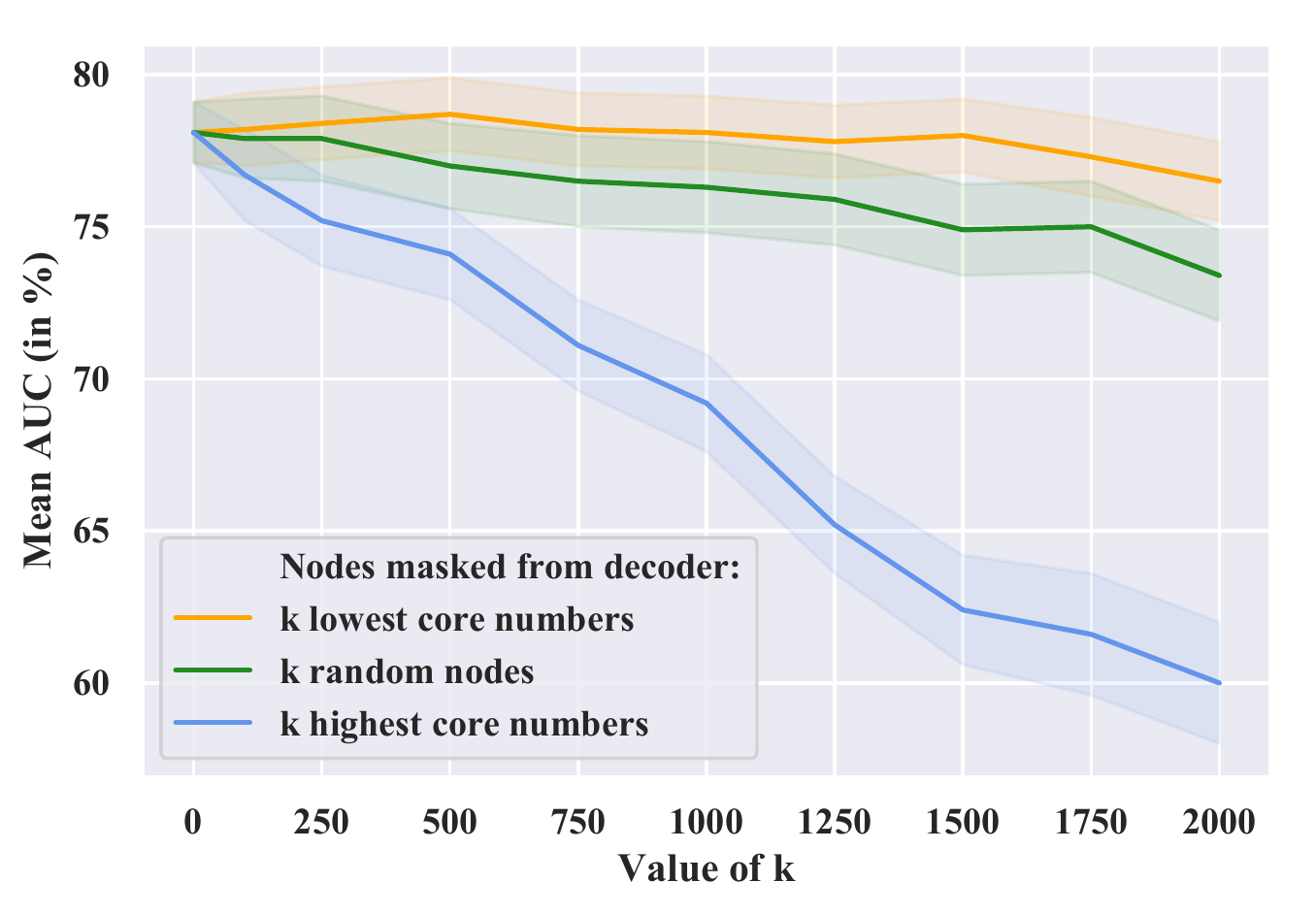}}}\subfigure[Pubmed - Core masking]{
  \scalebox{0.32}{\includegraphics{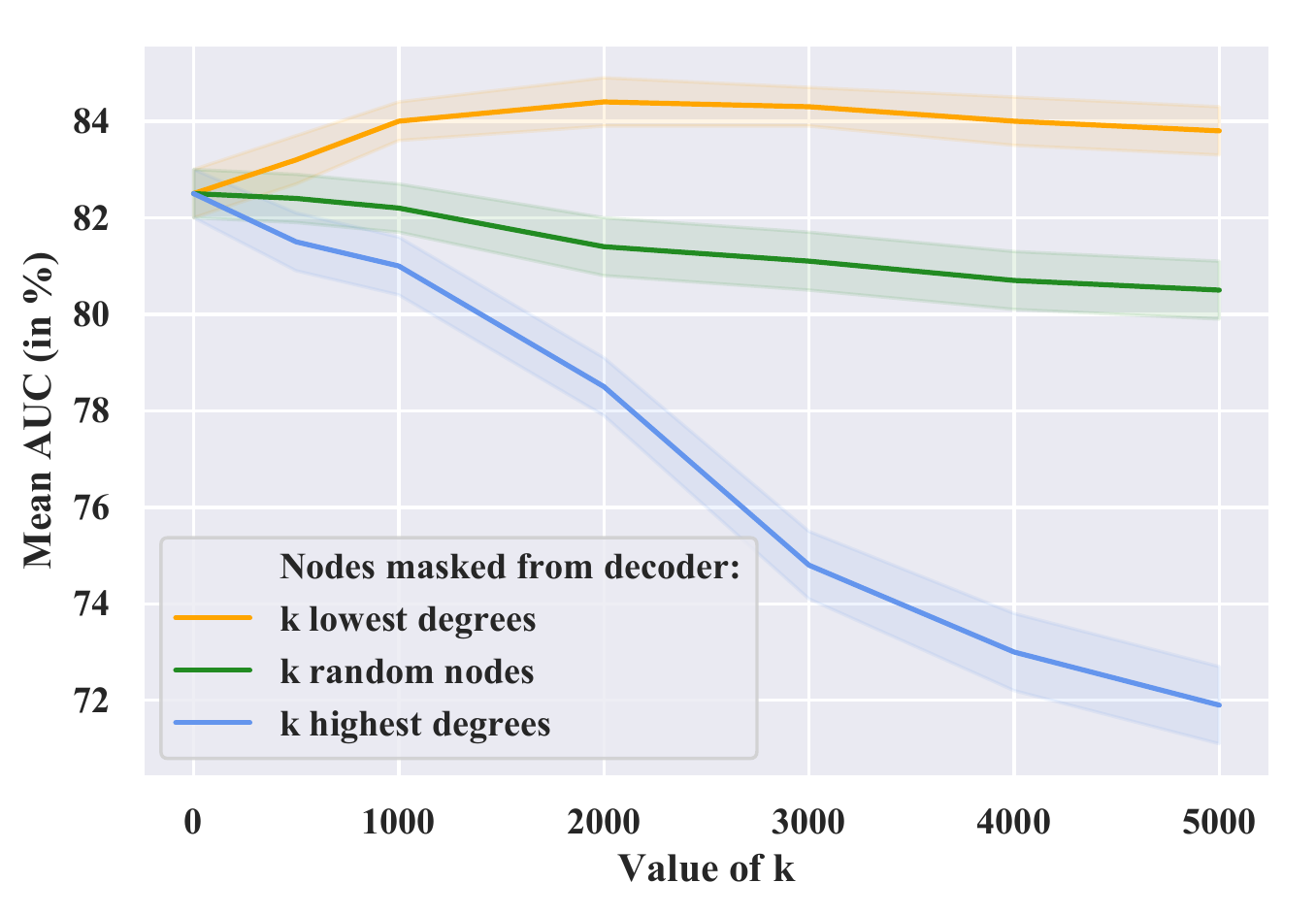}}}
      \caption{Link prediction on featureless Cora, Citesser and Pubmed using standard Graph VAE models, but trained while masking $k$ nodes and their connections from the decoder/reconstruction loss. AUC scores are averaged over 100 runs with random train/test splits.} 
      \vspace{5pt}
  \subfigure[Cora - Degree masking]{
  \scalebox{0.32}{\includegraphics{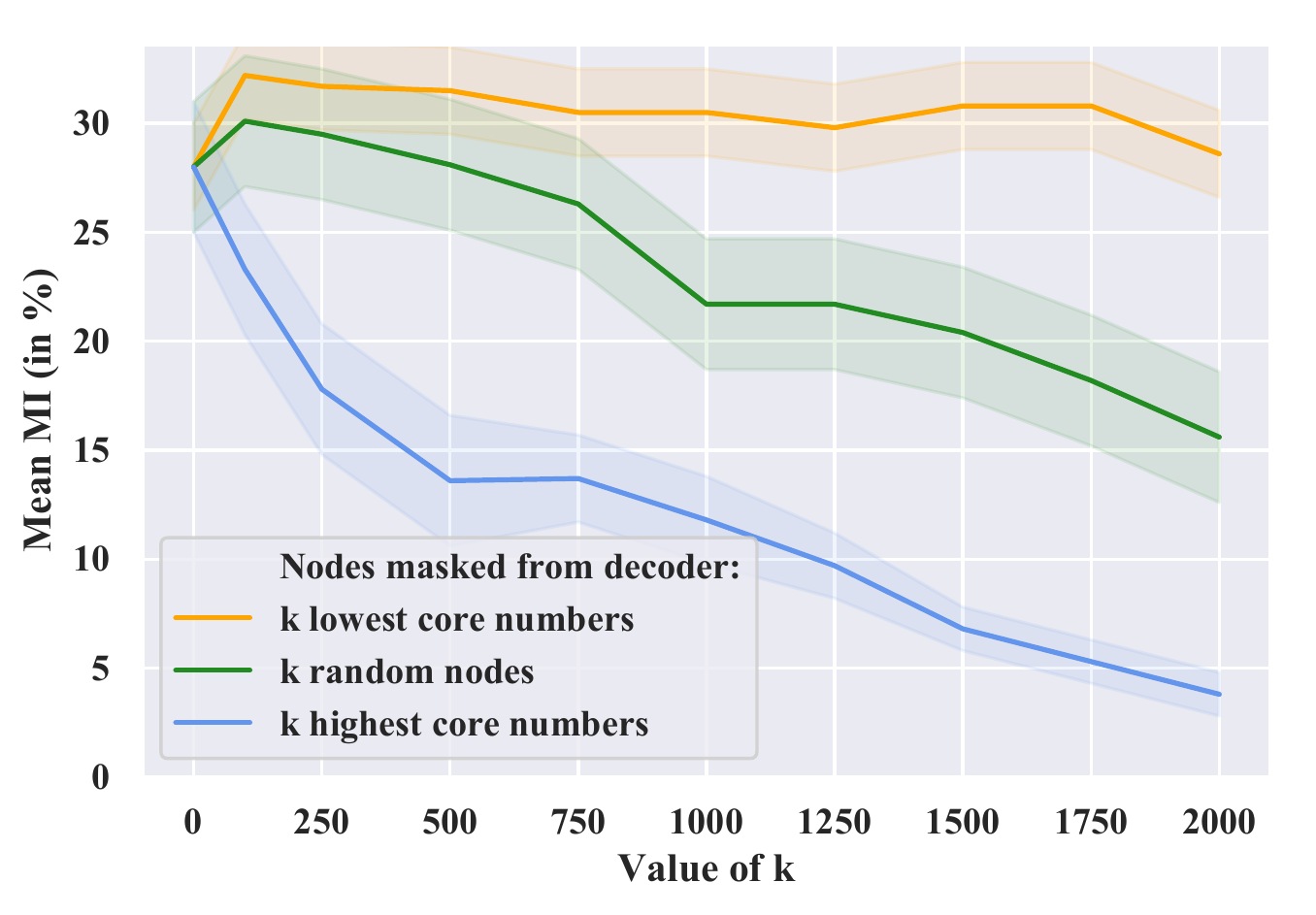}}}\subfigure[Citeseer - Degree masking]{
  \scalebox{0.32}{\includegraphics{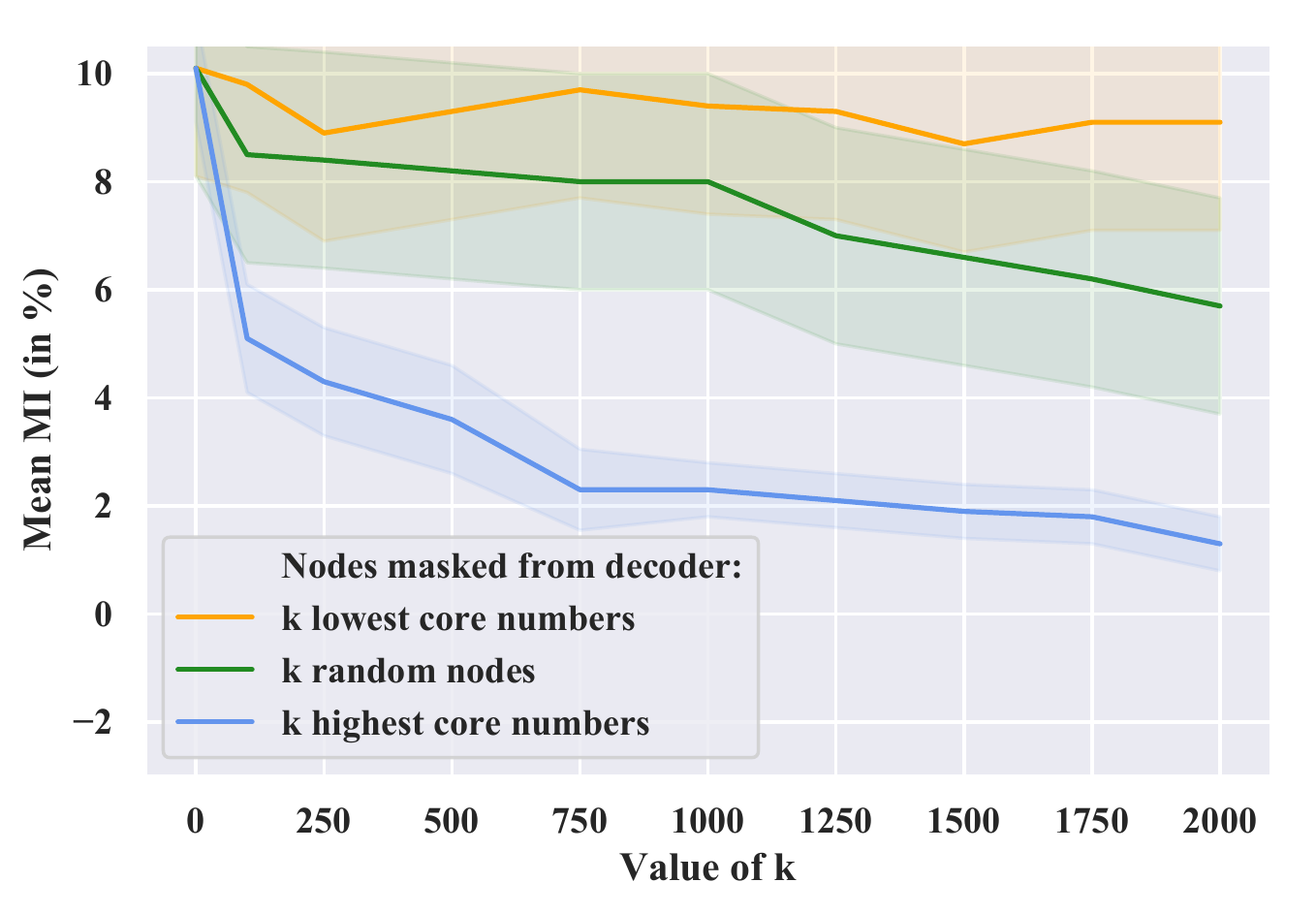}}}\subfigure[Pubmed - Degree masking]{
  \scalebox{0.32}{\includegraphics{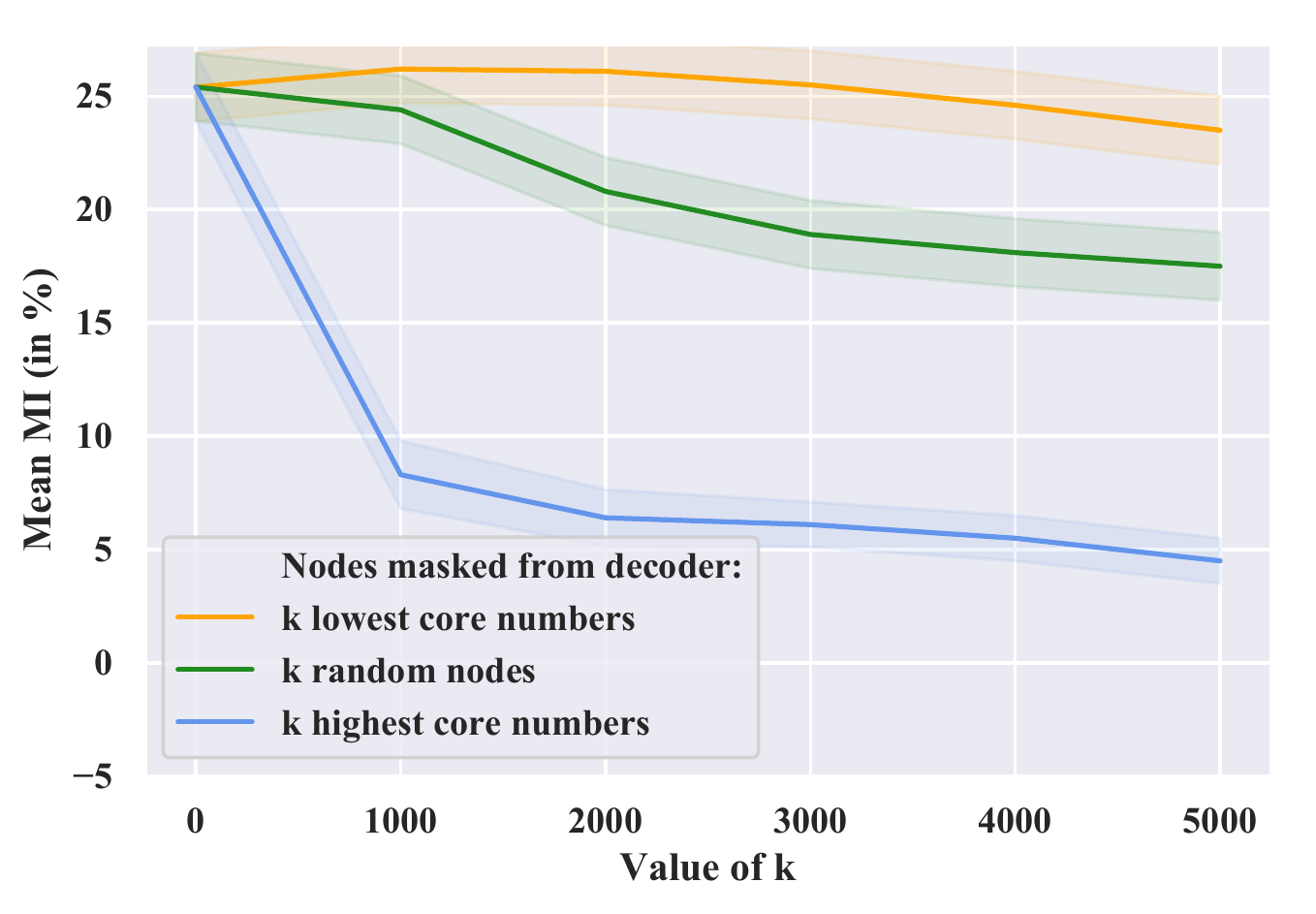}}}
    \subfigure[Cora - Core masking]{
  \scalebox{0.32}{\includegraphics{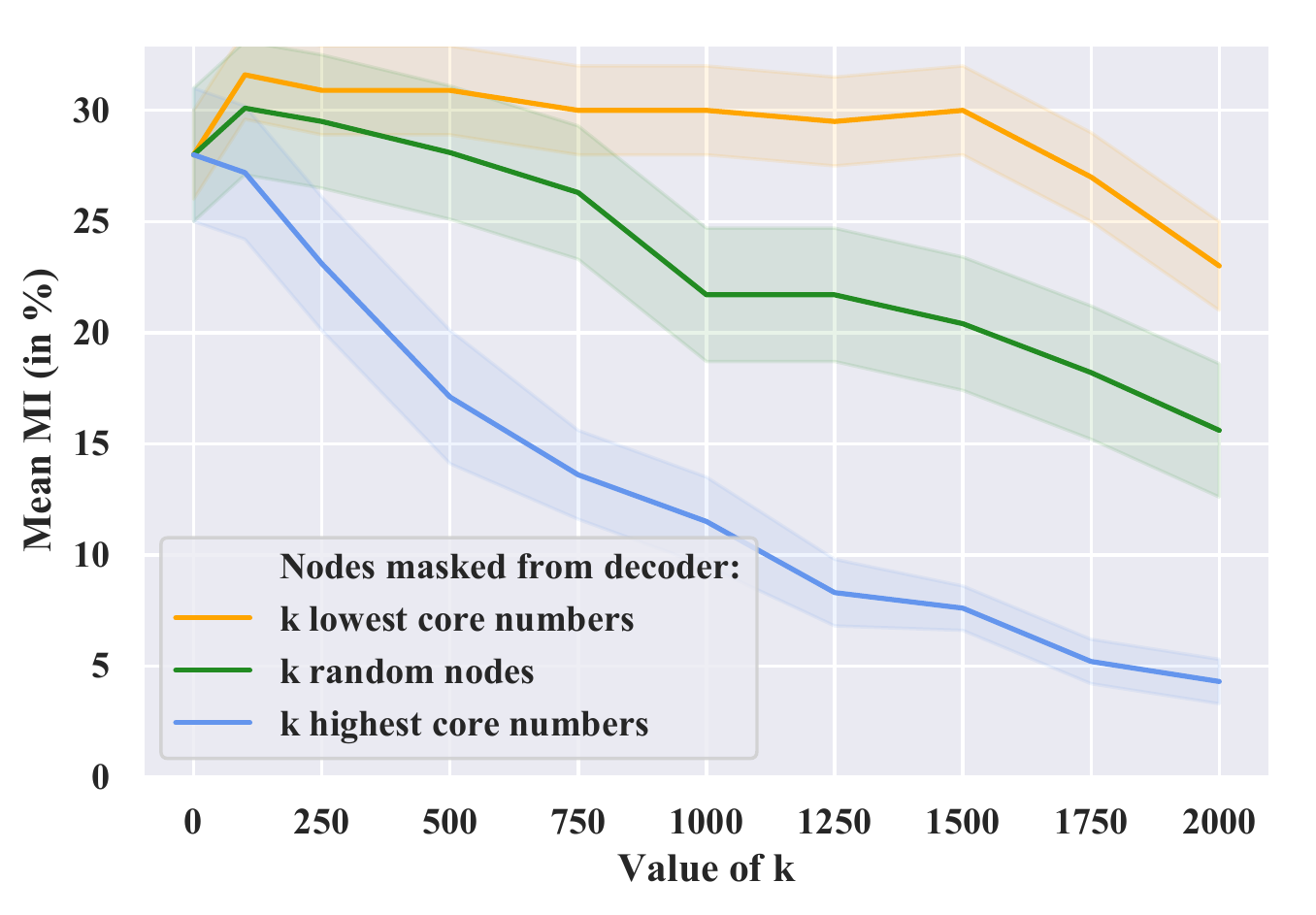}}}\subfigure[Citeseer - Core masking]{
  \scalebox{0.32}{\includegraphics{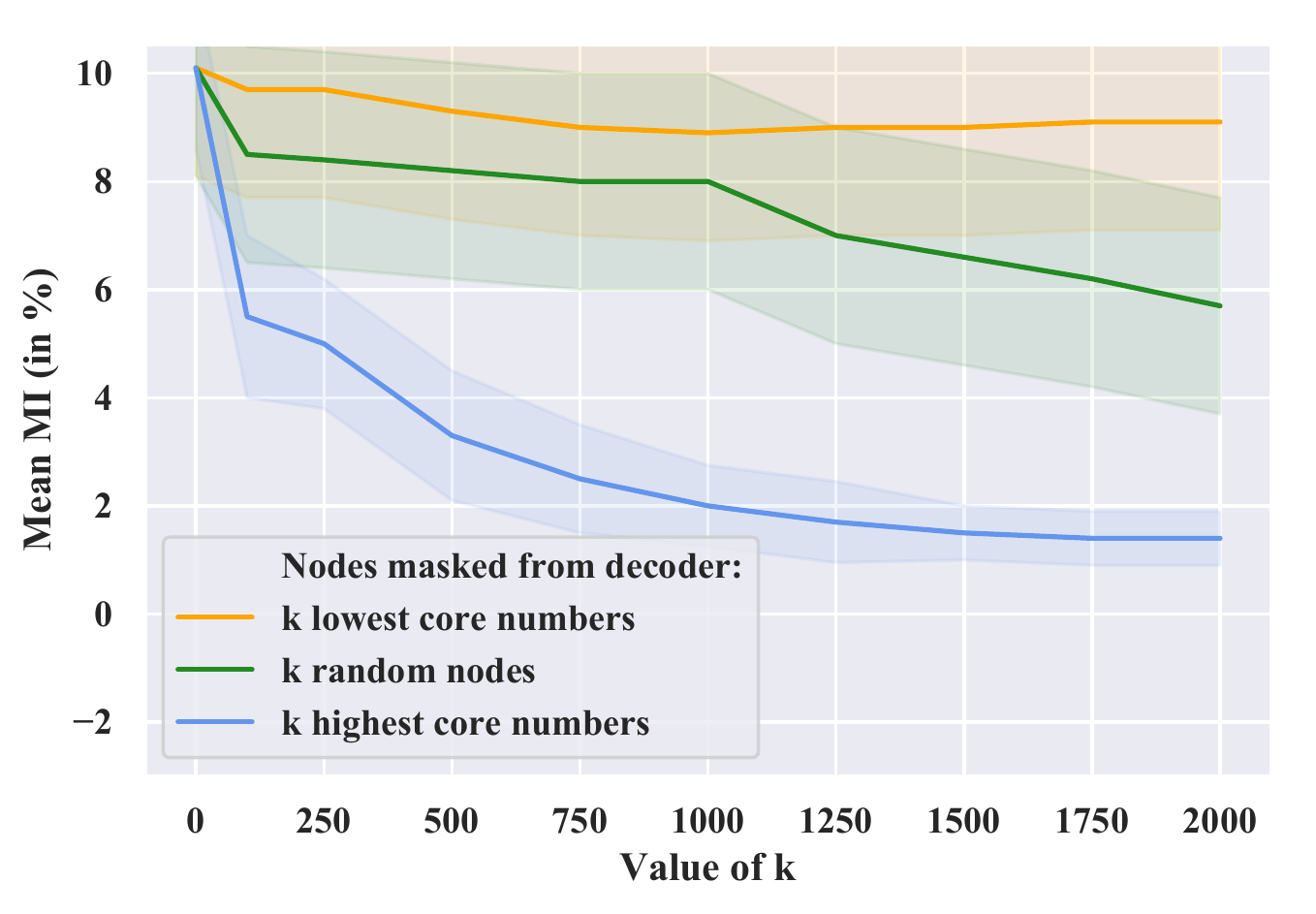}}}\subfigure[Pubmed - Core masking]{
  \scalebox{0.32}{\includegraphics{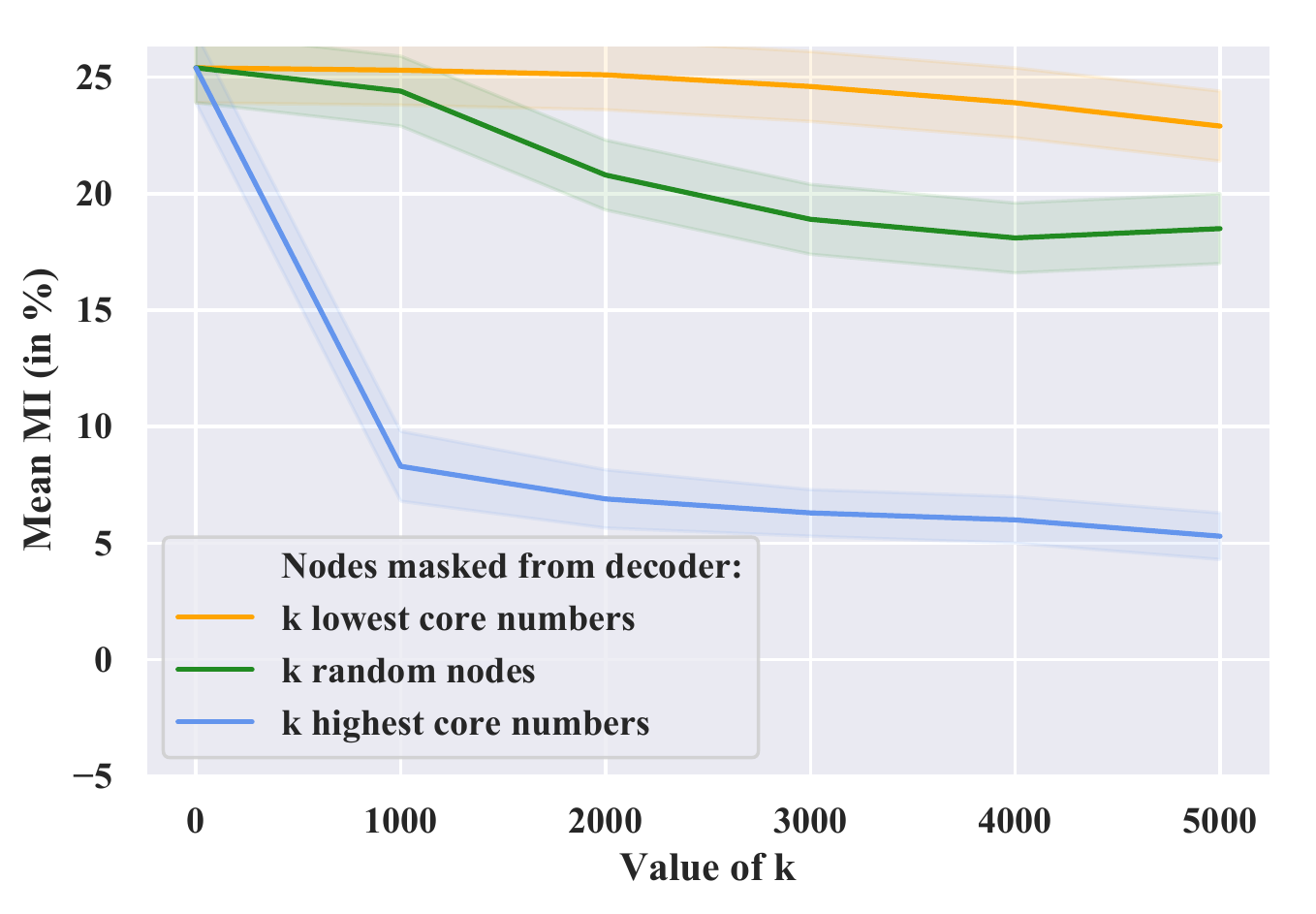}}}
  \caption{Node clustering on featureless Cora, Citesser and Pubmed using standard Graph VAE models, but trained while masking $k$ nodes and their connections from the decoder/reconstruction loss. Adjusted MI scores are averaged over 100 runs with random train/test splits.}
\end{figure*}

In Figure 2, we report similar results on Adjusted Mutual Information scores for node clustering. These ablation studies suggest that, when implementing stochastic subgraph decoding strategies for scalability, sampling high-degree/core nodes is indeed crucial to learn meaningful embeddings. FastGAE, which explicitly exploits these structural node properties, and optimizes a reconstruction loss that re-weights high degrees/cores node pairs, behaves consistently w.r.t. such important insights.

\clearpage

\begin{table*}[!ht]
\centering
\begin{tiny}
\begin{tabular}{c|c|cc|ccc|c}
\toprule
\textbf{Model}  & \textbf{Subgraphs} & \multicolumn{2}{c}{\textbf{Average Perf. on Test Set}} & \multicolumn{4}{c}{\textbf{Average Running Times (in seconds)}}\\
(Dimension $d=16$)& \textbf{size $n_{(S)}$} & \tiny \textbf{AUC (in \%)} & \tiny \textbf{AP (in \%)} & \tiny Compute & \tiny Train & \tiny \textbf{Total} & \tiny Speed gain \\ &  &  & & \tiny $p_i$ & \tiny model &  & \tiny w.r.t. GAE \\
\midrule
\midrule
Standard Graph AE  & - & 82.51 $\pm$ 0.64 & 87.42 $\pm$ 0.38 & - & 811.43 & 811.43 & - \\
\midrule
FastGAE with & 5000 & \textbf{84.82} $\pm$ \textbf{0.32} & \textbf{88.19} $\pm$ \textbf{0.23} & 0.01 & 14.41 & 14.42 & $\times$ 56.27 \\
\textbf{degree} sampling & 2500 & 84.12 $\pm$ 0.40 & 87.56 $\pm$ 0.30 & 0.01 & 5.72 & 5.73 & $\times$ 141.61 \\
($\alpha = 1$)& 1187$^*$ & 83.67 $\pm$ 0.42 & 87.01 $\pm$ 0.31 & 0.01 & 3.20 & 3.21 & $\times$ 252.78\\
& 500 & 82.68 $\pm$ 0.51 & 85.89 $\pm$ 0.47 & 0.01 & 2.98 & 2.99 & $\times$ 271.38 \\
& 250 & 80.77 $\pm$ 0.55 & 84.05 $\pm$ 0.51 & 0.01 & 2.83 & \textbf{2.84} & $\times$ \textbf{285.71} \\
\midrule
FastGAE with & 5000 & \textbf{84.62} $\pm$ \textbf{0.24} & \textbf{88.09} $\pm$ \textbf{0.16} & 1.75 & 15.98 & 17.73 & $\times$ 45.77 \\
\textbf{core} sampling & 2500 & 83.69 $\pm$ 0.34 & 87.28 $\pm$ 0.31 & 1.75 & 7.51 & 9.26 & $\times$ 87.63 \\
($\alpha = 2$) & 1187$^*$ & 82.53 $\pm$ 0.46 & 86.28 $\pm$ 0.37 & 1.75 & 4.81 & 6.56 & $\times$ 123.69 \\
& 500 & 80.96 $\pm$ 0.52 & 84.86 $\pm$ 0.46 & 1.75 & 4.57 & 6.32 & $\times$ 128.39 \\
& 250 & 79.53 $\pm$ 0.53 & 83.10 $\pm$ 0.50 & 1.75 & 4.44 & 6.19 & $\times$ 131.08 \\
\midrule
FastGAE with & 5000 & 81.08 $\pm$ 0.48 & 85.90 $\pm$ 0.60 & - & 13.90 & 13.90 & $\times$ 58.37 \\
\textbf{uniform} sampling & 2500 & 78.72 $\pm$ 0.74 & 83.50 $\pm$ 0.75 & - & 5.48 & 5.48 & $\times$ 148.07 \\
& 1187$^*$ & 77.28 $\pm$ 0.89 & 81.89 $\pm$ 0.91 & - & 3.10 & 3.10 & $\times$ 261.75 \\
& 500 & 75.09 $\pm$ 2.05 & 78.53 $\pm$ 2.04 & - & 2.98 & 2.98 & $\times$ 271.29\\
& 250 & 74.12 $\pm$ 2.07 & 77.72 $\pm$ 1.22 & - & 2.82 & \textbf{2.82} & $\times$ \textbf{287.74} \\
\midrule
\midrule
Core-GAE, $k=2$ (best choice) & - & 84.30 $\pm$ 0.27 & 86.11 $\pm$ 0.43 & - & 168.91 & 168.91 & $\times$ 4.80 \\
Core-GAE, $k=9$ (fastest choice) & - & 61.65 $\pm$ 0.94 & 64.82 $\pm$ 0.72 & - & 2.92 & 2.92 & $\times$ 277.89 \\
Negative Sampling GAE & - & 81.19 $\pm$ 0.68 & 83.21 $\pm$ 0.40 & - & 111.79 & 111.79  & $\times$ 7.28\\
node2vec & - & 81.25 $\pm$ 0.26 & 85.55 $\pm$ 0.26 & - & 48.91 & 48.91 & $\times$ 16.59\\
Spectral Embedding & - & 83.14 $\pm$ 0.42 & 86.55 $\pm$ 0.41 & - & 31.71 & 31.71 & $\times$ 25.59 \\
\bottomrule
\end{tabular}
\end{tiny}
\caption{Link prediction on the featureless Pubmed graph ($n=$ 19717, $m =$ 44338) using standard Graph AE, FastGAE with degree, core and uniform sampling, and baselines. For degree and core sampling, values of the hyperparameter $\alpha$ (as defined in equation 9) were tuned, as described in Figure C.6. All models learn embedding vectors of dimension $d=16$. Scores are averaged over 100 runs with different and random train/validation/test sets. Bold numbers correspond to the best performance (several numbers are bold when scores are comparable, in a $\pm 1$ standard deviation range) and best running time. Subgraphs sizes annotated with $^*$ correspond to the $n^*_{(S)}$ threshold, as introduced in equation 19.}
\end{table*}

\subsubsection{FastGAE for Medium-Size Graphs}

We now evaluate FastGAE and its variational FastGAE variant. First, we focus on \textit{medium-size graphs}. For Cora, Citeseer and Pubmed, we can compare FastGAE to standard graph AE/VAE. The above Table 2 details mean AUC and AP scores and standard errors over 100 runs with different train/test splits for link prediction on (featureless) Pubmed with AE models. For the sake of brevity, we report more summarized results for other medium-size graphs, for VAE and for node clustering, in Table 3 and Figure 3 (for link prediction) as well as in Table 4 and Figure 4 (for node clustering).

\paragraph{FastGAE vs Standard Graph AE/VAE} Let us first compare FastGAE to standard graph AE/VAE models. In Table 2, we observe that, for sample sizes roughly 20 times smaller than $n$, FastGAE models with \textit{degree} and \textit{core} sampling both achieve competitive or even outperforming\footnote{At first glance, the fact that FastGAE sometimes even slightly \textit{outperforms} standard graph AE/VAE models might be surprising. This improvement is actually consistent with recent research on the benefits of mini-batch-based GNNs \cite{rong2020dropedge,hu2020open}. It comes from the relevance of the two sampling schemes that we consider (core-based and degree-based) and from the stochastic nature of the training, that might tend to avoid local minima more easily \cite{kleinberg2018alternative}.} results w.r.t. standard graph AE on Pubmed (e.g. +2.31 AUC points for FastGAE with degree sampling and $n_{(S)} =$ 5000).

Futhermore, FastGAE models are also significantly \textit{faster}: in Table 2 for instance, our approach with degree sampling is up to $\times$~252.78 faster without performance degradation. The additional operation required by our framework, i.e. computing the $p_i$ distribution, is efficient in practice, especially for degree sampling. By further reducing the subgraph size $n_{(S)}$, one can achieve even faster results, while only losing a few AUC/AP points in performance.

In Table 3, Table 4, Figure 3 and Figure 4, we consolidate our results by reaching similar conclusions on VAE, on other medium-size graphs (with and without features), and on node clustering. On Figures 3 and 4, we also confirm that, even for relatively low $n_{(S)}/n$ proportions, our proposed method achieves comparable performances w.r.t. standard models.

\begin{table*}[!t]
\centering
\begin{tiny}
\begin{tabular}{c|l|cc|ccc|c}
\toprule
\textbf{Dataset}  & \textbf{Model}  & \multicolumn{2}{c}{\textbf{Average Perf. on Test Set}} & \multicolumn{3}{c}{\textbf{Avg. Run. Times (in sec.)}} & \textbf{Speed}\\
& (Dimension $d=16$) & \tiny \textbf{AUC (in \%)} & \tiny \textbf{AP (in \%)} & \tiny Comp. & \tiny Train & \tiny \textbf{Total} & \tiny \textbf{Gain} \\
&  &  &  & \tiny $p_i$ & \tiny model & \tiny  &  \\ 
\midrule
\midrule
 & Standard Graph AE & 84.79 $\pm$ 1.10 & \textbf{88.45} $\pm$ \textbf{0.82} & - & 3.87 & 3.87 & - \\
 & \underline{FastGAE (degree, $\alpha = 2$)} &  &  &  & & \\
\textbf{Cora} & - with $n_{(S)} =$ 250  & 84.13 $\pm$ 1.20 & 86.65 $\pm$ 1.23 & \textbf{0.002} & \textbf{1.46} & \textbf{1.462} & $\times$ \textbf{2.65} \\
 & -  with $n_{(S)} = n^*_{(S)} =$ 440  & 84.74 $\pm$ 0.81 & 87.42 $\pm$ 0.75 & 0.002 & 1.56 & 1.562 & $\times$ 2.48 \\
 &  - with  $n_{(S)} =$ 1000& 84.75 $\pm$ 0.84 & \textbf{87.77} $\pm$ \textbf{0.81} & 0.002 & 1.65 & 1.652 & $\times$ 2.34 \\
& \underline{Best baseline}  &  &  &  & & \\
& Spectral Embedding & \textbf{86.49} $\pm$ \textbf{0.98} & \textbf{87.42} $\pm$ \textbf{1.04} & - & 2.49 & 2.49 & $\times$ 1.55 \\
\midrule

 & Standard Graph VAE & \textbf{91.64} $\pm$ \textbf{0.92} & \textbf{92.66} $\pm$ \textbf{0.91} & - & 4.25 & 4.25 & - \\
 & \underline{Var. FastGAE (degree, $\alpha = 2$)} &  &  &  & & \\
\textbf{Cora} & - with $n_{(S)} =$ 250 & 90.50 $\pm$ 1.10 & 91.10 $\pm$ 1.08 & \textbf{0.002} & \textbf{2.30} & \textbf{2.302} & $\times$ \textbf{1.85} \\
\textbf{with} & - with $n_{(S)} = n^*_{(S)} =$ 440  & 90.82 $\pm$ 1.07 & 91.44 $\pm$ 1.13 & 0.002 & 2.52 & 2.522 & $\times$ 1.69 \\
\textbf{features}& - with $n_{(S)} =$ 1000  & \textbf{91.72} $\pm$ \textbf{0.98} & \textbf{92.36} $\pm$ \textbf{1.11} & 0.002 & 2.87 & 2.872 & $\times$ 1.48 \\
& \underline{Best baseline}  &   &  &  & & \\
& Core-Graph VAE, $k=2$ & 87.94 $\pm$ 1.12 & 89.00 $\pm$ 1.11 & - & 3.09 & 3.09 & $\times$ 1.38 \\
\midrule

 & Standard Graph AE & 78.25 $\pm$ 1.69 & \textbf{83.79} $\pm$ \textbf{1.24} & - & 5.25 & 5.25 & - \\
 & \underline{FastGAE (degree, $\alpha = 1$)}  &  &  &  & & \\
\textbf{Citeseer} & - with $n_{(S)} =$ 250 & 77.28 $\pm$ 1.11 & 81.29 $\pm$ 0.92 & \textbf{0.002} & \textbf{1.47} & \textbf{1.472} & $\times$ \textbf{3.57}\\
 & - with $n_{(S)} = n^*_{(S)} =$ 488 & 78.30 $\pm$ 1.30 & \textbf{82.42} $\pm$ \textbf{1.09} & 0.002 & 1.58 & 1.582 & $\times$ 3.32 \\
 & - with $n_{(S)} =$ 1000 & 78.31 $\pm$ 1.25 & \textbf{82.40} $\pm$ \textbf{0.99} & 0.002 & 1.61 & 1.612 & $\times$ 3.26  \\
& \underline{Best baseline} &  &  &  & & \\
& Spectral Embedding  & \textbf{80.42} $\pm$ \textbf{1.38} & \textbf{83.75} $\pm$ \textbf{1.12} & - & 3.50 & 3.50 & $\times$ 1.50 \\
\midrule

 & Standard Graph VAE & \textbf{90.72} $\pm$ \textbf{1.01} & \textbf{92.05} $\pm$ \textbf{0.97} & - & 6.28 & 6.28 & - \\
 & \underline{Var. FastGAE (degree, $\alpha = 1$)}   &   &  &  & & \\
\textbf{Citeseer}  & - with $n_{(S)} =$ 250  & 89.37 $\pm$ 1.69 & 89.63 $\pm$ 1.83 & \textbf{0.002} & \textbf{2.32} & \textbf{2.322} & $\times$ \textbf{2.70} \\
\textbf{with} &  - with $n_{(S)} = n^*_{(S)} =$ 488  & \textbf{90.10} $\pm$ \textbf{1.33} & 90.15 $\pm$ 1.50 & 0.002 & 2.62 & 2.622 & $\times$ 2.40\\
\textbf{features} & - with $n_{(S)} =$ 1000 & \textbf{90.22} $\pm$ \textbf{1.14} & 90.16 $\pm$ 1.20 & 0.002 & 2.89 & 2.892 & $\times$ 2.17 \\
& \underline{Best baseline}   &   &  &  & & \\
& Core-Graph VAE, $k=2$ & 81.85 $\pm$ 1.72 & 83.65 $\pm$ 1.64 & - & 2.55 & 2.55 & $\times$ 2.46 \\
\midrule

 & Standard Graph AE & 82.51 $\pm$ 0.64 & 87.42 $\pm$ 0.38 & - & 811.43 & 811.43& - \\
& \underline{FastGAE (degree, $\alpha = 1$)} &  &   &  &  & & \\ 
\textbf{Pubmed} & - with $n_{(S)} =$ 500  & 82.68 $\pm$ 0.51 & 85.89 $\pm$ 0.47 & \textbf{0.01} & \textbf{2.98} & \textbf{2.99} & $\times$ \textbf{271.38} \\ 
 & - with $n_{(S)} = n^*_{(S)} =$ 1187 & 83.67 $\pm$ 0.42 & 87.01 $\pm$ 0.31 & 0.01 & 3.20& 3.21 & $\times$ 252.78 \\ 
& - with $n_{(S)} =$ 5000 & \textbf{84.82} $\pm$ \textbf{0.32} & \textbf{88.19} $\pm$ \textbf{0.23} & 0.01 & 14.41 & 14.42 & $\times$ 56.27 \\
& \underline{Best baseline} &  &   &  &  & & \\ 
& Core-Graph AE, $k=2$  & 84.30 $\pm$ 0.27 & 86.11 $\pm$ 0.43 & - & 168.91 & 168.91 & $\times$ 4.80 \\
\midrule

 & Standard Graph AE & \textbf{96.28} $\pm$ \textbf{0.36} & \textbf{96.29} $\pm$ \textbf{0.25} & - & 952.63 & 952.63 & - \\
 & \underline{FastGAE (degree, $\alpha = 1$)} &  &   &  &  & \\ 
\textbf{Pubmed} & - with $n_{(S)} =$ 500 & 95.08 $\pm$ 0.45 & 95.24 $\pm$ 0.46 & \textbf{0.01} & \textbf{3.53} & \textbf{3.54} & $\times$ \textbf{269.10}\\ 
\textbf{with} & - with $n_{(S)} = n^*_{(S)} =$ 1187 & 95.45 $\pm$ 0.26 & 95.70 $\pm$ 0.30 & 0.01 & 4.01 & 4.02 & $\times$ 237.56\\ 
\textbf{features} & - with $n_{(S)} =$ 5000 & \textbf{96.12} $\pm$ \textbf{0.20} & \textbf{96.35} $\pm$ \textbf{0.19} & 0.01 & 19.74 & 19.75 & $\times$ 48.23 \\
& \underline{Best baseline} &   &   &  &  & & \\ 
& Core-Graph AE, $k=2$ & 85.34 $\pm$ 0.33 & 86.06 $\pm$ 0.24 & - & 40.22 & 40.22 & $\times$ 23.69 \\
\bottomrule
\end{tabular}
\end{tiny}
\caption{Link prediction on all medium-size graphs. For each graph, for brevity, we only report the \textbf{best} graph AE \textbf{or} VAE model in terms of AUC and AP scores, a few representative degree-based FastGAE versions of this model, and the best baseline (among Core-Graph AE/VAE, Negative Sampling Graph AE/VAE, node2vec and the spectral embedding). Scores are averaged over 100 runs with different and random train/validation/test sets. For degree sampling, values of the hyperparameter $\alpha$ (as defined in equation 9) were tuned, as described in Figure C.6. All models learn embedding vectors of dimension $d=16$. Bold numbers correspond to the best performance (several numbers are bold when scores are comparable, in a $\pm 1$ standard deviation range) and best running time.}
\end{table*}


\begin{table*}[!t]
\centering
\begin{tiny}
\begin{tabular}{c|l|c|ccc|c}
\toprule
\textbf{Dataset}  & \textbf{Model}& \textbf{Average Performance} & \multicolumn{3}{c}{\textbf{Average Running Times (in sec.)}} & \textbf{Speed}\\
&  (Dimension $d=16$) & \tiny \textbf{AMI (in \%)} & \tiny Compute & \tiny Train & \tiny \textbf{Total} & \tiny \textbf{Gain} \\
& &  & \tiny $p_i$ & \tiny model &  &  \\ 
\midrule
\midrule

 & Standard Graph AE & 30.88 $\pm$ 2.56 & - & 3.90 & 3.90 & - \\
 & \underline{FastGAE (degree, $\alpha = 2$)} &  &  &  & & \\
\textbf{Cora}  & - with $n_{(S)} =$ 250 & 33.32 $\pm$ 2.61 & \textbf{0.002} & \textbf{1.51} & \textbf{1.512} & $\times$ \textbf{2.58}\\
 & - with $n_{(S)} = n^*_{(S)} =$ 440 & 34.64 $\pm$ 2.45 & 0.002 & 1.59 & 1.592 & $\times$ 2.45\\
 & - with $n_{(S)} =$ 1000 & 35.56 $\pm$ 2.80 & 0.002 & 1.67 & 1.672 & $\times$ 2.33\\
& \underline{Best baseline} &  &  &  & & \\
& Louvain & \textbf{46.72} $\pm$ \textbf{0.85} & - & 1.79 & 1.79 & $\times$ 2.18 \\
\midrule

  & Standard Graph VAE & \textbf{44.84} $\pm$ \textbf{2.63} & - & 4.32 & 4.32 & - \\
 & \underline{Var. FastGAE (degree, $\alpha = 2$)} &  &  &  & & \\
\textbf{Cora} & - with $n_{(S)} =$ 250 & 41.35 $\pm$ 3.49 & 0.002 & 2.40 & 2.402 & $\times$ 1.80\\
\textbf{with} & - with $n_{(S)} = n^*_{(S)} =$ 440 & 42.89 $\pm$ 2.72 & 0.002 & 2.67 & 2.672 & $\times$ 1.62\\
\textbf{features}& - with $n_{(S)} =$ 1000 & \textbf{45.02} $\pm$ \textbf{2.81} & 0.002 & 2.92 & 2.922 & $\times$ 1.48 \\
& \underline{Best baseline} &  &  &  & & \\
& Louvain & \textbf{46.72} $\pm$ \textbf{0.85} & - & \textbf{1.79} & \textbf{1.79} & $\times$ \textbf{2.41} \\
\midrule

 & Standard Graph VAE & 9.85 $\pm$ 1.24  & - & 5.44 & 5.44 & - \\
 & \underline{Var. FastGAE (degree, $\alpha = 1$)} &  &  &  & & \\
\textbf{Citeseer} & - with $n_{(S)} =$ 250 & 9.34 $\pm$ 1.48 & \textbf{0.002} & \textbf{1.77} & \textbf{1.772} & $\times$ \textbf{3.07}\\
 & - with $n_{(S)} = n^*_{(S)} =$ 488 & 10.02 $\pm$ 1.42 & 0.002 &  2.02 & 2.022 & $\times$ 2.69\\
 & - with $n_{(S)} =$ 1000 & 10.16 $\pm$ 1.41 & 0.002 & 2.19 & 2.192 & $\times$ 2.48\\
& \underline{Best baseline} &  &  &  & & \\
& Louvain & \textbf{16.39} $\pm$ \textbf{1.45} & - & 2.41 & 2.41 & $\times$ 2.26 \\
\midrule

 & Standard Graph VAE & \textbf{20.17} $\pm$ \textbf{3.07} & - & 6.45 & 6.45 & - \\
 & \underline{Var. FastGAE (degree, $\alpha = 1$)} & &   &  &  & \\
\textbf{Citeseer} & - with $n_{(S)} =$ 250  & \textbf{20.49} $\pm$ \textbf{3.74} & \textbf{0.002} & \textbf{2.80} & \textbf{2.802} & $\times$ \textbf{2.30} \\
\textbf{with} & - with $n_{(S)} = n^*_{(S)} =$ 488  & \textbf{20.53} $\pm$ \textbf{3.45}  & 0.002 & 2.88 & 2.882 & $\times$ 2.24\\
\textbf{features}&  - with $n_{(S)} =$ 1000 & \textbf{20.94} $\pm$ \textbf{3.21} & 0.002 & 3.11 & 3.112 & $\times$ 2.07 \\
 & \underline{Best baseline} &   &   &  &  & \\
 & Cora-Graph VAE, $k=2$  & 16.53 $\pm$ 1.95 & - & \textbf{2.76} & \textbf{2.76} & $\times$ \textbf{2.33} \\
\midrule

 & Standard Graph VAE & 20.52 $\pm$  2.97  & - & 856.05 & 856.05 & - \\
 & \underline{Var. FastGAE (degree, $\alpha = 1$)} &  &   &   &  &  \\
\textbf{Pubmed} & - with $n_{(S)} =$ 500 & 16.86 $\pm$ 4.84 & \textbf{0.01}  & \textbf{3.17} & \textbf{3.18} & $\times$ \textbf{269.20}\\
 & - with $n_{(S)} = n^*_{(S)} =$ 1187 &  18.84 $\pm$ 4.78 & 0.01  & 3.61  & 3.62 & $\times$ 236.49 \\
& - with $n_{(S)} =$ 5000 & \textbf{22.81} $\pm$ \textbf{4.80} & 0.01 & 14.95 & 14.96 & $\times$ 57.22 \\
& \underline{Best baseline} &   &   &  &  & \\
& Core-Graph VAE, $k=2$  & \textbf{23.56} $\pm$ \textbf{3.12} & - & 50.11 & 50.11 & $\times$ 17.08 \\
\midrule

 & Standard Graph VAE  & 25.43 $\pm$ 1.47  & - & 970.67 & 970.67 & - \\
 & \underline{Var. FastGAE (degree, $\alpha = 1$)} &  &  &  & & \\
\textbf{Pubmed} & - with $n_{(S)} =$ 500 & 29.04 $\pm$ 4.17 & \textbf{0.01} & \textbf{4.03} & \textbf{4.04} & $\times$ \textbf{240.26}\\
\textbf{with} & - with $n_{(S)} = n^*_{(S)} =$ 1187 & \textbf{31.11} $\pm$ \textbf{3.27} & 0.01 & 4.65 & 4.66 & $\times$ 208.30\\
\textbf{features}& - with $n_{(S)} =$ 5000 & \textbf{30.89} $\pm$  \textbf{3.01} & 0.01 & 20.01 & 20.02 & $\times$ 48.49\\
 & \underline{Best baseline}&  &  &  & & \\
 & Core-Graph VAE, $k=2$ & 24.35 $\pm$ 1.55 & - & 57.09 & 57.09 & $\times$ 17.00 \\
 \midrule

 & Standard Graph VAE & \textit{(intractable)}  & \multicolumn{3}{c|}{\textit{(intractable)}} & - \\
& \underline{Var. FastGAE (degree, $\alpha = 2$)} &  &  &  & & \\
\textbf{SBM} & - with $n_{(S)} =$ 2500 & 30.77 $\pm$ 0.32 & \textbf{0.03} & \textbf{52.01} & \textbf{52.04} & - \\
& - with $n_{(S)} = n^*_{(S)} =$ 2673 & 30.89 $\pm$ 0.30 & 0.03 & 53.98 & 54.01 & - \\
& -with $n_{(S)} =$ 5000 & 32.28 $\pm$ 0.26 & 0.03 & 61.96 & 61.69 & - \\
& \underline{Best baseline} &  &  &  & & \\
& Louvain & \textbf{35.90} $\pm$ \textbf{0.14} & - & 464.11 & 464.11 & - \\
\bottomrule
\end{tabular}
\end{tiny}
\caption{Node clustering on all graphs with communities. For each graph, for brevity, we only report the \textbf{best} graph AE \textbf{or} VAE model in terms of mean AMI, a few representative degree-based FastGAE versions of this model, and the best baseline (among Core-Graph AE/VAE, Negative Sampling Graph AE/VAE, node2vec, Louvain and the spectral embedding). Scores are averaged over 100 runs (resp. 10 runs) for medium-size graphs (resp. for the large graph SBM).  For degree sampling, values of the hyperparameter $\alpha$ (as defined in equation 9) were tuned, as described in Figure C.6. All models learn embedding vectors of dimension $d=16$. Bold numbers correspond to the best performance (several numbers are bold when scores are comparable, in a $\pm 1$ standard deviation range) and best running time.}
\end{table*}

\clearpage

\begin{figure*}[!ht]
\centering
  \subfigure[Cora]{
  \scalebox{0.35}{\includegraphics{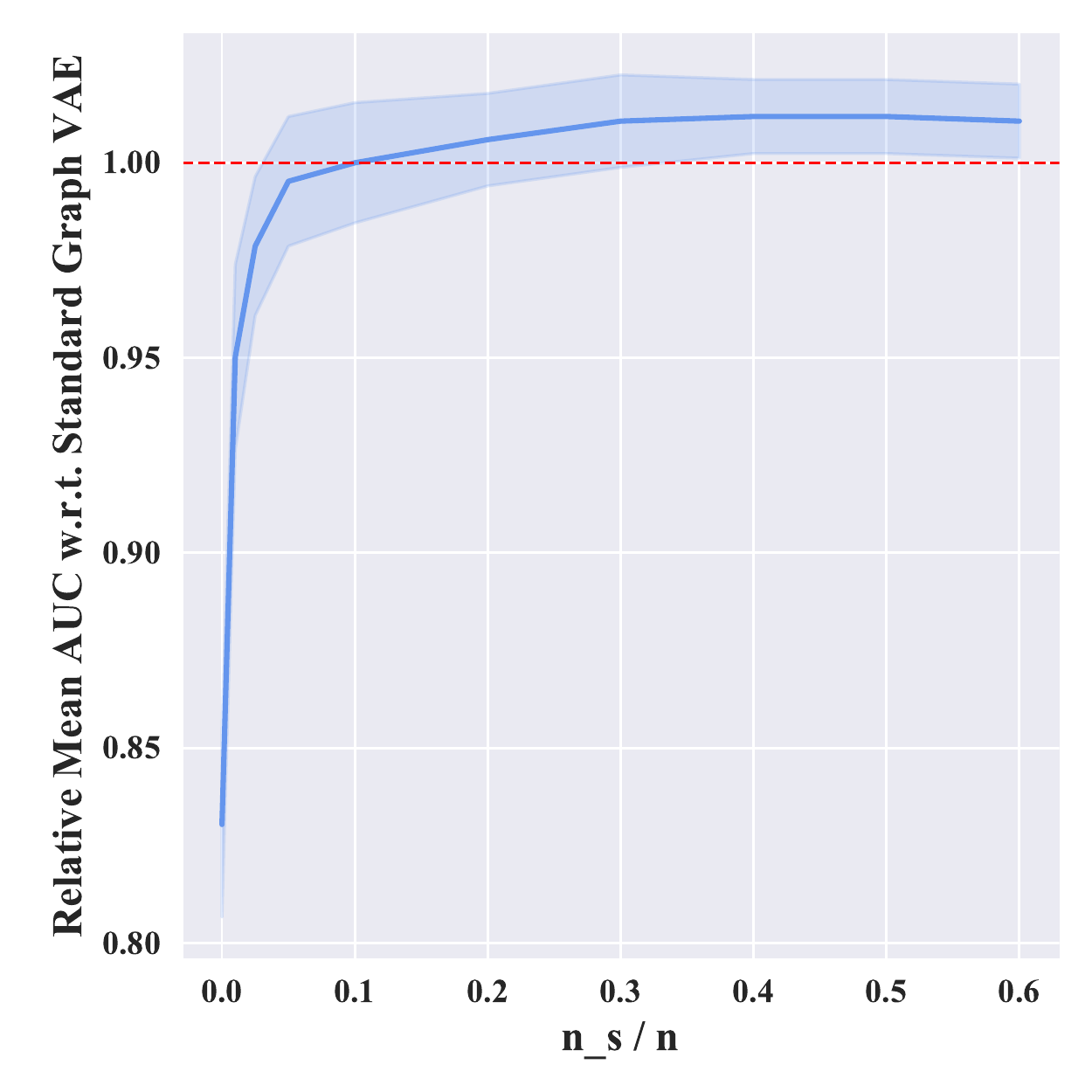}}}\subfigure[Citeseer]{
  \scalebox{0.35}{\includegraphics{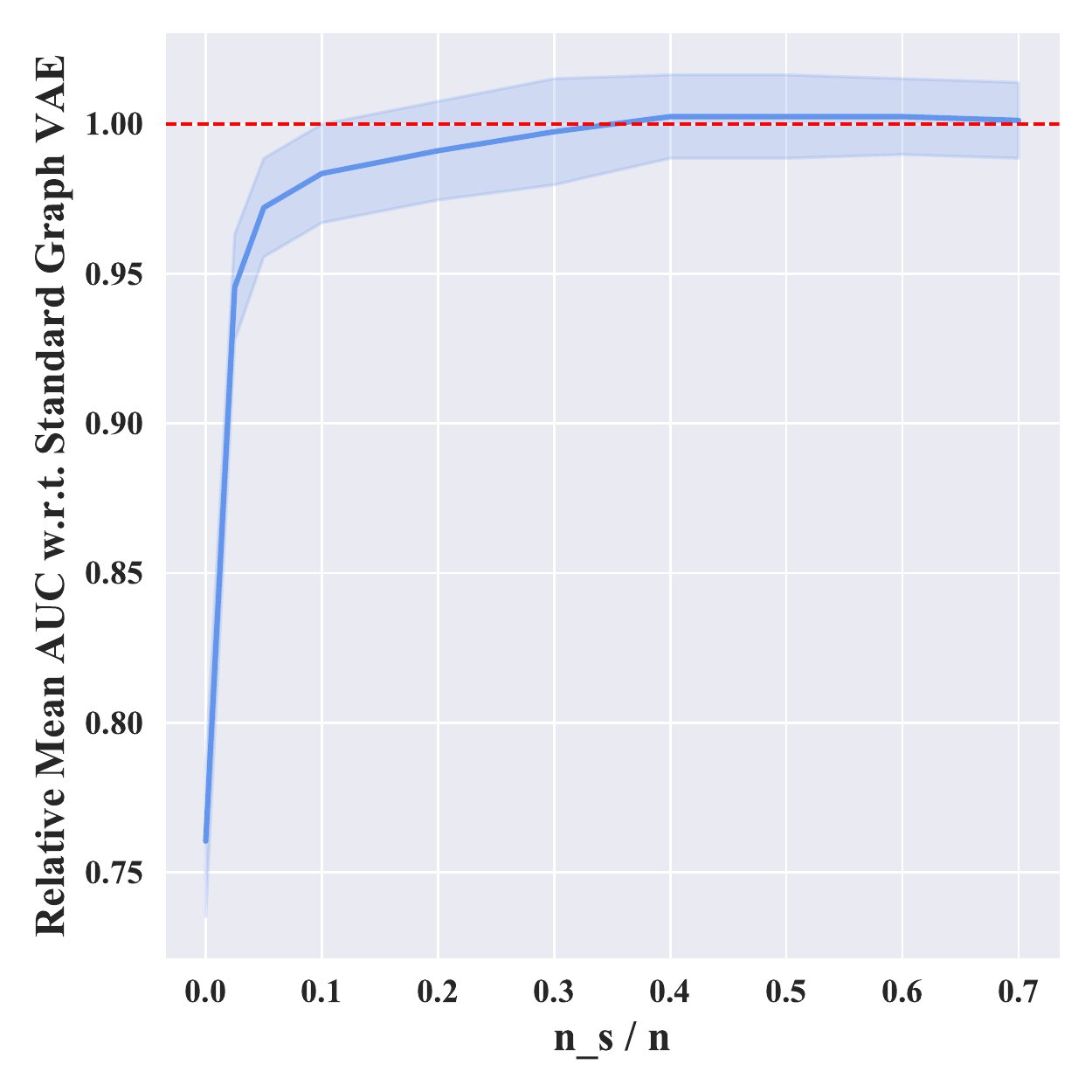}}}\subfigure[Pubmed]{
  \scalebox{0.35}{\includegraphics{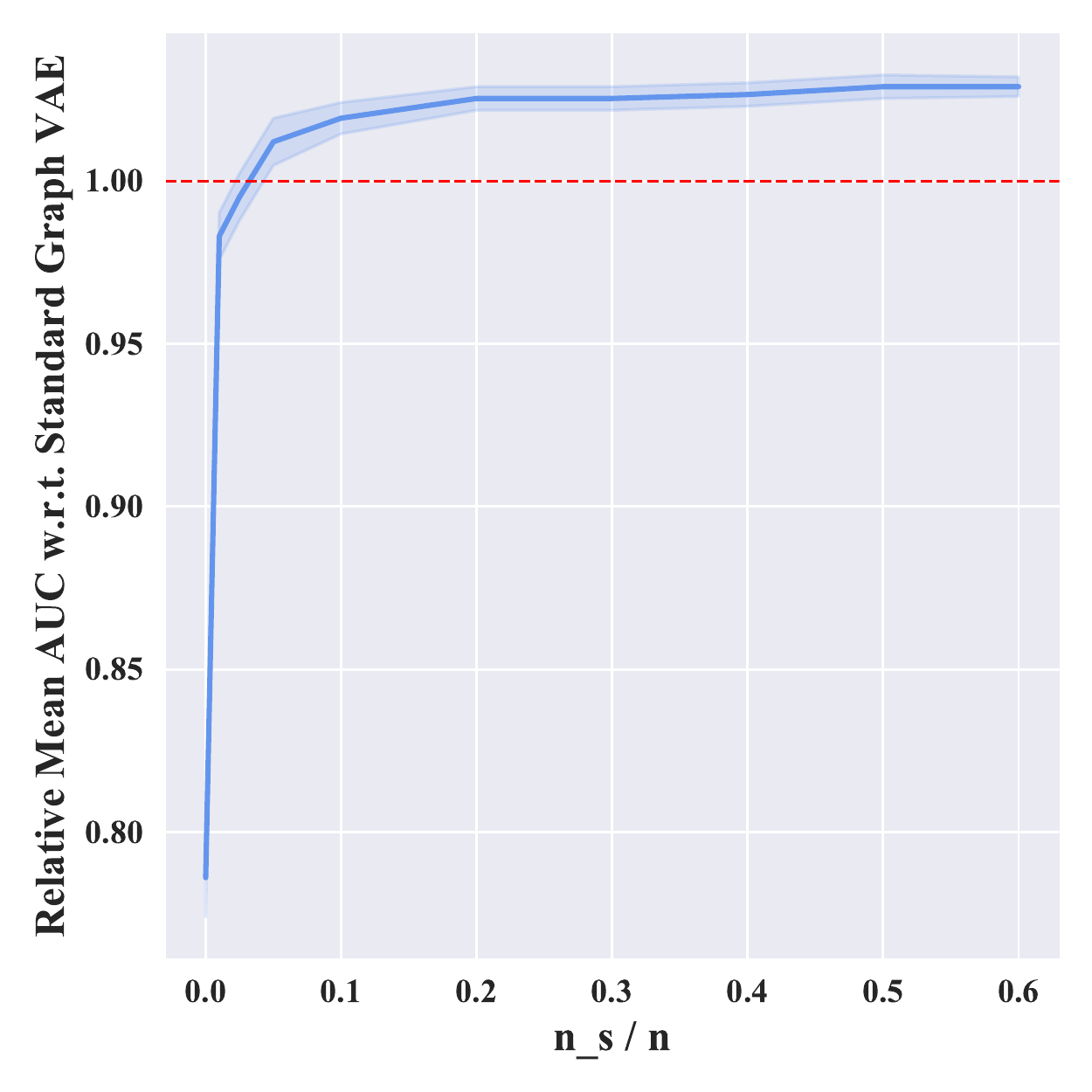}}}
   \subfigure[Cora (with features)]{
  \scalebox{0.35}{\includegraphics{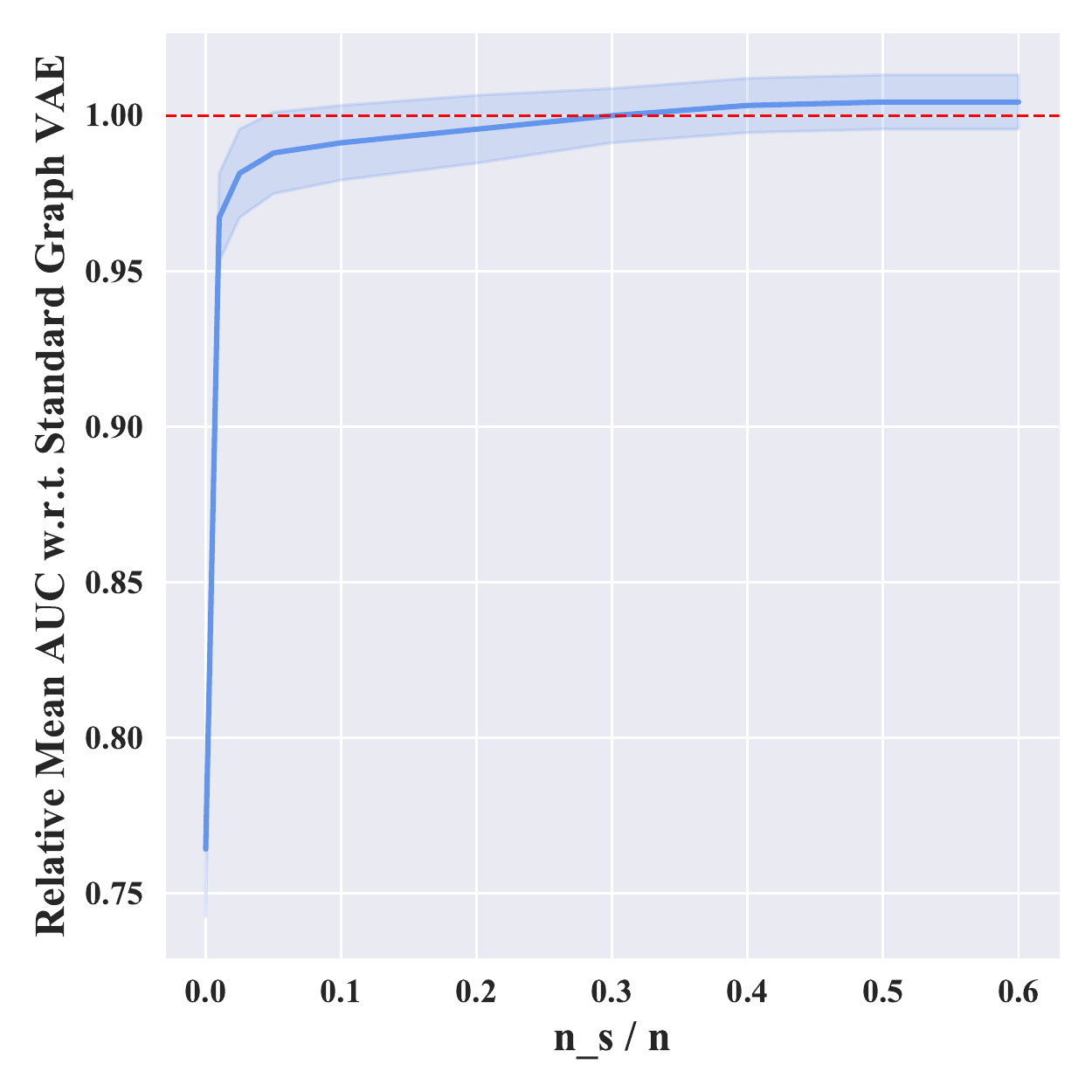}}}\subfigure[Citeseer (with features)]{
  \scalebox{0.35}{\includegraphics{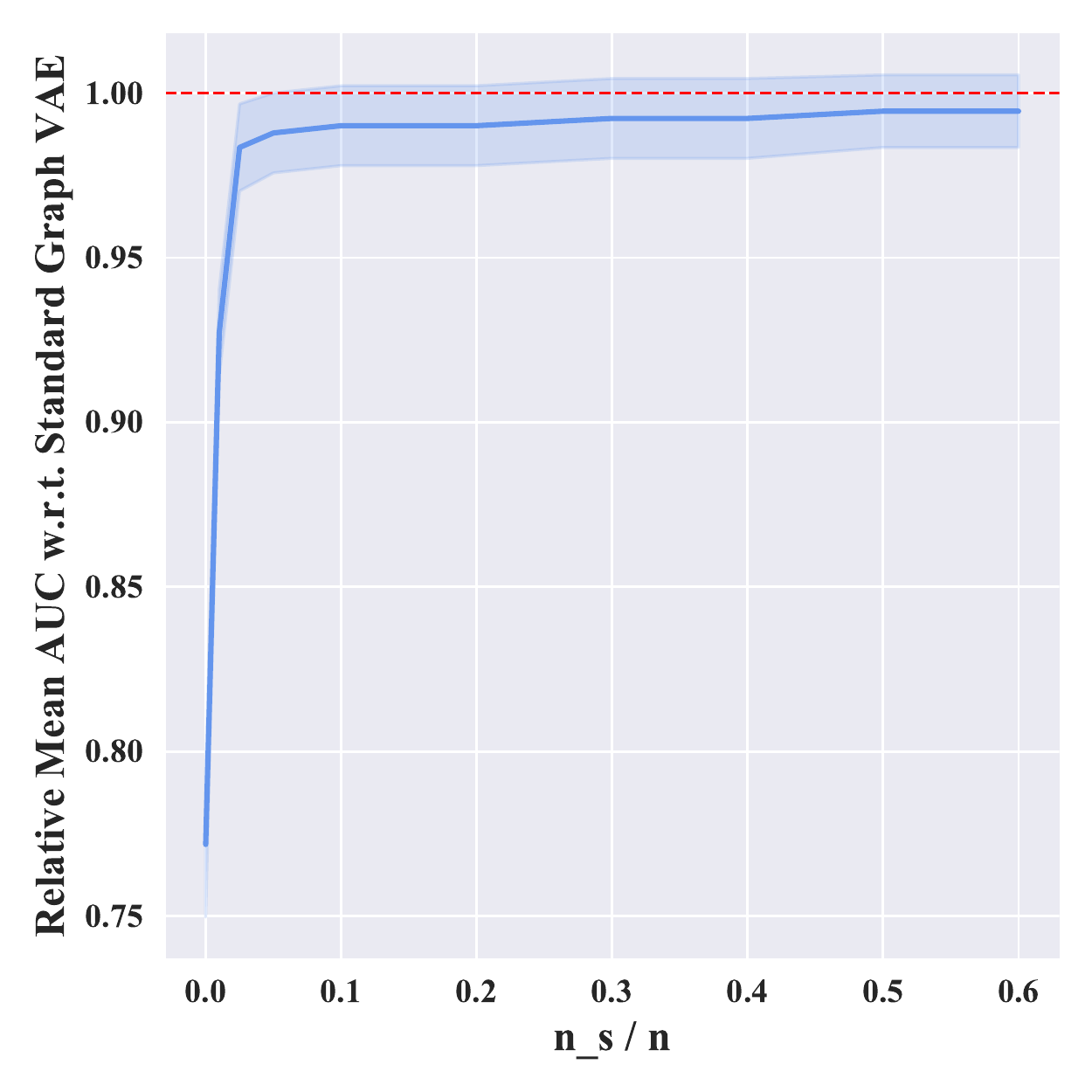}}}\subfigure[Pubmed (with features)]{
  \scalebox{0.35}{\includegraphics{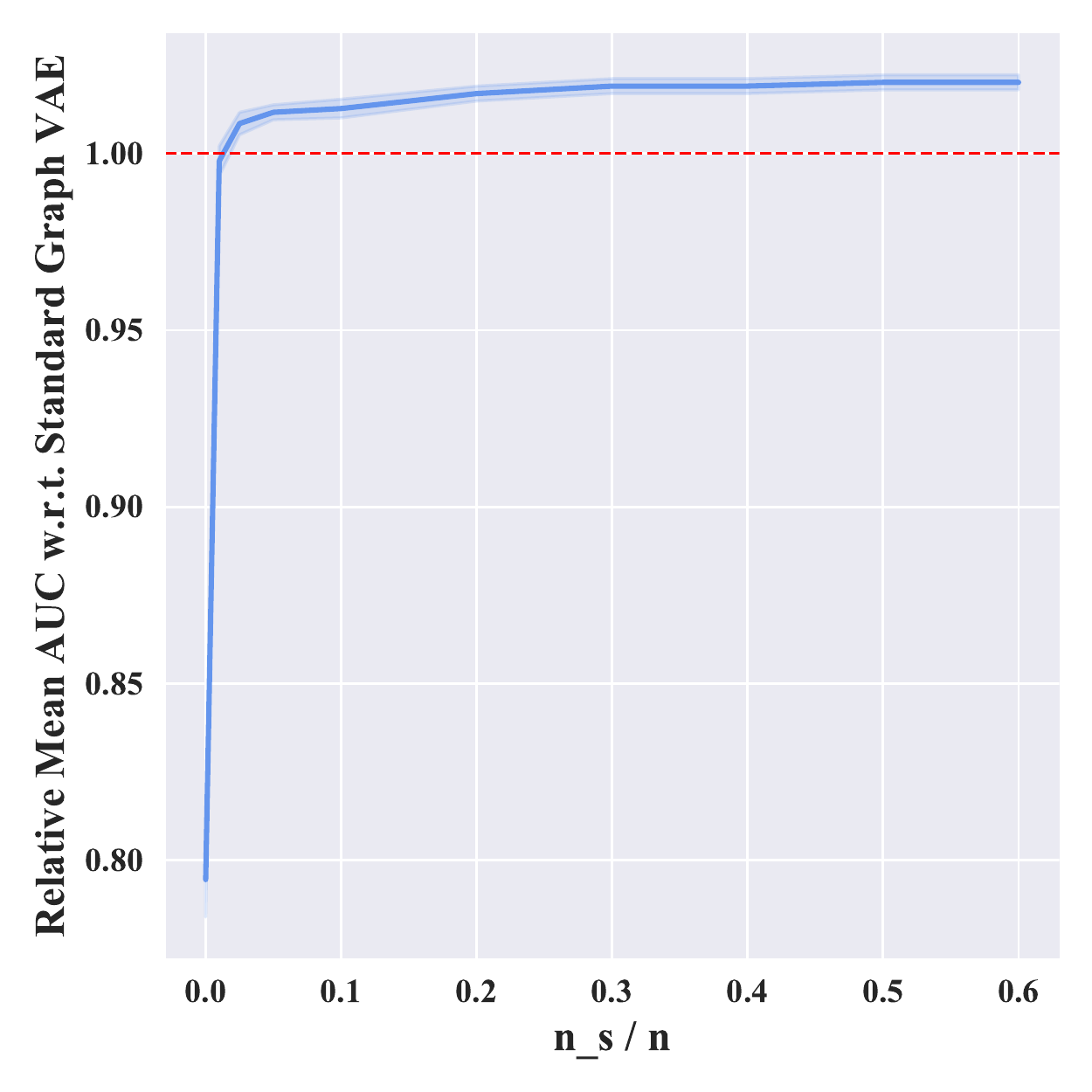}}}
  \caption{Summarized results for link prediction on the medium-size graphs Cora, Citeseer and Pubmed: relative mean AUC scores of degree-based Variational FastGAE models w.r.t. standard Graph VAE depending on the proportion of sampled nodes $n_{(S)}/n$ in decoders. We observe that, even for relatively low $n_{(S)}/n$ proportions, Variational FastGAE achieves comparable or even slightly better performances w.r.t. standard Graph VAE (results above the red line).}
\end{figure*}

\paragraph{Comparison of Uniform, Core-based and Degree-based FastGAE} In all our experiments, we observe that FastGAE with core and degree sampling both outperform FastGAE (and variational FastGAE) with uniform sampling. Furthermore, core and degree sampling also return more stable scores, i.e. with lower standard errors, especially when the number of samples $n_{(S)}$ is relatively small. Such results confirm the empirical superiority of strategies that leverage the graph structure w.r.t. pure random strategies.

\paragraph{FastGAE vs Baselines} In Table 2, Table 3 and Table 4, these models also outperform the other few existing methods to scale graph AE and VAE, usually by a wide margin. For instance, in Table 2, we show that, to achieve (almost) comparable link prediction performances w.r.t. FastGAE on Pubmed, \textit{Core-GAE} \cite{salha2019-1} requires longer running times (see \textit{Core-GAE} with $k=2$), and that faster variants significantly underperform (almost -20 AUC points for \textit{Core-GAE} with $k=9$ w.r.t. FastGAE with degree sampling). FastGAE is also conceptually simpler than Core-GAE, which we consider to be another advantage of our approach.

Besides, FastGAE-based models are faster and more effective than the ones leveraging negative sampling~\cite{pytorchgeometric} (e.g. +3.63 AUC points for FastGAE with degree sampling and $n_{(S)} =$ 20000 w.r.t. \textit{Negative Sampling GAE} in Table 2). This performance gain might be explained by the more systematic inclusion of \textit{unconnected pairs of important nodes}\footnote{Indeed, when performing negative sampling for graph AE, we only reconstruct a few random unconnected node pairs, and ignore the others. However, reconstructing some of these neglected pairs might actually be crucial. Let us consider two nodes with high core number or centrality: knowing that these two important nodes are \textit{not} connected is critical to learn meaningful embeddings. The FastGAE sampling scheme ensures a more systematic inclusion of these important "negative pairs" in the decoding step than negative sampling.} in the losses of FastGAE-based models.

\begin{figure*}[t]
\centering
  \subfigure[Cora]{
  \scalebox{0.35}{\includegraphics{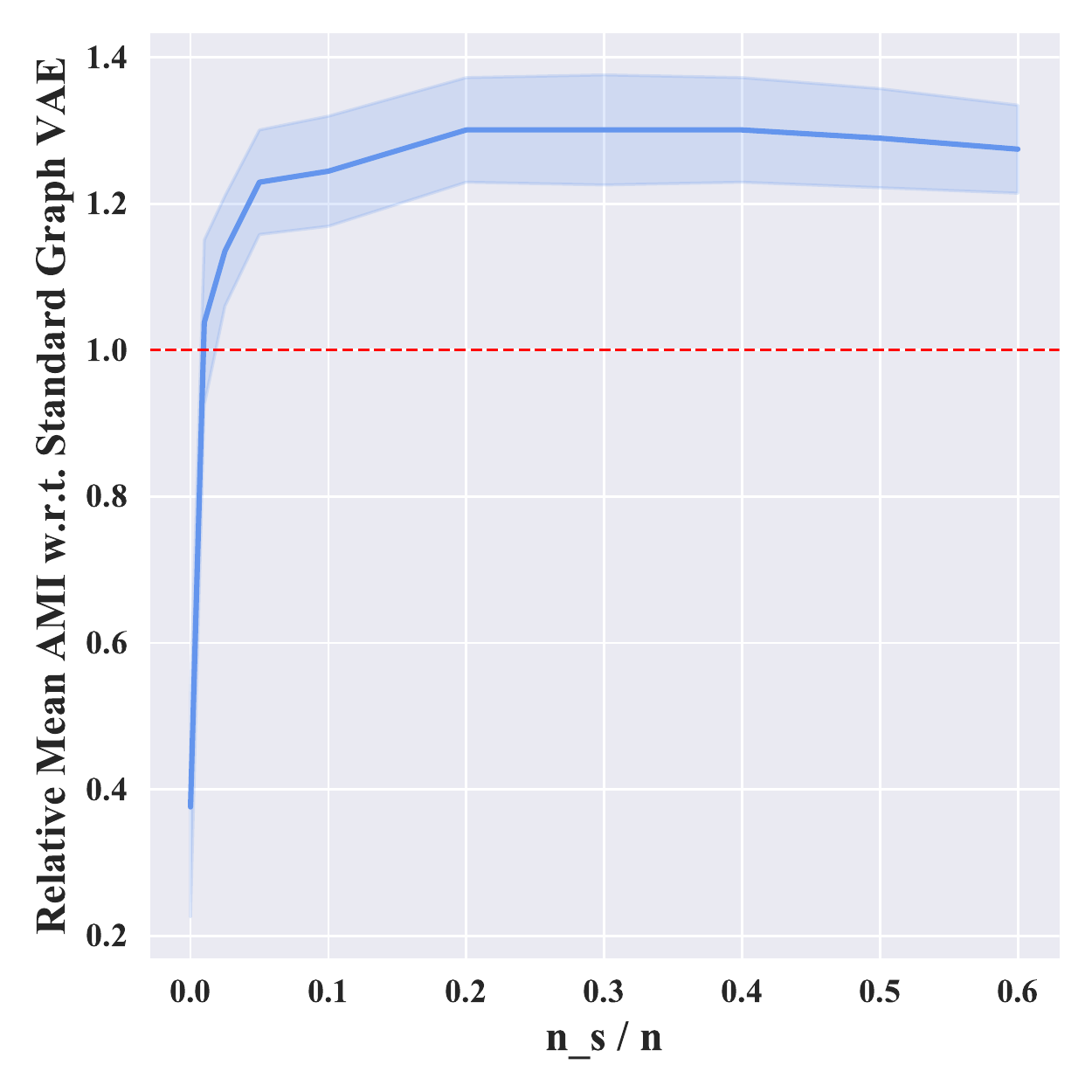}}}\subfigure[Citeseer]{
  \scalebox{0.35}{\includegraphics{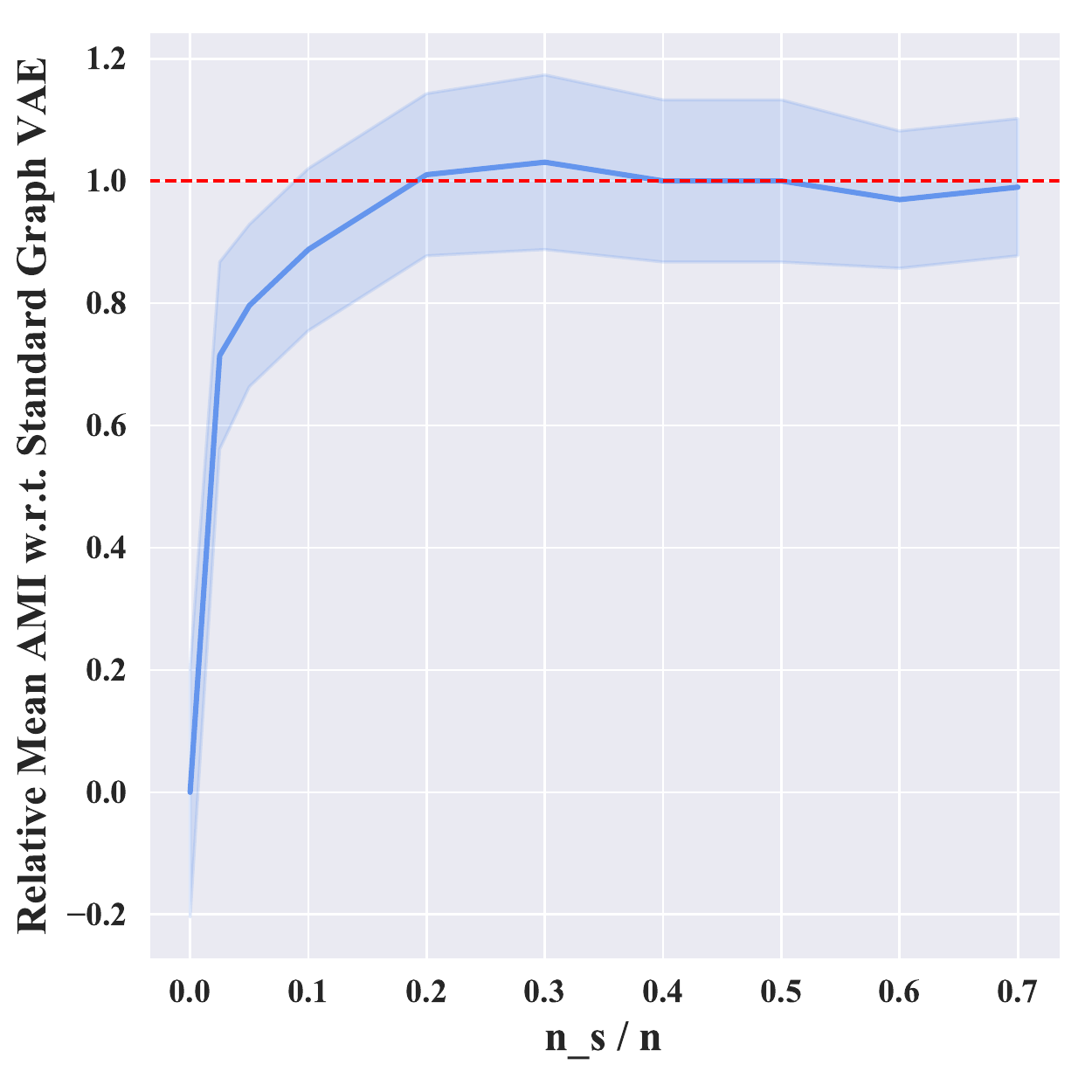}}}\subfigure[Pubmed]{
  \scalebox{0.35}{\includegraphics{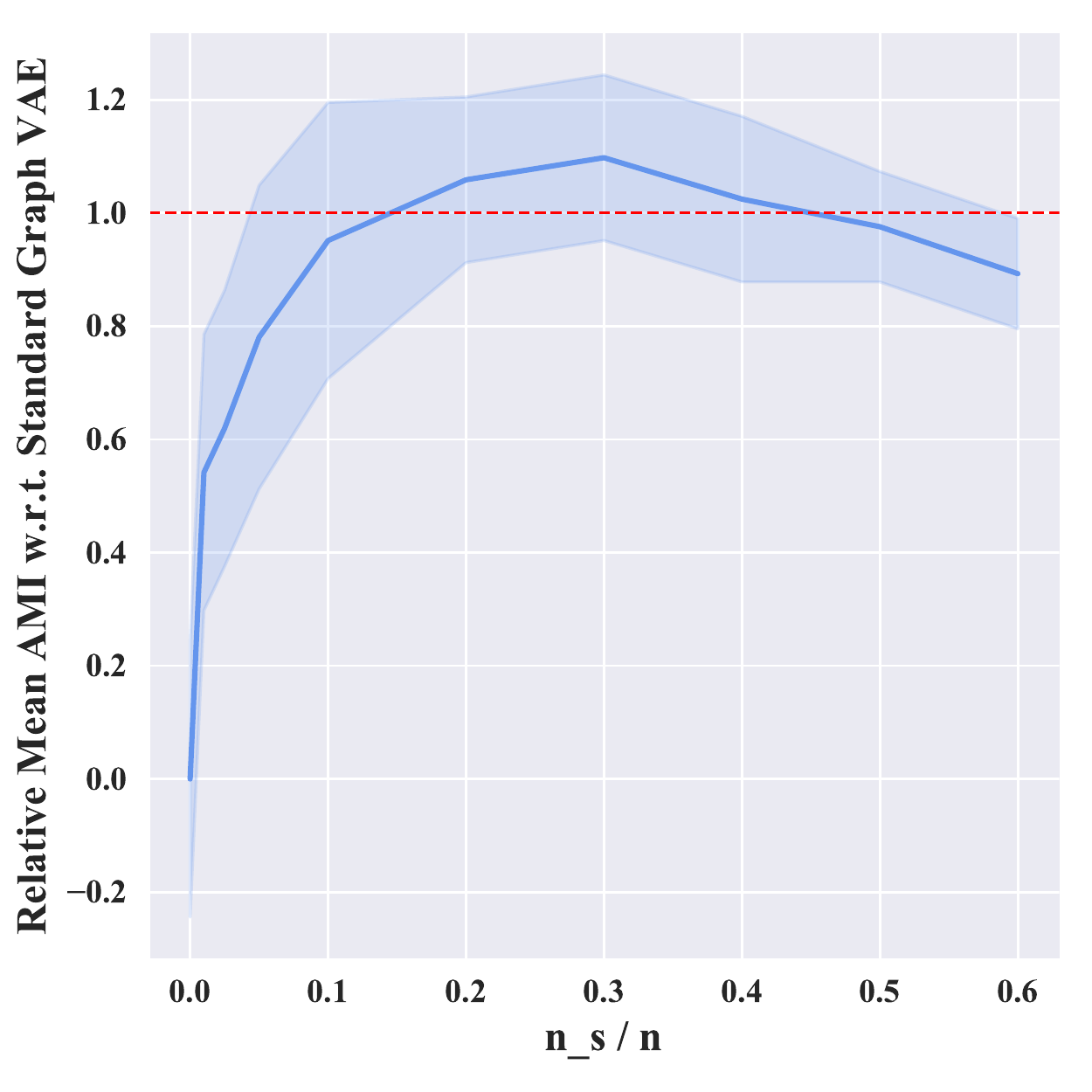}}}
  \subfigure[Cora (with features)]{
  \scalebox{0.35}{\includegraphics{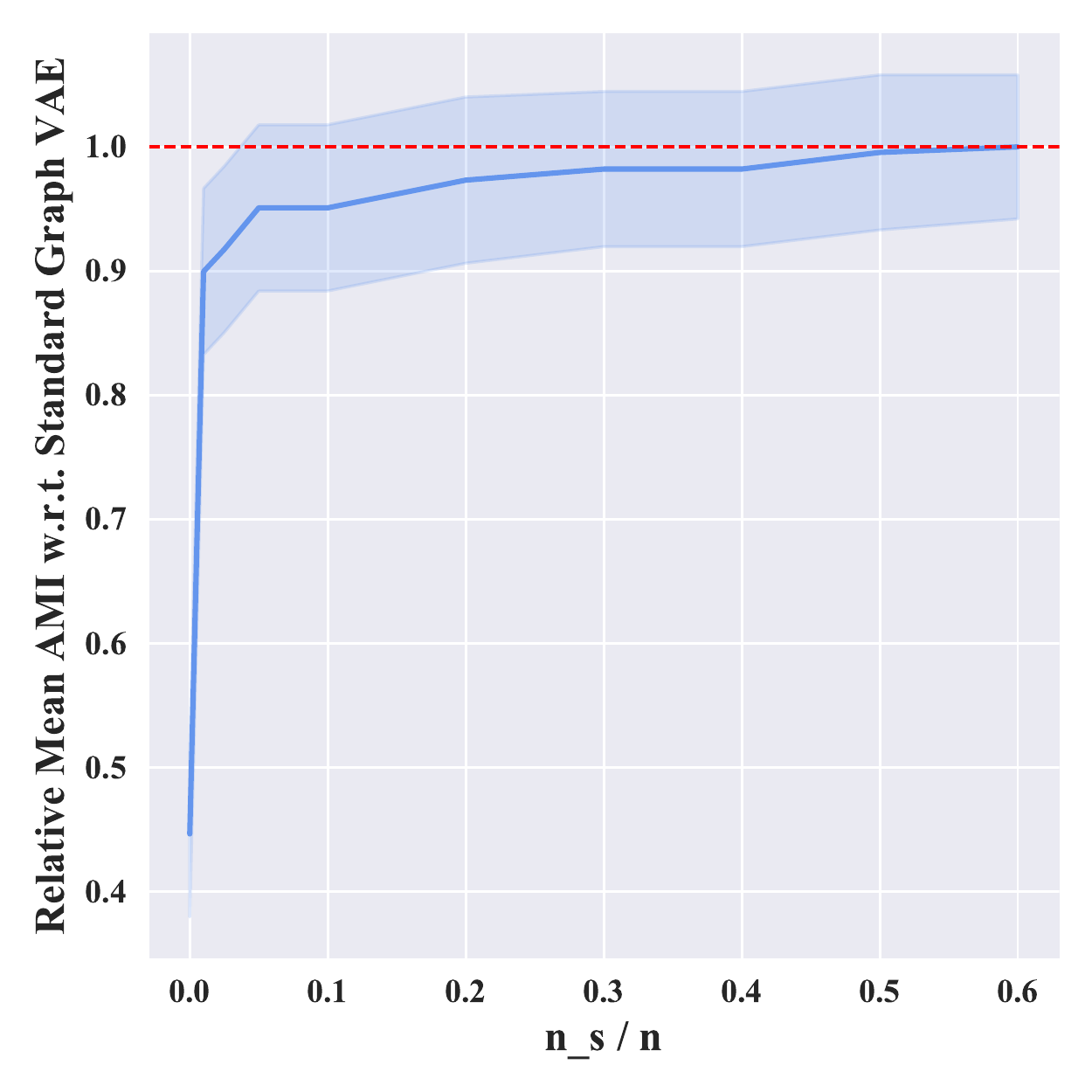}}}\subfigure[Citeseer (with features)]{
  \scalebox{0.35}{\includegraphics{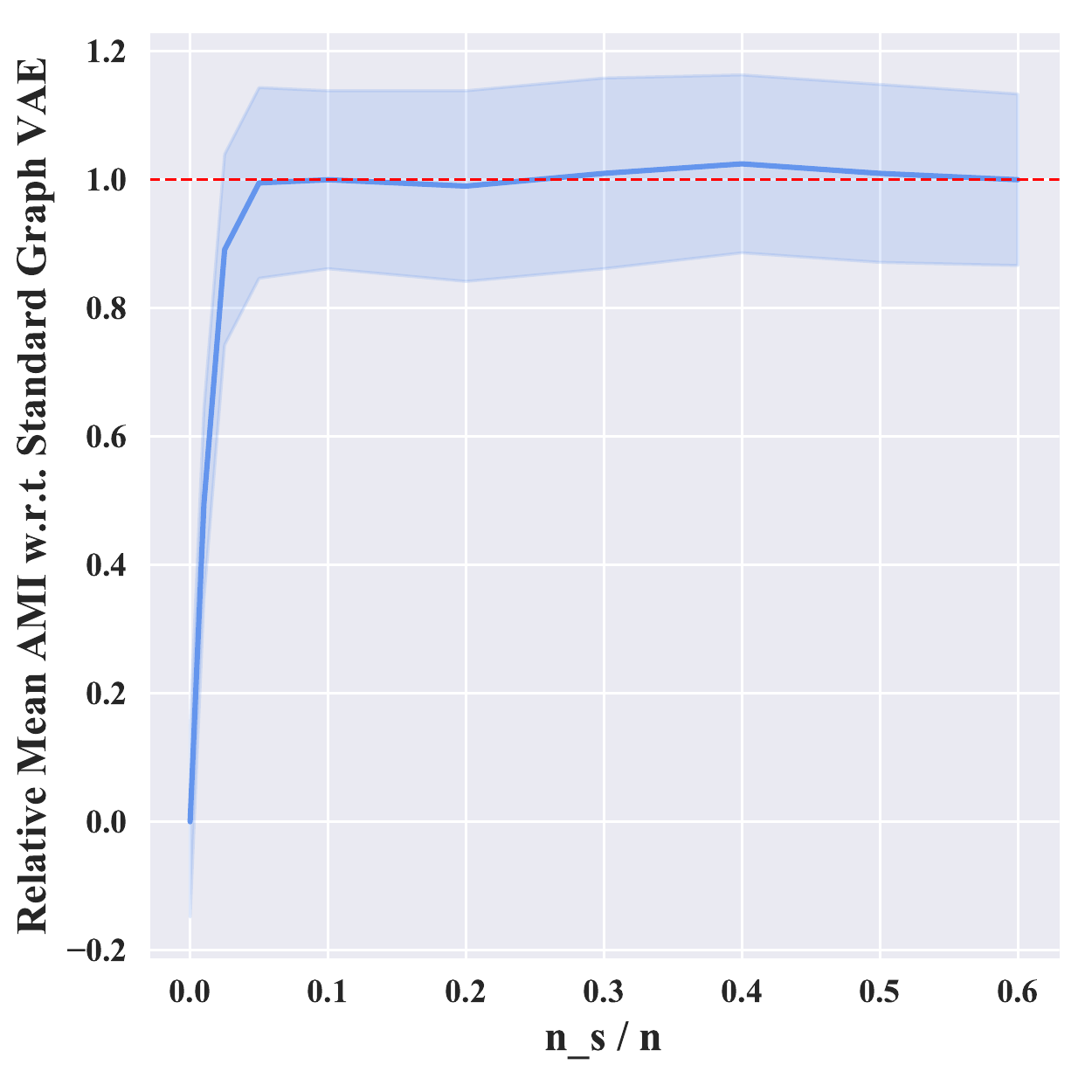}}}\subfigure[Pubmed (with features)]{
  \scalebox{0.35}{\includegraphics{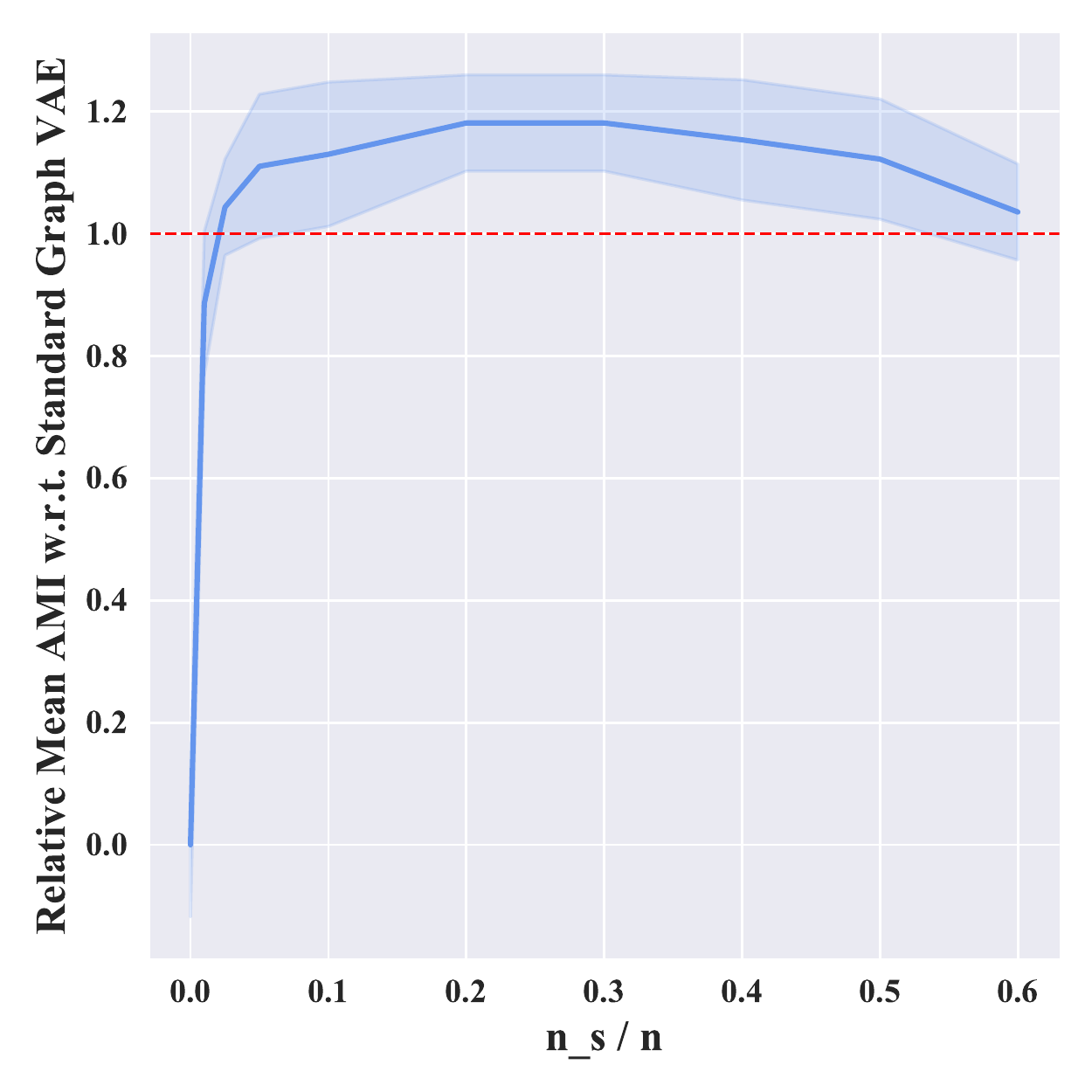}}}
  \caption{Summarized results for node clustering on the medium-size graphs Cora, Citeseer and Pubmed: relative mean AMI scores of degree-based Variational FastGAE models w.r.t. standard Graph VAE depending on the proportion of sampled nodes $n_{(S)}/n$ in decoders. We observe that, even for relatively low $n_{(S)}/n$ proportions, Variational FastGAE achieves comparable or even slightly better performances w.r.t. standard Graph VAE (results above the red line).}
\end{figure*}

Last, but not least, our proposed framework is also competitive w.r.t. the popular non AE/VAE-based baselines in most cases. The only exception concerns the node clustering experiments on Cora and Citeseer (see Table 4) where the Louvain baseline~\cite{blondel2008louvain} outperfoms AE/VAE models, which we will further discuss in Section 4.2.3.

\paragraph{On the hyperparameter $\alpha$} In Appendix C, we report optimal values of $\alpha$ for all graphs. We recall that $\alpha \in \mathbb{R}^+$ is the hyperparameter introduced in equation 9, that helps balancing important and "less important" nodes during sampling. Setting $\alpha = 0$, leads to the uniform sampling setting where all nodes are sampled with an equal probability. On the contrary, by setting $\alpha \rightarrow \infty$ we would always sample the most important nodes and ignore others. Experiments from Figure C.6 from Appendix C show that these two extreme cases are usually sub-optimal, and that a careful tuning of $\alpha$ (e.g. $\alpha = 2$ for core sampling in Table~2) improves performances.

\paragraph{On the threshold $n^*_{(S)}$} In Section 3.3, we introduced a theoretically-grounded threshold $n^*_{(S)} = C \sqrt{n}$ to select the subgraph size. Overall, in all our experiments (see Table 3, Table 4 and Table 5), selecting the proposed $n^*_{(S)}$ provided interesting performance/speed trade-offs, leading to fairly competitive results w.r.t. standard graph AE/VAE and best baselines, while being significantly faster.

\subsubsection{FastGAE for Large Graphs}

\begin{table*}[t]
\centering
\begin{tiny}
\begin{tabular}{c|c|cc|ccc}
\toprule
\textbf{Model}  & \textbf{Subgraphs} & \multicolumn{2}{c}{\textbf{Average Perf. on Test Set}} & \multicolumn{3}{c}{\textbf{Average Running Times (in seconds)}}\\
(Dimension $d=16$) & \textbf{size $n_{(S)}$} & \tiny \textbf{AUC (in \%)} & \tiny \textbf{AP (in \%)} & \tiny Compute  & \tiny Train& \tiny \textbf{Total} \\ 
 &  & \tiny & \tiny \ & \tiny $p_i$ & \tiny model &  \\ 
\midrule
\midrule
Standard Graph AE  & - &  \multicolumn{2}{c|}{\textit{(intractable)}}  & \multicolumn{3}{c}{\textit{(intractable)}} \\
\midrule
FastGAE with & 20000 & \textbf{92.91} $\pm$ \textbf{0.22} & \textbf{93.35} $\pm$ \textbf{0.21} & 0.30 & 4401.67 & 4401.97 (1h13)  \\
\textbf{degree} sampling   & 16425$^{*}$ & \textbf{93.02} $\pm$ \textbf{0.23} & \textbf{93.39} $\pm$ \textbf{0.23} & 0.30 & 3693.32 & 3693.62 (1h02)  \\
($\alpha = 2$)& 10000 & 91.76 $\pm$ 0.23 & 91.74 $\pm$ 0.21 & 0.30 & 1164.22 & 1164.52 (19 min) \\
& 2500 & 87.53 $\pm$ 0.50 & 87.42 $\pm$ 0.51 & 0.30 & 537.99 & 538.29 (9 min)  \\
& 1000 & 85.55 $\pm$ 0.62 & 85.96 $\pm$ 0.55 & 0.30 & 500.12 & 500.42 (8 min) \\
\midrule
FastGAE with & 20000 & 90.71 $\pm$ 0.21 & 91.70 $\pm$ 0.19 & 668.05 & 4800.58 & 5468.63 (1h31)  \\
\textbf{core} sampling & 16425$^{*}$ & 90.48 $\pm$ 0.21 & 90.85 $\pm$ 0.23 & 668.05 & 4027.90 & 4695.95 (1h18) \\
($\alpha = 2$) & 10000 & 89.08 $\pm$ 0.25 & 88.65 $\pm$ 0.24 & 668.05 & 1232.03 & 1900.08 (32 min)  \\
& 2500 & 82.50 $\pm$ 0.51 & 81.42 $\pm$ 0.60 & 668.05 & 544.64 & 1222.69 (20 min)  \\
 & 1000 & 73.99 $\pm$ 0.70 & 75.24 $\pm$ 0.74 & 668.05 & 503.88 & 1171.93 (19 min)  \\
\midrule
FastGAE with & 20000 & 85.97 $\pm$ 0.26 & 87.71 $\pm$ 0.25 & - & 4397.89 & 4387.89 (1h13)  \\
\textbf{uniform} sampling & 16425$^{*}$ & 84.40 $\pm$ 0.25 & 86.11 $\pm$ 0.25 & - & 3602.66 & 3602.66 (1h) \\
 & 10000 & 83.77 $\pm$ 0.28 & 83.37 $\pm$ 0.26 & - & 1106.01 & 1106.01 (18 min)  \\
& 2500 & 70.66 $\pm$ 0.35 & 71.16 $\pm$ 0.38 & - & 485.03 & 485.03 (8 min)  \\
& 1000 & 59.34 $\pm$ 0.83 & 58.83 $\pm$ 1.30 & - & 438.02 & \textbf{438.02 (7 min)} \\
\midrule
\midrule
Core-GAE, $k=14$ (best choice) & - & 88.06 $\pm$ 0.27 & 88.94 $\pm$ 0.23 & - & 4805.11 & 4805.11 (1h20) \\
Core-GAE, $k=21$ (fastest choice) & - & 86.94 $\pm$ 0.69 & 87.23 $\pm$ 0.71 & - & 619.01 & 619.01 (10 min)  \\
Negative Sampling GAE & - & 86.11 $\pm$ 0.48 & 86.70 $\pm$ 0.49 & - & 2392.96 & 2392.96 (40 min)  \\
node2vec & - & \textbf{92.96} $\pm$ \textbf{0.23} & \textbf{93.43} $\pm$ \textbf{0.17} & - & 25851.39 & 25851.39 (7h11)  \\
Spectral Embedding & - & \multicolumn{2}{c|}{\textit{(intractable)}}   & \multicolumn{3}{c}{\textit{(intractable)}} \\
\bottomrule
\end{tabular}
\end{tiny}
\caption{Link prediction on Patent ($n =$ 3774768, $m =$ 16518948), using FastGAE with degree, core and uniform sampling, and baselines. Standard Graph AE is intractable. For degree and core sampling, values of the hyperparameter $\alpha$ (as defined in equation 9) were tuned, as described in Figure C.6. All models learn embedding vectors of dimension $d=16$. Scores are averaged over 10 runs with different and random train/validation/test sets. Bold numbers correspond to the best performance (several numbers are bold when scores are comparable, in a $\pm 1$ standard deviation range) and best running time. Subgraphs sizes annotated with $^*$ correspond to the $n^*_{(S)}$ threshold, as introduced in equation 19.}
\end{table*}

After studying medium-size graphs, we now report in this section the evaluation of FastGAE and variational FastGAE on the four \textit{large graphs} from our experiments: SBM, Google, Youtube and Patent. The above Table 5 details mean AUC  and  AP scores and standard errors over 10 runs with different train/test splits for link prediction on the Patent graph with FastGAE. We also report more summarized results (for the sake of brevity) for link prediction on SBM, Google, Youtube and Patent in Figure 5 and in Table~6, and summarized results for node clustering on SBM in Figure 5 and in the previous Table 4. As in Table 5, all scores are averaged over 10 runs with different train/test splits.

\begin{figure*}[!t]
\centering
     \subfigure[Google]{
  \scalebox{0.35}{\includegraphics{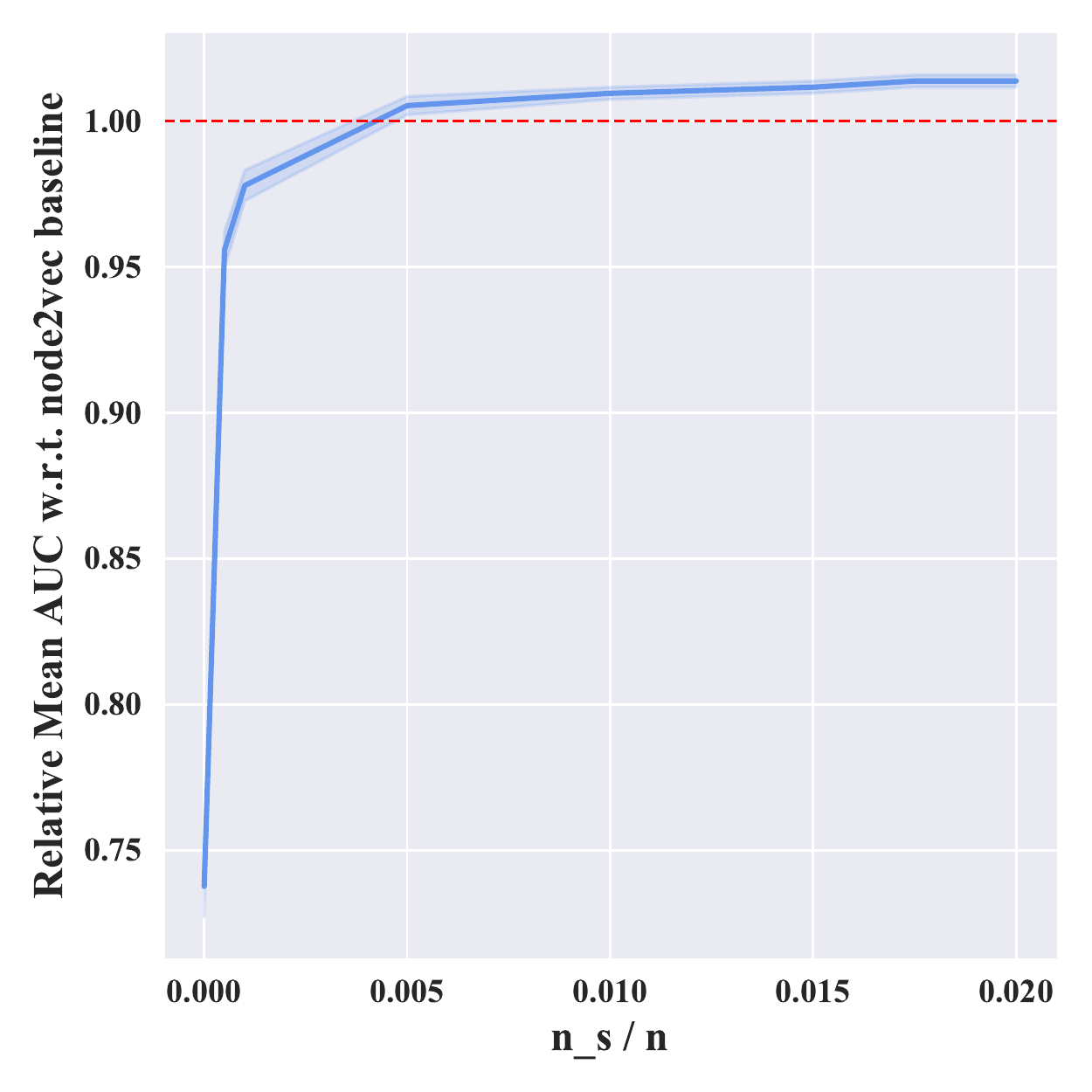}}}\subfigure[Youtube]{
  \scalebox{0.35}{\includegraphics{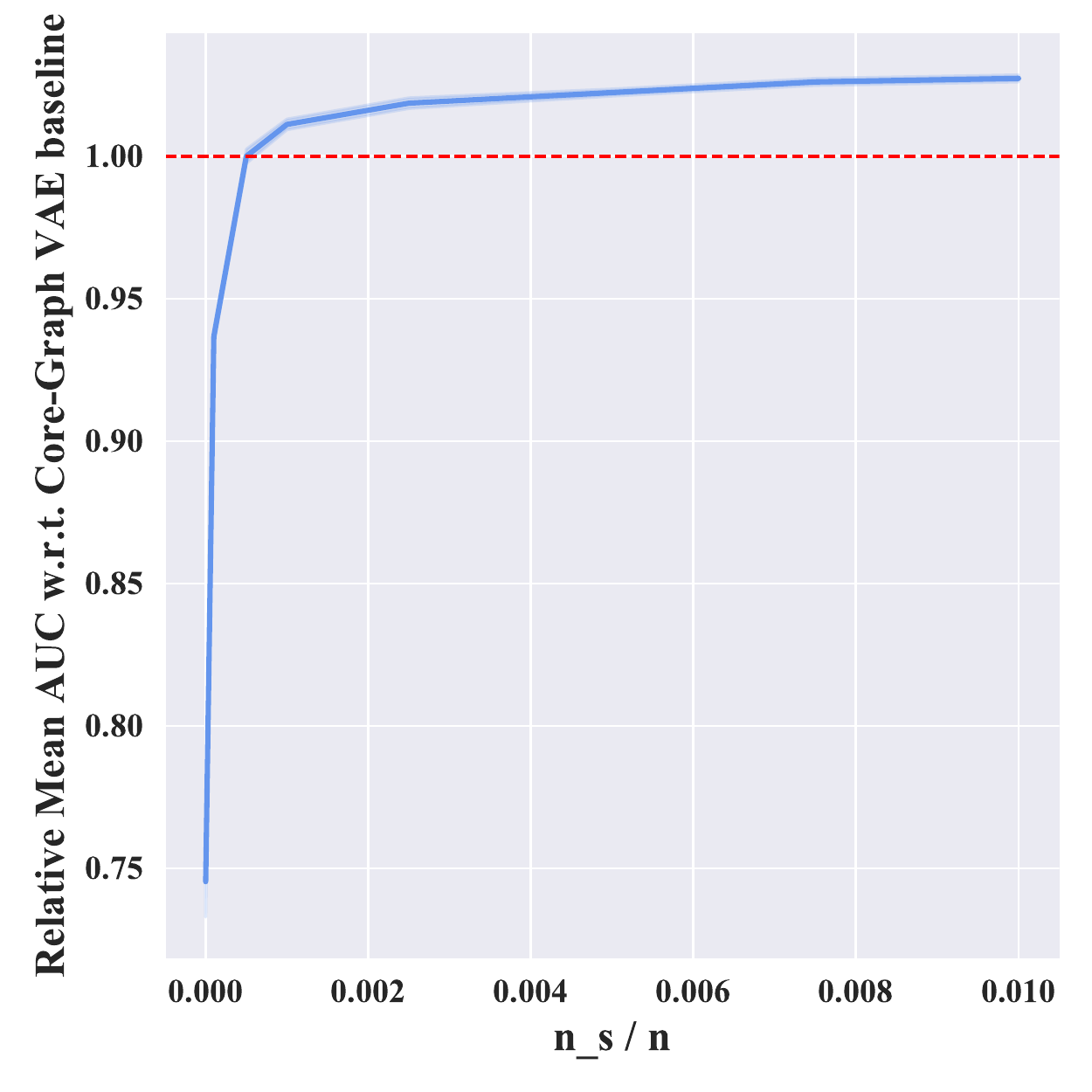}}}\subfigure[Patent]{
  \scalebox{0.35}{\includegraphics{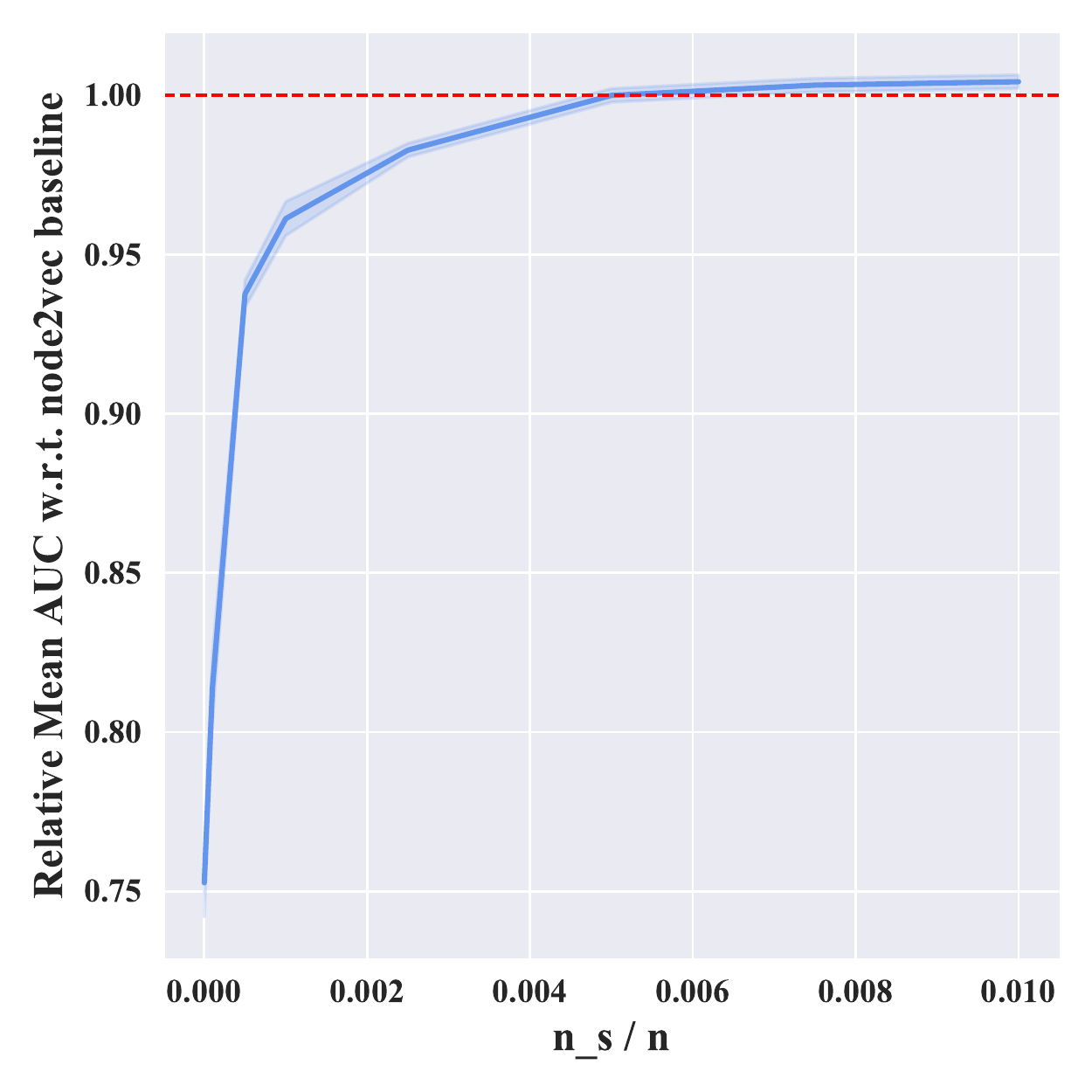}}}
  \subfigure[SBM]{
  \scalebox{0.35}{\includegraphics{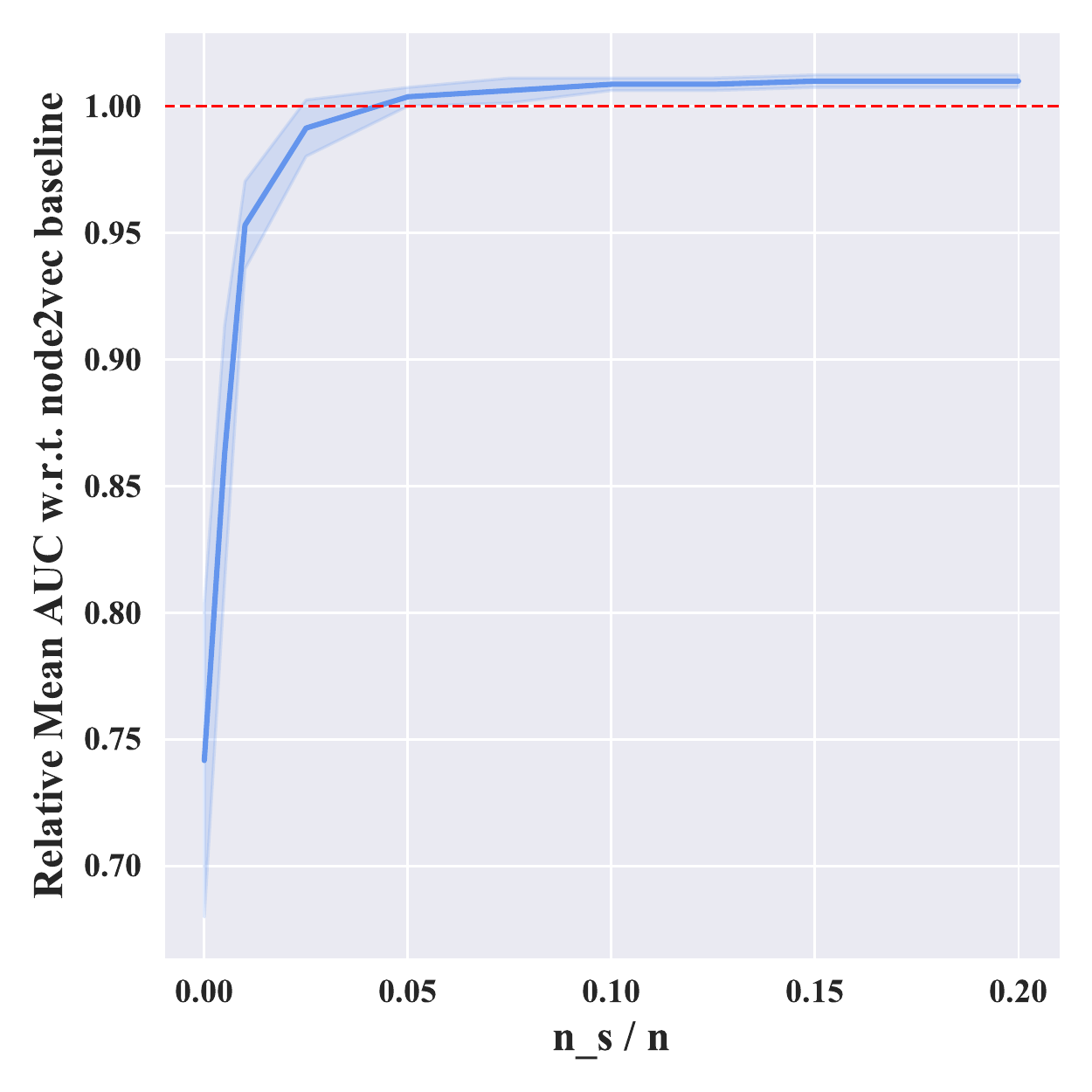}}}\subfigure[SBM (Clustering)]{\scalebox{0.35}{\includegraphics{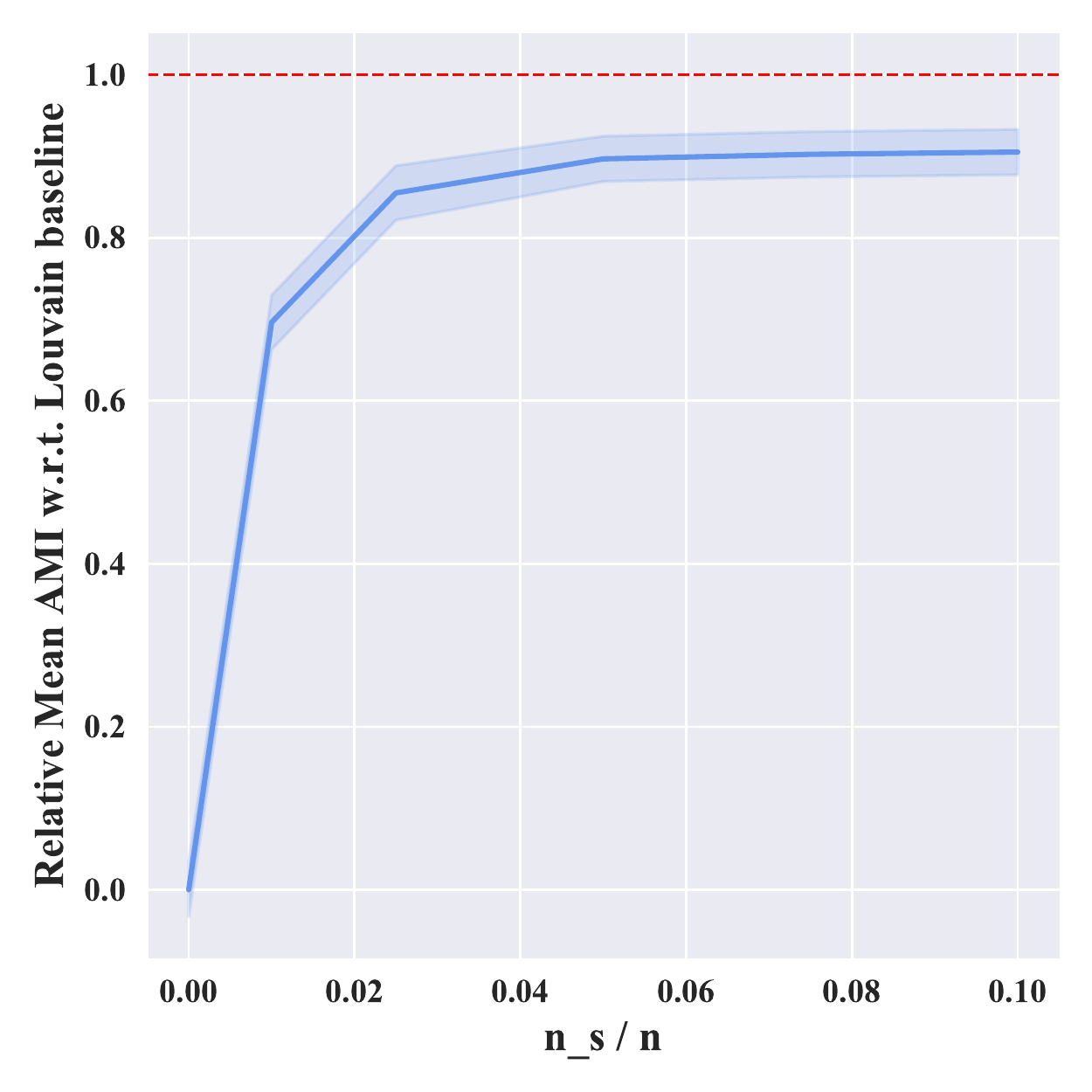}}}
  \caption{Summarized results for link prediction on the four large graphs SBM, Google, Youtube and Patent (subfigures a, b, c and d) and for node clustering on SBM (subfigure e): relative mean AUC scores (or mean AMI scores for subfigure e) of degree-based Variational FastGAE models w.r.t. the best scalable baseline, depending on the proportion of sampled nodes $n_{(S)}/n$ in decoders.}
\end{figure*}

\paragraph{FastGAE vs Scalable AE/VAE Baselines} On large graphs, direct comparison with standard graph AE and VAE is unfortunately impossible. However, our FastGAE and variational FastGAE models almost always outperform the other existing approaches to scale graph AE and VAE, usually by a wide margin. For instance, for link prediction on Patent (Table~5), degree-based and core-based FastGAE models with $n_{(S)} =$ 20000, 16425 and 10000 all outperform the best \textit{Core-GAE} by up to roughly 5 AUC points (for degree-based FastGAE with $n_{(S)} =$ 20000) and with comparable or better running times.

Regarding the \textit{Core-GAE} baseline \cite{salha2019-1}, we also point out that, in one of our large graphs, namely on the SBM one, this method was even \textit{intractable} due to the lack of \textit{size decreasing} core structure on this graph. Indeed, the $21$-core of SBM includes 95200 nodes, which is too large to train a graph AE or VAE on our machines, and the $22$-core is empty. Requiring a size decreasing core structure is a drawback of \textit{Core-GAE} w.r.t. the more flexible FastGAE approach.

Moreover, as for medium-size graphs, we also observe that core-based and degree-based FastGAE tend to significantly outperform negative sampling (e.g. up to +6.8 AUC points for link prediction on Patent in Table 5; also, \textit{Negative Sampling GAE} never appears as the best baseline in Table 4 nor in Table 6), consolidating our previous conclusions.

Besides, as before, the proposed $n^*_{(S)}$ provides quite effective performance/speed trade-offs and will constitute an interesting heuristic to help future FastGAE users selecting subgraph sizes.

\paragraph{Comparison of Uniform, Core-based and Degree-based FastGAE} As for medium-size graphs, core-based sampling and degree-based sampling is empirically more effective than uniform sampling (e.g. in Table 5, +6.94 AUC points for FastGAE with degree sampling on Patent, with $n_{(S)} =$ 20000), and associated to lower standard errors. We observe that computing the $p_i$ probabilities through core-based sampling is longer on large graphs, but bring no empirical benefit w.r.t. degree-based sampling: we therefore recommend using degree-based sampling for large graphs.

\paragraph{FastGAE vs non-AE/VAE baselines, and the case of Node Clustering} For the link prediction task, best FastGAE models usually reach competitive results w.r.t. node2vec while being significantly faster (see e.g. the last column of Table 6). However, regarding node clustering, we observe in Table 4 and in Figure 5 that the \textit{Louvain} baseline outperfoms AE/VAE models on SBM, a phenomenon that we also noted on the Cora and Citeseer graphs (section 4.2.2). We conjecture that current graph AE and VAE models might be suboptimal to effectively reconstruct communities in graph data ; this claim is consistent with recent experiments on these datasets \cite{salha2019-1,salha2020simple}.  As our objective, in this paper, was to scale existing graph AE/VAE, but not to ensure nor to claim their superiority over all other methods for node clustering, we do not further investigate this limit here. Nonetheless, future works on more effective cluster reconstruction from AE/VAE embeddings could definitely lead towards the improvement of these models.


\begin{table*}[!t]
\centering
\begin{tiny}
\begin{tabular}{c|l|cc|ccc}
\toprule
\textbf{Dataset}  & \textbf{Model}  & \multicolumn{2}{c}{\textbf{Average Perf. on Test Set}} & \multicolumn{3}{c}{\textbf{Average Running Times (in sec.)}} \\
& (Dimension $d=16$) & \tiny \textbf{AUC (in \%)} & \tiny \textbf{AP (in \%)} & \tiny Compute & \tiny Train & \tiny \textbf{Total} \\ 
&  &  & & \tiny $p_i$ & \tiny model & \\ 
\midrule
\midrule

 & Standard Graph VAE  & \multicolumn{2}{c|}{\textit{(intractable)}}  & \multicolumn{3}{c}{\textit{(intractable)}}  \\
 & \underline{Var. FastGAE (degree, $\alpha = 2$)}&  &  &  &  & \\
\textbf{SBM} & - with $n_{(S)} =$ 2000 & 79.37 $\pm$ 0.52 & 80.68 $\pm$ 0.84 & \textbf{0.03} & \textbf{27.36} & \textbf{27.39} \\
 & - with $n_{(S)} = n^*_{(S)} =$ 2673 & 80.96 $\pm$ 0.35 & 83.69 $\pm$ 0.60 & 0.03 & 30.66 & 30.69\\
 & - with $n_{(S)} =$ 5000 & \textbf{81.45} $\pm$ \textbf{0.39} & \textbf{84.30} $\pm$ \textbf{0.82} & 0.03 & 43.86 & 43.89 \\
& \underline{Best baseline} &   &  &  &  & \\
& node2vec & 80.89 $\pm$ 0.32 & 83.51 $\pm$ 0.29 & - & 1328.82 & 1328.82 (22 min) \\
\midrule

 & Standard Graph AE  & \multicolumn{2}{c|}{\textit{(intractable)}}  & \multicolumn{3}{c}{\textit{(intractable)}}\\
 & \underline{FastGAE (degree, $\alpha = 1$)}& &  &   &  & \\
\textbf{Google} & - with $n_{(S)} =$ 2500 & 94.52 $\pm$ 0.26 & 95.50 $\pm$ 0.11 & \textbf{0.14} & \textbf{122.53} & \textbf{122.67} \\
 & - with $n_{(S)} = n^*_{(S)} =$ 7911 & \textbf{95.75} $\pm$ \textbf{0.24} & \textbf{96.62} $\pm$ \textbf{0.09} & 0.14  & 158.63 & 158.77 \\
 & - with $n_{(S)} =$ 10000 & \textbf{95.91} $\pm$ \textbf{0.19} & \textbf{96.64} $\pm$ \textbf{0.12} & 0.14 & 168.10 & 168.24  \\
& \underline{Best baseline} &   &   &  &  & \\
& node2vec  & 94.89 $\pm$ 0.63 & \textbf{96.82} $\pm$ \textbf{0.72} & - & 14762.78 & 14762.78 (4h06)  \\
\midrule

 & Standard Graph VAE  & \multicolumn{2}{c|}{\textit{(intractable)}}  & \multicolumn{3}{c}{\textit{(intractable)}} \\
& \underline{Var. FastGAE (degree, $\alpha = 5$)} & &  &  &  & \\
\textbf{Youtube} & - with $n_{(S)} =$ 3000 & 81.14 $\pm$ 0.19 & 86.61 $\pm$ 0.16 & \textbf{0.28} & \textbf{453.22} & \textbf{453.50 (8min)}\\
 & - with $n_{(S)} = n^*_{(S)} =$ 15179 & 81.83 $\pm$ 0.15 & \textbf{87.21} $\pm$ \textbf{0.15} & 0.28 & 2964.51 & 2964.79 (49min) \\
 & - with $n_{(S)} =$ 20000 & \textbf{82.31} $\pm$ \textbf{0.18} & \textbf{87.36} $\pm$ \textbf{0.15} & 0.28 & 3596.03 & 3596.31 (1h) \\
& \underline{Best baseline} & &  &  &  & \\
& Core-Graph VAE, $k=40$  & 80.53 $\pm$ 0.23 & 82.45 $\pm$ 0.20 & - & 12433.51 & 12433.51 (3h27) \\
\midrule

 & Standard Graph AE  & \multicolumn{2}{c|}{\textit{(intractable)}}  & \multicolumn{3}{c}{\textit{(intractable)}}\\
 & \underline{FastGAE with (degree, $\alpha = 2$)} &  &  &  & \\
\textbf{Patent} & - with $n_{(S)} =$ 5000 & 90.66 $\pm$ 0.25 & 90.76 $\pm$ 0.22 & \textbf{0.30} & \textbf{605.75} & \textbf{606.05 (10min)}\\
 & - with $n_{(S)} = n^*_{(S)} =$ 16425 &  \textbf{93.02} $\pm$ \textbf{0.23} & \textbf{93.39} $\pm$ \textbf{0.23} & 0.30 & 3693.32 & 3693.62 (1h02)\\
 & - with $n_{(S)} =$ 20000 & \textbf{92.91} $\pm$ \textbf{0.22} & \textbf{93.35} $\pm$ \textbf{0.21} & 0.30 & 4401.67 & 4401.67 (1h13) \\
& \underline{Best baseline} &  &  &  & \\
& node2vec  & \textbf{92.96} $\pm$ \textbf{0.23} & \textbf{93.43} $\pm$ \textbf{0.17} & - & 25851.39 & 25851.39 (7h11) \\
\bottomrule
\end{tabular}
\end{tiny}
\caption{Summarized results for link prediction on all large graphs. For each graph, for brevity, we only report a few representative degree-based FastGAE \textbf{or} Variational FastGAE models, and the best baseline (among Core-Graph AE/VAE, Negative Sampling Graph AE/VAE and node2vec). Scores are averaged over 10 runs with different and random train/validation/test sets. Standard Graph AE and VAE are intractable. Scores are averaged over 10 runs with different and random train/validation/test sets. For degree sampling, values of the hyperparameter $\alpha$ (as defined in equation 9) were tuned, as described in Figure C.6. All models learn embedding vectors of dimension $d=16$. Bold numbers correspond to the best performance (several numbers are bold when scores are comparable, in a $\pm 1$ standard deviation range) and best running time.}
\end{table*}

\subsubsection{On the embedding dimension $d$} Our tables present results for a fixed embedding dimension of $d=16$, for \textit{all} models (all variants of AE/VAE and other baselines), even for large graphs. Nonetheless, we reached similar conclusions for $d = 32$, $64$ and $128$: although performances sometimes slightly improved by increasing $d$, the \textit{ranking} of the different models remained unchanged. We also considered optimizing $d$ individually for each model (to cover potential cases where the impact of $d$ on the performance of each model would have been different) but, again, it did not modify the ranking of models in terms of AUC, AP and AMI scores.

\subsubsection{On the number of training iterations} 
As detailed in Section 4.1.3, all graph AE and VAE models, with or without our FastGAE framework, were trained for 200 iterations (resp. 300) for graphs with $n <$ 100000 (resp. $n \geq$ 100000). We thoroughly checked the convergence of all models, by assessing the stabilization of performances in terms of AUC scores on validation sets. Using a fixed number of iterations is common in recent research on graph AE and VAE \cite{kipf2016-2,semiimplicit2019,salha2019-1,berg2018matrixcomp}. We nonetheless think that \textit{early-stopping} \cite{girosi1995regularization} would also be a relevant alternative strategy, that could lead to additional speed-ups, and might deserve further investigations in future works. Besides, we observed that, for very small values of $n_{(S)}$, increasing the number of training iterations did not significantly improved our results: to improve scores on such settings, increasing the sampling size $n_{(S)}$ was overall more effective than increasing the number of training iterations.

\section{Conclusion and Discussion}

In this paper, we introduced and released a general framework to scale graph AE and VAE models. We demonstrated its effectiveness on large graphs with up to millions of nodes and edges, both in terms of speed, of scalability and of performance. We outperformed the few existing approaches to scale graph AE and VAE, usually by a wide margin. FastGAE is also conceptually simpler than these alternative approaches \cite{salha2019-1}, and we believe that simple solutions often have the most impact.

Futhermore, FastGAE is a flexible framework that easily extends to AE/VAE models with alternative GNN \textit{encoders}. In our experiments, the GCN encoders of standard graph AE/VAE models and of FastGAE-based models could easily be replaced by any alternative architecture learning the embedding matrix $Z$ in another way, e.g. by a FastGCN \cite{chen2018fastgcn}, a Cluster-GCN \cite{chiang2019cluster}, a GCN with simple graph convolutions (SGC) \cite{wu2019simplifying} or a GraphSAGE \cite{hamilton2017inductive} model. Besides, FastGAE easily extends to graph AE or VAE with alternative \textit{decoders}. For instance, one could replace the symmetric inner-product decoder from our experiments by the asymmetric decoder recently proposed by \citet{salha2019-2}, which would extend FastGAE to \textit{directed} graphs. 

Last, but not least, we also identify possible future research directions for improvements. Apart from the aforementioned limit (section 4.2.3) of current graph AE and VAE models on the node clustering task (that, however, concerns all graph AE/VAE from our experiments and is not specific to FastGAE), we underline that the proposed FastGAE  method could underperform on very sparse graphs. Indeed, in such scenario, the subgraphs to reconstruct might include a large proportion of isolated nodes, which would negatively impact learning. Moreover, in the case of large graphs with a lot of sparsely connected components, we recommend applying FastGAE separately on each component. Also, in this paper we always assumed that the graph was fixed, which might sometimes be a limit, that could initiate future interesting studies on extensions of FastGAE for scalable \textit{dynamic} graph embeddings, potentially with a dynamic selection of $n_{(S)}$.

\clearpage

\appendix

\section*{Appendices}

\setcounter{theorem}{0}

This supplementary material provides all proofs from our theoretical analyses as well as an additional figure. It is organized as follows:
\begin{itemize}
    \item In Appendix A, we report the proofs of Propositions 1, 2 and 3 from our Section 3.3.2 on approximated losses.
    \item In Appendix B, we report the proofs of Propositions 4 and 5 from our Section 3.3.3 on the computation of the threshold subgraph size $n^*_{(S)}$.
    \item The figure of Appendix C presents optimal values of the hyperparameter $\alpha$.
\end{itemize}

\section{On Approximate Losses}

\begin{theorem}
Let $\mathcal{G}_{(S)} = (\mathcal{V}_{(S)},\mathcal{E}_{(S)})$ be a subgraph of $\mathcal{G}$ obtained from sampling $n_{(S)}$ nodes \textbf{with} replacement using the node sampling strategy of FastGAE. Let $i$ and $j$ denote two distinct nodes from the original graph $\mathcal{G}$: $(i,j) \in \mathcal{V}^2$. Then:
\begin{equation*}\mathbb{P}\Big(i \in \mathcal{V}_{(S)}\Big) = 1 - (1 - p_i)^{n_{(S)}}.\end{equation*}
Also:
\begin{align*}
\mathbb{P}\Big((i,j) \in \mathcal{V}_{(S)}^2\Big) &= 1 - \Big[(1 - p_i)^{n_{(S)}} + (1 - p_j)^{n_{(S)}} \nonumber \\ 
&- (1 - p_i - p_j)^{n_{(S)}}\Big].
\end{align*}
\end{theorem}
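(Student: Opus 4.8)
The plan is to exploit the crucial simplification afforded by sampling \emph{with} replacement, namely that the $n_{(S)}$ draws are mutually independent and that, as noted in Section 3.2, the probability of selecting node $i$ remains fixed at $p_i$ on every draw, regardless of previous outcomes. Under this setup, $i \in \mathcal{V}_{(S)}$ holds if and only if node $i$ is selected on \emph{at least one} of the $n_{(S)}$ draws, so I would phrase everything in terms of the complementary event that $i$ is \emph{never} drawn.

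First I would compute $\mathbb{P}(i \in \mathcal{V}_{(S)})$. On a single draw, the probability of \emph{not} selecting $i$ is $1 - p_i$; by independence across the $n_{(S)}$ draws, the probability that $i$ is avoided every time is $(1-p_i)^{n_{(S)}}$. Taking the complement immediately yields $\mathbb{P}(i \in \mathcal{V}_{(S)}) = 1 - (1-p_i)^{n_{(S)}}$, the first claimed identity.

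For the pair $(i,j)$ with $i \neq j$, I would apply inclusion--exclusion to the complementary events. Writing $A = \{i \in \mathcal{V}_{(S)}\}$ and $B = \{j \in \mathcal{V}_{(S)}\}$, one has $\mathbb{P}(A \cap B) = 1 - \mathbb{P}(A^c \cup B^c) = 1 - [\mathbb{P}(A^c) + \mathbb{P}(B^c) - \mathbb{P}(A^c \cap B^c)]$. The first two terms are $(1-p_i)^{n_{(S)}}$ and $(1-p_j)^{n_{(S)}}$ by the single-node computation above. The key remaining quantity is $\mathbb{P}(A^c \cap B^c)$, the probability that \emph{neither} $i$ nor $j$ is ever selected: since a single draw avoids both $i$ and $j$ with probability $1 - p_i - p_j$ (using that $i \neq j$, so the two selection events are disjoint on a single draw), independence gives $\mathbb{P}(A^c \cap B^c) = (1 - p_i - p_j)^{n_{(S)}}$. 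Substituting these three terms reproduces the stated formula.

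I do not anticipate a genuine obstacle here: the entire argument is a routine independence-plus-inclusion--exclusion computation. The only points requiring mild care are justifying that the per-draw selection probabilities stay fixed at $p_i$ (which is exactly the with-replacement assumption of Section 3.2) and correctly identifying that, on a single draw, avoiding both $i$ and $j$ has probability $1 - p_i - p_j$ because at most one node is selected per draw. The contrast with the without-replacement case of Proposition 2 --- where the draws are dependent and the sampling probabilities must be renormalized after each pick --- is precisely what makes the with-replacement formulas so clean, and is the reason I would treat this variant first.
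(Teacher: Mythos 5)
Your proof is correct and follows essentially the same route as the paper's: the complement-of-avoidance argument with independence across draws for the single-node case, and inclusion--exclusion on the complementary events $\{i \notin \mathcal{V}_{(S)}\}$, $\{j \notin \mathcal{V}_{(S)}\}$ with the joint avoidance probability $(1-p_i-p_j)^{n_{(S)}}$ for the pair case. Your explicit remark that the per-draw selection events for $i$ and $j$ are disjoint (so the avoidance probability is $1-p_i-p_j$) is a point the paper leaves implicit, but the argument is otherwise identical.
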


\begin{proof}
In this setting, sampling probabilities are independent of previous sampling steps, and remain fixed to $p_i$.  Therefore, for node $i \in \mathcal{V}$, we have:
$$\mathbb{P}\Big(i \notin \mathcal{V}_{(S)}\Big) = (1 - p_i)^{n_{(S)}}.$$
Indeed, for $i$ \textit{not} to belong to $\mathcal{V}_{(S)}$, it must not be selected at any of the $n_{(S)}$ draws, which happens with probability $1 - p_i$ for each draw. Therefore:
$$\mathbb{P}\Big(i \in \mathcal{V}_{(S)}\Big) = 1 - (1 - p_i)^{n_{(S)}}.$$

Moreover, let $i$ and $j$ denote two distinct nodes from the original graph $\mathcal{G}$: $(i,j) \in \mathcal{V}^2$. We have:
\begin{align*}
&\mathbb{P}\Big((i,j) \notin \mathcal{V}_{(S)}^2\Big) =
\mathbb{P}\Big(i \notin \mathcal{V}_{(S)} \text{ or } j \notin \mathcal{V}_{(S)}\Big) \\
&= \mathbb{P}\Big(i \notin \mathcal{V}_{(S)}\Big) + \mathbb{P}\Big(j \notin \mathcal{V}_{(S)}\Big) - \mathbb{P}\Big(i \notin \mathcal{V}_{(S)}, j \notin \mathcal{V}_{(S)}\Big)
\end{align*}
with, using the previous result, $\mathbb{P}(i \notin \mathcal{V}_{(S)}) = (1 - p_i)^{n_{(S)}}$ and $\mathbb{P}(j \notin \mathcal{V}_{(S)}) = (1 - p_j)^{n_{(S)}}$. Using a similar argument, we also obtain:
$$\mathbb{P}\Big(i \notin \mathcal{V}_{(S)}, j \notin \mathcal{V}_{(S)}\Big) = \Big(1 - (p_i + p_j)\Big)^{n_{(S)}}.$$
Therefore:
\begin{align*}
\mathbb{P}\Big((i,j) \notin \mathcal{V}_{(S)}^2\Big) &= \Big[(1 - p_i)^{n_{(S)}} + (1 - p_j)^{n_{(S)}} \\
&- (1 - p_i - p_j)^{n_{(S)}}\Big].
\end{align*}
And:
\begin{align*}
\begin{split}
\mathbb{P}\Big((i,j) \in \mathcal{V}_{(S)}^2\Big) &= 1 - \mathbb{P}\Big((i,j) \notin \mathcal{V}_{(S)}^2\Big) \\ &= 1 - \Big[(1 - p_i)^{n_{(S)}} \\
&+ (1 - p_j)^{n_{(S)}} \\
&- (1 - p_i - p_j)^{n_{(S)}}\Big].
\end{split}
\end{align*}
Last, for self-loops:
$$\mathbb{P}\Big((i,i) \in \mathcal{V}_{(S)}^2\Big) = \mathbb{P}\Big(i \in \mathcal{V}_{(S)}\Big) = 1 - (1 - p_i)^{n_{(S)}}.$$
\end{proof}

\begin{theorem}
Let $\mathcal{G}_{(S)} = (\mathcal{V}_{(S)},\mathcal{E}_{(S)})$ be a subgraph of $\mathcal{G}$ obtained from sampling $n_{(S)}$ nodes \textbf{without} replacement using the node sampling strategy of FastGAE. Let $i$ and $j$ denote two distinct nodes from $\mathcal{G}$: $(i,j) \in \mathcal{V}^2$. Then:
\begin{equation*}\mathbb{P}\Big(i \in \mathcal{V}_{(S)}\Big) = \sum_{\mathcal{U} \in \mathcal{\textbf{U}}(i)} p_{u_1} \prod_{k=2}^{n_{(S)}} \frac{p_{u_k}}{1 - \sum_{k'=1}^{k-1} p_{u_{k'}}},\end{equation*}
where $\mathcal{\textbf{U}}(i) = \{\mathcal{U} \subset \mathcal{V}, |\mathcal{U}| = n_{(S)} \text{ and } i \in \mathcal{U}\}$ is the set of all \textbf{ordered} subsets of $n_{(S)}$ distinct nodes including node $i$. For a given set $\mathcal{U} \in \mathcal{\textbf{U}}(i)$, we denote by $(u_1, u_2,...,u_{n_{(S)}})$ its ordered elements. Also,
\begin{equation*}\mathbb{P}\Big((i,j) \in \mathcal{V}_{(S)}^2\Big) = 
\sum_{\mathcal{U} \in \mathcal{\textbf{U}}(i) \cap \textbf{U}(j)} p_{u_1} \prod_{k=2}^{n_{(S)}} \frac{p_{u_k}}{1 - \sum_{k'=1}^{k-1} p_{u_{k'}}}.\end{equation*}
\end{theorem}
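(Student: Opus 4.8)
The plan is to compute directly the probability of each \emph{elementary outcome} of the procedure — a complete ordered draw of $n_{(S)}$ distinct nodes — and then recover the events $\{i \in \mathcal{V}_{(S)}\}$ and $\{(i,j)\in\mathcal{V}_{(S)}^2\}$ by summing the probabilities of the outcomes that realize them. First I would fix an ordered tuple $\mathcal{U}=(u_1,\dots,u_{n_{(S)}})$ of distinct nodes and apply the chain rule to the sequential sampling of Section 3.2. The first draw equals $u_1$ with probability $p_{u_1}$, since $\sum_{j\in\mathcal{V}} p_j = 1$. Conditionally on having already drawn $u_1,\dots,u_{k-1}$, the renormalization at step $k$ is over the surviving nodes $\mathcal{V}\setminus\{u_1,\dots,u_{k-1}\}$, whose total mass is $1 - \sum_{k'=1}^{k-1} p_{u_{k'}}$; hence $u_k$ is picked with conditional probability $p_{u_k}/(1-\sum_{k'=1}^{k-1}p_{u_{k'}})$. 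Multiplying these conditional probabilities along the draw yields
$$\mathbb{P}\big(\text{draw}=\mathcal{U}\big) = p_{u_1}\prod_{k=2}^{n_{(S)}}\frac{p_{u_k}}{1-\sum_{k'=1}^{k-1}p_{u_{k'}}},$$
which is precisely the summand appearing in the statement.

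Next I would observe that, as $\mathcal{U}$ ranges over all ordered tuples of $n_{(S)}$ distinct nodes, these elementary outcomes are \emph{mutually exclusive} and their probabilities sum to one, so that for any event it suffices to add the probabilities of the outcomes it contains. The event $\{i \in \mathcal{V}_{(S)}\}$ is realized exactly by those draws whose node set contains $i$, i.e.\ by the tuples indexed by $\mathbf{U}(i)$; summing the single-draw probability over $\mathbf{U}(i)$ gives the first claimed formula. For the pair, $\{(i,j)\in\mathcal{V}_{(S)}^2\}$ requires both $i$ and $j$ to appear in the draw, so the relevant outcomes are those in $\mathbf{U}(i)\cap\mathbf{U}(j)$, and summing over this index set yields the second formula; the self-loop case $(i,i)$ collapses back to the single-node expression.

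Since each step is an elementary conditional-probability computation, the main obstacle is bookkeeping rather than any analytic difficulty. The delicate point is that $\mathbf{U}(i)$ must range over \emph{ordered} tuples, not unordered sets, so that each physical draw is counted exactly once and no double counting occurs; one must also check that the telescoping normalizers $1-\sum_{k'=1}^{k-1}p_{u_{k'}}$ stay strictly positive, which holds since $n_{(S)}<n$ and the $p_j$ sum to one, leaving positive residual mass at every step. Unlike the with-replacement case of Proposition 1, no inclusion–exclusion argument is needed here, because the disjoint decomposition into ordered elementary outcomes already isolates the desired events.
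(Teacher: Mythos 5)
Your proposal is correct and follows essentially the same route as the paper's proof: both compute the probability of a full ordered draw via the chain rule (with the telescoping normalizers $1-\sum_{k'}p_{u_{k'}}$), then sum these mutually exclusive elementary outcomes over $\mathbf{U}(i)$, respectively $\mathbf{U}(i)\cap\mathbf{U}(j)$, to recover the two events. Your added remarks on ordered-versus-unordered bookkeeping and strict positivity of the normalizers are sound refinements of the same argument, not a different approach.
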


\begin{proof}
We are looking for the probability that a node $i  \in \mathcal{V}$ from the graph belongs to a drawn subset $\mathcal{V}_{(S)}$, that contains $n_{(S)}$ distinct nodes. For $\mathcal{V}_{(S)}$ to include $i$, $\mathcal{V}_{(S)}$ should match any of the possible ordered subsets of $n_{(S)}$ nodes that include node $i$. In this setting where we sample without replacement, the probability to draw node $i$ \textit{depends on nodes previously drawn}. All possible orders of sampling the nodes should be considered. Let:
$$\mathcal{\textbf{U}}(i) = \Big\{\mathcal{U} \subset \mathcal{V}, |\mathcal{U}| = n_{(S)} \text{ and } i \in \mathcal{U}\Big\}$$
denote the set of all \textbf{ordered} subsets of $n_{(S)}$ distinct nodes that include node $i$. With such notations:
$$\mathbb{P}\Big(i \in \mathcal{V}_{(S)}\Big) = \mathbb{P}\Big(\mathcal{V}_{(S)} \in \mathcal{\textbf{U}}(i) \Big) = \sum_{\mathcal{U} \in \mathcal{\textbf{U}}(i)} \mathbb{P}\Big(\mathcal{V}_{(S)} = \mathcal{U}\Big).$$

The summation comes from the fact that events are \textit{disjoint} ($\mathcal{V}_{(S)}$ can not match two of these ordered subsets simultaneously.

Now, for a given set $\mathcal{U} \in \mathcal{\textbf{U}}(i)$, let us denote by $(u_1, u_2,...,u_{n_{(S)}})$ its \textbf{ordered} elements. Also, let $(\mathcal{V}_{(S)1}, \mathcal{V}_{(S) 2},...,\mathcal{V}_{(S) n_{(S)}})$ be the $n_{(S)}$ ordered nodes of set $\mathcal{G}_{(S)}$ (i.e. $\mathcal{V}_{(S) 1}$ is the first drawn node, $\mathcal{V}_{(S) 2}$ is the second one, etc). We have:

\begin{align*}
&\mathbb{P}\Big(\mathcal{V}_{(S)} = \mathcal{U} \Big) = \mathbb{P}\Big(\mathcal{V}_{(S) 1} = u_1, \mathcal{V}_{(S)2} = u_2,..., \mathcal{V}_{(S) n_{(S)}} = u_{n_{(S)}} \Big)\\
&= \mathbb{P}(\mathcal{V}_{(S) 1} = u_1) \prod_{k=2}^{n_{(S)}}  \mathbb{P}(\mathcal{V}_{(S) k} = u_k | \mathcal{V}_{(S) k-1} = u_{k-1},..., \mathcal{V}_{(S) 1} = u_1) \\
&= p_{u_1} \prod_{k=2}^{n_{(S)}} \frac{p_{u_k}}{1 - \sum_{k'=1}^{k-1} p_{u_{k'}}}.
\end{align*}
Therefore, by summing elements to come back to $\mathbb{P}(i \in \mathcal{V}_{(S)})$:
$$\mathbb{P}\Big(i \in \mathcal{V}_{(S)}\Big) = \sum_{\mathcal{U} \in \mathcal{\textbf{U}}(i)} p_{u_1} \prod_{k=2}^{n_{(S)}} \frac{p_{u_k}}{1 - \sum_{k'=1}^{k-1} p_{u_{k'}}}.$$

Moreover, let $i$ and $j$ denote two distinct nodes from the original graph $\mathcal{G}$: $(i,j) \in \mathcal{V}^2$. Using similar notations and reasoning, we get:
\begin{align*}
\mathbb{P}\Big((i,j) \in \mathcal{V}_{(S)}^2\Big) &= \mathbb{P}\Big(i \in \mathcal{V}_{(S)},j \in \mathcal{V}_{(S)} \Big) \\
&= \sum_{\mathcal{U} \in \mathcal{\textbf{U}}(i) \cap \textbf{U}(j)} \mathbb{P}\Big(\mathcal{V}_{(S)} = \mathcal{U} \Big).
\end{align*}
Therefore:
$$\mathbb{P}\Big((i,j) \in \mathcal{V}_{(S)}^2\Big) = 
\sum_{\mathcal{U} \in \mathcal{\textbf{U}}(i) \cap \textbf{U}(j)} p_{u_1} \prod_{k=2}^{n_{(S)}} \frac{p_{u_k}}{1 - \sum_{k'=1}^{k-1} p_{u_{k'}}}.$$
And, for self-loops, $\mathbb{P}((i,i) \in \mathcal{V}_{(S)}^2) = \mathbb{P}(i \in \mathcal{V}_{(S)})$.
\end{proof}

\begin{theorem}
Using the expressions of Proposition 1 (with replacement) or Proposition 2 (without replacement):
\begin{equation*}\mathbb{E}\Big[\mathcal{L}^{\text{\tiny FastGAE}}\Big] = \frac{1}{n_{(S)}^2}\sum_{(i,j) \in \mathcal{V}^2} \mathbb{P}\Big((i,j) \in \mathcal{V}_{(S)}^2\Big) \mathcal{L}_{ij}(A_{ij},\hat{A}_{ij}).\end{equation*}
\end{theorem}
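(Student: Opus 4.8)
\section*{Proof Proposal for Proposition 3}

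The plan is to reduce this to a direct application of linearity of expectation, exploiting the fact that, at a fixed training iteration, the only source of randomness is the subgraph draw $\mathcal{V}_{(S)}$. I would begin from the definition of $\mathcal{L}^{\text{\tiny FastGAE}}$ in equation 12, namely
\begin{equation*}
\mathcal{L}^{\text{\tiny FastGAE}} = \frac{1}{n_{(S)}^2}\sum_{(i,j) \in \mathcal{V}^{2}} \mathds{1}_{((i,j) \in \mathcal{V}_{(S)}^2)}\,\mathcal{L}_{ij}(A_{ij},\hat{A}_{ij}),
\end{equation*}
and take the expectation over the random sampling of $\mathcal{V}_{(S)}$.

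The key observation, which I would state explicitly before computing, is that the expectation is taken \emph{conditionally on the current embedding matrix} $Z$. Recall from the FastGAE scheme that a full GCN forward pass is performed first, producing $Z$ deterministically, and only \emph{afterwards} is the subgraph $\mathcal{G}_{(S)}$ drawn. Consequently, for every pair $(i,j)$, both the normalizing factor $1/n_{(S)}^2$ and the per-pair loss term $\mathcal{L}_{ij}(A_{ij},\hat{A}_{ij}) = -[A_{ij}\log(\hat{A}_{ij}) + (1-A_{ij})\log(1-\hat{A}_{ij})]$ are constants with respect to the sampling randomness: the ground-truth entry $A_{ij}$ is fixed, and $\hat{A}_{ij} = \sigma(z_i^T z_j)$ depends only on the (fixed) embeddings. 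The sole random quantity in the sum is therefore the indicator $\mathds{1}_{((i,j) \in \mathcal{V}_{(S)}^2)}$.

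Given this, I would apply linearity of expectation to pass $\mathbb{E}[\cdot]$ inside the finite sum over $\mathcal{V}^2$, pull the deterministic factors out of each expectation, and use the elementary identity $\mathbb{E}\big[\mathds{1}_{((i,j) \in \mathcal{V}_{(S)}^2)}\big] = \mathbb{P}\big((i,j) \in \mathcal{V}_{(S)}^2\big)$ for Bernoulli indicators. This yields
\begin{equation*}
\mathbb{E}\big[\mathcal{L}^{\text{\tiny FastGAE}}\big] = \frac{1}{n_{(S)}^2}\sum_{(i,j) \in \mathcal{V}^2} \mathbb{P}\big((i,j) \in \mathcal{V}_{(S)}^2\big)\,\mathcal{L}_{ij}(A_{ij},\hat{A}_{ij}),
\end{equation*}
which is exactly the claimed expression. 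The pairwise probabilities $\mathbb{P}((i,j) \in \mathcal{V}_{(S)}^2)$ are then supplied by Proposition~1 in the with-replacement regime and by Proposition~2 in the without-replacement regime, so the formula holds under either sampling variant.

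There is no genuine analytical obstacle here: the result is a one-line consequence of linearity of expectation once the conditioning is set up. The only point requiring care is precisely that conditioning step, i.e.\ making clear that $\mathcal{L}_{ij}$ is non-random relative to the subgraph draw so that it factors out of the expectation; overlooking this would be the sole way to misstate the argument. I would therefore emphasize the forward-pass-then-sample ordering, and I would also note in passing that the self-loop terms $(i,i)$ are handled consistently, since Propositions~1 and~2 give $\mathbb{P}((i,i) \in \mathcal{V}_{(S)}^2) = \mathbb{P}(i \in \mathcal{V}_{(S)})$, so they contribute correctly to the same sum over all of $\mathcal{V}^2$.
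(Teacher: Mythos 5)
Your proposal is correct and follows essentially the same argument as the paper's proof: linearity of expectation over the finite sum, the identity $\mathbb{E}[\mathds{1}_{((i,j) \in \mathcal{V}_{(S)}^2)}] = \mathbb{P}((i,j) \in \mathcal{V}_{(S)}^2)$, and substitution of the probabilities from Propositions 1 and 2. Your explicit remark that the expectation is taken conditionally on the fixed embedding matrix $Z$ (so that $\mathcal{L}_{ij}$ is deterministic w.r.t.\ the sampling) is a point the paper leaves implicit, but it does not change the argument.
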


\begin{proof}
We have:
\begin{align*}
\mathbb{E}\Big[\mathcal{L}^{\text{\tiny FastGAE}}\Big] &= \mathbb{E}\Big[\frac{1}{n_{(S)}^2}\sum_{(i,j) \in \mathcal{V}^{2}} \mathds{1}_{((i,j) \in \mathcal{V}_{(S)}^2)}\mathcal{L}_{ij}(A_{ij},\hat{A}_{ij})\Big] \\
&= \frac{1}{n_{(S)}^2}\sum_{(i,j) \in \mathcal{V}^{2}} \mathbb{E}\Big[\mathds{1}_{((i,j) \in \mathcal{V}_{(S)}^2)}\Big]\mathcal{L}_{ij}(A_{ij},\hat{A}_{ij}) \\
&= \frac{1}{n_{(S)}^2}\sum_{(i,j) \in \mathcal{V}^2} \mathbb{P}\Big((i,j) \in \mathcal{V}_{(S)}^2\Big) \mathcal{L}_{ij}(A_{ij},\hat{A}_{ij}).
\end{align*}
By replacing $\mathbb{P}((i,j) \in \mathcal{V}_{(S)}^2)$ by the expressions of Proposition 1 (with replacement) or Proposition 2 (without replacement), we obtain an explicit formulation for $\mathbb{E}\Big[\mathcal{L}^{\text{\tiny FastGAE}}\Big]$.
\end{proof}

\section{On the Selection of $n_{(S)}$}

\begin{theorem}
Let us consider a training iteration of the FastGAE framework, a sampled subgraph $\mathcal{G}_{(S)} = (\mathcal{V}_{(S)},\mathcal{E}_{(S)})$, with $|\mathcal{V}_{(S)}| = n_{(S)} < n$ nodes sampled without replacement, and the corresponding node-level approximate reconstruction computed for a given node $i$:
$$\mathcal{L}^{\text{\tiny FastGAE}}(i) =  \frac{1}{n_{(S)}} \sum_{j \in \mathcal{V}} \mathds{1}_{(j \in \mathcal{V}_{(S)})} \mathcal{L}_{ij}(A_{ij},\hat{A}_{ij}),$$
with the random variable $\mathds{1}_{(j \in \mathcal{V}_{(S)})} = 1$ if node $j \in \mathcal{V}_{(S)}$ and $0$ otherwise, with $A_{ij} \in \{0,1\}$ for all $(i,j) \in \mathcal{V}^2$ and with:
$$\mathcal{L}_{ij}(A_{ij},\hat{A}_{ij}) = - [A_{ij}\log(\hat{A}_{ij}) + (1-A_{ij})\log(1 - \hat{A}_{ij})].$$
Then, under Assumption 1 from Section 3.3.3, for any $\gamma \geq 0$, we have:
\begin{small}
\begin{equation*}
\mathbb{P}(|\mathcal{L}^{\text{\tiny FastGAE}}(i) - \mathbb{E}[\mathcal{L}^{\text{\tiny FastGAE}}(i)]| \geq \gamma) \leq 2~\text{exp}\Big( - 2(\frac{\gamma}{\log(\varepsilon)})^2 \frac{n_{(S)}^2}{n}\Big).
\end{equation*}
\end{small}
\end{theorem}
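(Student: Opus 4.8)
\section*{Proof proposal}

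The plan is to recognize $\mathcal{L}^{\text{\tiny FastGAE}}(i)$ as a rescaled sum of $n$ bounded random variables and to apply a Hoeffding-type concentration inequality, matching the target exponent through careful bookkeeping of the normalization. First I would use Assumption 1 to bound each summand. Since $\hat{A}_{ij} \in [\varepsilon, 1-\varepsilon]$, both $-\log \hat{A}_{ij}$ and $-\log(1-\hat{A}_{ij})$ lie in $[-\log(1-\varepsilon), -\log\varepsilon]$; as exactly one of $A_{ij}$ and $1-A_{ij}$ equals $1$, the cross-entropy term satisfies $0 < -\log(1-\varepsilon) \le \mathcal{L}_{ij}(A_{ij},\hat{A}_{ij}) \le -\log\varepsilon$. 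Consequently, each term $Z_j := \mathds{1}_{(j\in\mathcal{V}_{(S)})}\,\mathcal{L}_{ij}(A_{ij},\hat{A}_{ij})$ takes values in $[0,-\log\varepsilon]$, so its range is at most $-\log\varepsilon = |\log\varepsilon|$.

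Next I would write $\mathcal{L}^{\text{\tiny FastGAE}}(i) = \frac{1}{n_{(S)}}\sum_{j\in\mathcal{V}} Z_j$, a sum of $n$ terms (only $n_{(S)}$ of which are nonzero) rescaled by $1/n_{(S)}$. Applying a Hoeffding bound to the centred sum $\sum_{j}(Z_j - \mathbb{E}Z_j)$, whose squared summand ranges contribute $\sum_{j\in\mathcal{V}} (\log\varepsilon)^2 = n(\log\varepsilon)^2$, yields for any $t \ge 0$ that $\mathbb{P}(|\sum_j Z_j - \mathbb{E}\sum_j Z_j| \ge t) \le 2\exp(-2t^2/(n(\log\varepsilon)^2))$. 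Because a deviation of $\gamma$ in the average $\mathcal{L}^{\text{\tiny FastGAE}}(i)$ corresponds to a deviation of $t = n_{(S)}\gamma$ in the sum, substituting $t = n_{(S)}\gamma$ gives $2\exp(-2 n_{(S)}^2\gamma^2/(n(\log\varepsilon)^2)) = 2\exp(-2(\gamma/\log\varepsilon)^2\, n_{(S)}^2/n)$, which is exactly the claimed bound.

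The main obstacle is that the summands $Z_j$ are \emph{not} independent: sampling $n_{(S)}$ nodes without replacement forces $\sum_{j}\mathds{1}_{(j\in\mathcal{V}_{(S)})} = n_{(S)}$, so the indicators are negatively dependent. The naive form of Hoeffding's inequality does not apply directly, and the crux of the argument is to invoke the variant valid for sampling without replacement (Hoeffding, 1963): the indicators of a without-replacement sample are negatively associated, so the moment generating function of $\sum_j Z_j$ is dominated by the product of the individual ones (each $Z_j = c_j\,\mathds{1}_{(j\in\mathcal{V}_{(S)})}$ with $c_j \ge 0$ being a monotone function of a single indicator), and the Chernoff argument underlying Hoeffding's lemma then goes through with the same sub-Gaussian proxy $\sum_j (b_j-a_j)^2/4$. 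I expect this negative-dependence step to be the only delicate point; everything else is the bounded-range estimate and the rescaling. I would also note that this is precisely why the statement is confined to the node level: extending it to the full loss $\mathcal{L}^{\text{\tiny FastGAE}}$, which sums over node \emph{pairs}, introduces richer overlap dependencies among the pair indicators that the without-replacement Hoeffding theorem no longer controls.
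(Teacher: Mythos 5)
Your proposal is correct and follows essentially the same route as the paper's proof: bound each summand in $[0,-\log\varepsilon]$ (up to the $1/n_{(S)}$ normalization) via Assumption 1, then apply Hoeffding's inequality in its sampling-without-replacement form from Hoeffding (1963), with the same squared-range bookkeeping yielding the exponent $-2(\gamma/\log\varepsilon)^2 n_{(S)}^2/n$. The only difference is cosmetic --- you apply the inequality to the unnormalized sum and rescale the deviation by $t = n_{(S)}\gamma$, whereas the paper absorbs $1/n_{(S)}$ into each summand's range --- and your negative-association remark simply makes explicit the justification the paper delegates to the citation.
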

We note that the right hand side term tends to 0 exponentially fast w.r.t. the deviation magnitude $\gamma$ and w.r.t. the subgraph size $n_{(S)}$.

\begin{proof}
As a preliminary, let us recall Hoeffding's inequality \cite{hoeffding1963probability}. Let $X_1, X_2..., X_n$ be real independent random variables verifying, for some $(a_k)_{1 \leq k \leq n}$ and $(b_k)_{1 \leq k \leq n}$ with $a_k < b_k$: $\forall k, \mathbb{P}(a_k \leq X_k \leq b_k) = 1.$ Let $S_n = \sum_{i=1}^n X_i.$ Then, for all $\gamma>0$, Hoeffding's inequality states that:
$$\mathbb{P}(|S_n - \mathbb{E}(S_n)| \geq t) \leq 2~\text{exp}\Big( - \frac{2\gamma^2}{\sum_{i=1}^n (b_i - a_i)^2} \Big).$$
\cite{hoeffding1963probability} also proves that the above inequality holds when the $X_i$ are samples without replacement from a finite population (and therefore not independent). In the setting of Proposition 4, that falls into this second case due to the node-level sampling scheme of FastGAE, we have:
$$\mathcal{L}^{\text{\tiny FastGAE}}(i) = \sum_{j \in \mathcal{V}} X_{ij},$$
where, under Assumption 1:
\begin{small}
\begin{align*}
X_{ij} &= \frac{1}{n_{(S)}} \mathds{1}_{(j \in \mathcal{V}_{(S)})}\mathcal{L}_{ij}(A_{ij},\hat{A}_{ij}) \\
 &=  \underbrace{\mathds{1}_{(j \in \mathcal{V}_{(S)})}}_{\in \{0,1\}} \underbrace{\frac{-1}{n_{(S)}} \underbrace{[A_{ij}\log(\hat{A}_{ij}) + (1-A_{ij})\log(1 - \hat{A}_{ij})]}_{\in [\log(\varepsilon),\log(1-\varepsilon)]}}_{\in [-\log(1-\varepsilon)/n_{(S)},-\log(\varepsilon)/n_{(S)}]} \\
 &\in \Big[0, \frac{-\log (\varepsilon)}{n_{(S)}}\Big].
\end{align*}
\end{small}
We note that $\frac{-\log (\varepsilon)}{n_{(S)}} >0$, as $0 < \varepsilon < 1$. Applying Hoeffding's inequality, at each sampling step and for all $\gamma >0$:
\begin{small}
\begin{align*}
    \mathbb{P}(|\mathcal{L}^{\text{\tiny FastGAE}}(i) - \mathbb{E}[\mathcal{L}^{\text{\tiny FastGAE}}(i)]| \geq \gamma) &\leq 2~\text{exp}\Big( - \frac{2\gamma^2}{\sum_{j \in \mathcal{V}} (\frac{-\log (\varepsilon)}{n_{(S)}})^2} \Big) \\
    &= 2~\text{exp}\Big( \frac{- 2\gamma^2}{n \frac{(-\log (\varepsilon))^2}{n_{(S)}^2}}\Big) \\
    &= 2~\text{exp}\Big( - 2(\frac{\gamma}{\log(\varepsilon)})^2 \frac{n_{(S)}^2}{n}\Big)
\end{align*}
\end{small}
We note that it exhibits the link between the deviation of the loss and the $\frac{n_{(S)}^2}{n}$ ratio.
\end{proof}

\begin{theorem}
For any confidence level $\alpha \in ]0,1[$ and node $i \in \mathcal{V}$, selecting a subgraph size $n_{(S)}$ such that
\begin{equation}
n_{(S)} \geq n^*_{(S)} = \sqrt{n} \underbrace{\sqrt{\frac{-\log (\frac{\alpha}{2})\log(\varepsilon)^2}{ 2\gamma^2}}}_{\text{denoted $C$ in eq. 10}}
\end{equation}
guarantees that
$$\mathbb{P}(|\mathcal{L}^{\text{\tiny FastGAE}}(i) - \mathbb{E}[\mathcal{L}^{\text{\tiny FastGAE}}(i)]| \geq \gamma) \leq \alpha.$$
\end{theorem}

\begin{proof}
This is a corollary of Proposition 4, from which we derive that, for any $\alpha \in ]0,1[$:
\begin{align*}
2~\text{exp}\Big( - 2(\frac{\gamma}{\log(\varepsilon)})^2 \frac{n_{(S)}^2}{n}\Big) &\leq \alpha \\
\Rightarrow \mathbb{P}(|\mathcal{L}^{\text{\tiny FastGAE}} - \mathbb{E}[\mathcal{L}^{\text{\tiny FastGAE}}]| \geq \gamma) &\leq \alpha.
\end{align*}
Then:
\begin{align*}
&2~\text{exp}\Big( - 2(\frac{\gamma}{\log(\varepsilon)})^2 \frac{n_{(S)}^2}{n}\Big) \leq \alpha \\
\Leftrightarrow~ &- 2(\frac{\gamma}{\log(\varepsilon)})^2 \frac{n_{(S)}^2}{n} \leq \log(\frac{\alpha}{2}) \\
\Leftrightarrow~ &n_{(S)} \geq \sqrt{n} \sqrt{\frac{-\log (\frac{\alpha}{2})\log(\varepsilon)^2}{ 2\gamma^2}}
\end{align*}
\end{proof}

\section{On the hyperparameter $\alpha$}
In this last appendix, we report the additional Figure C.6, presenting the optimal values of the hyperparameter $\alpha$, for all graphs, and for both core-based and degree-based sampling.

\begin{figure*}[!ht]
\centering
  \subfigure[Cora - Degree Sampling]{
  \scalebox{0.3}{\includegraphics{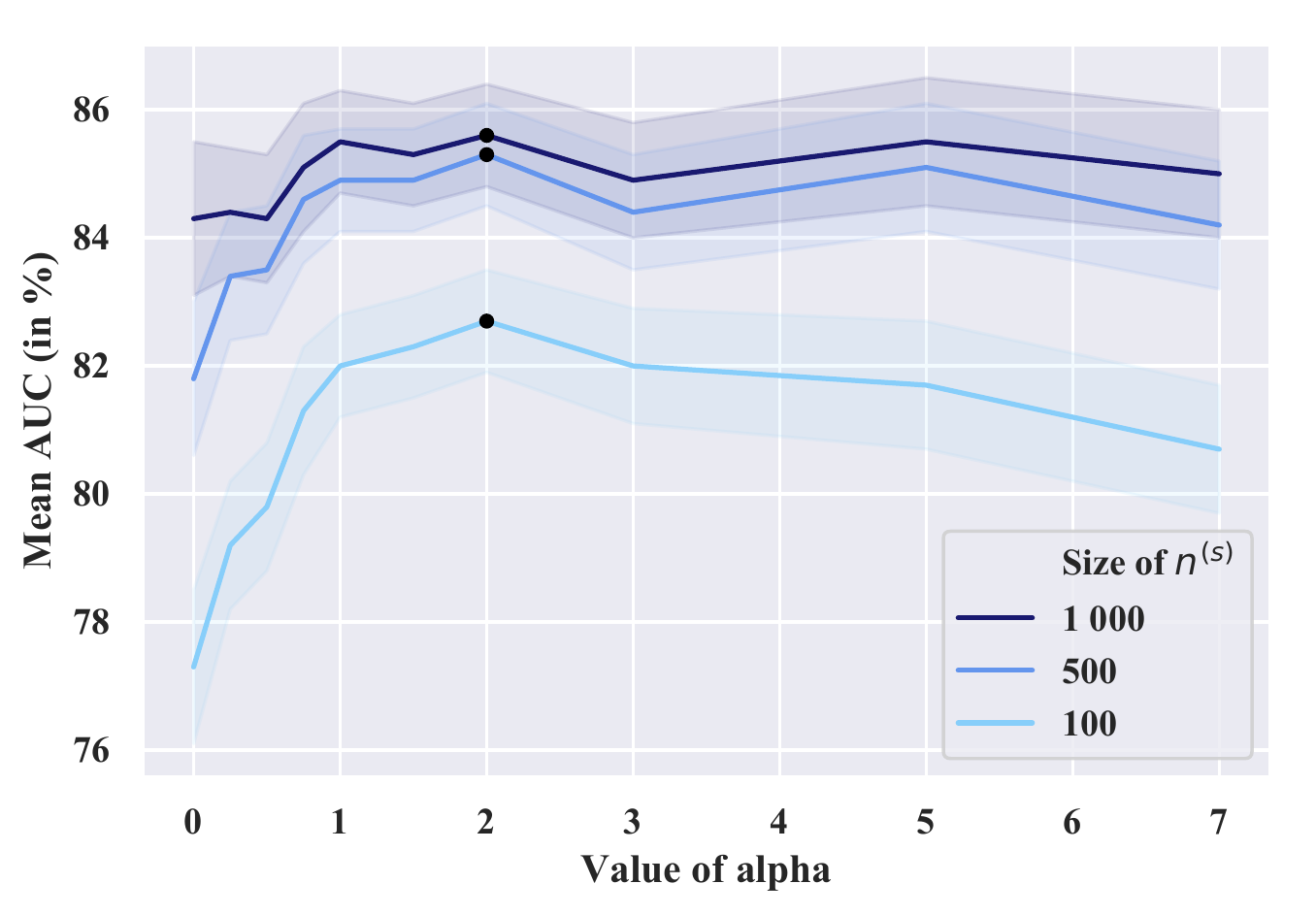}}}\subfigure[Citeseer - Degree Sampling]{
  \scalebox{0.3}{\includegraphics{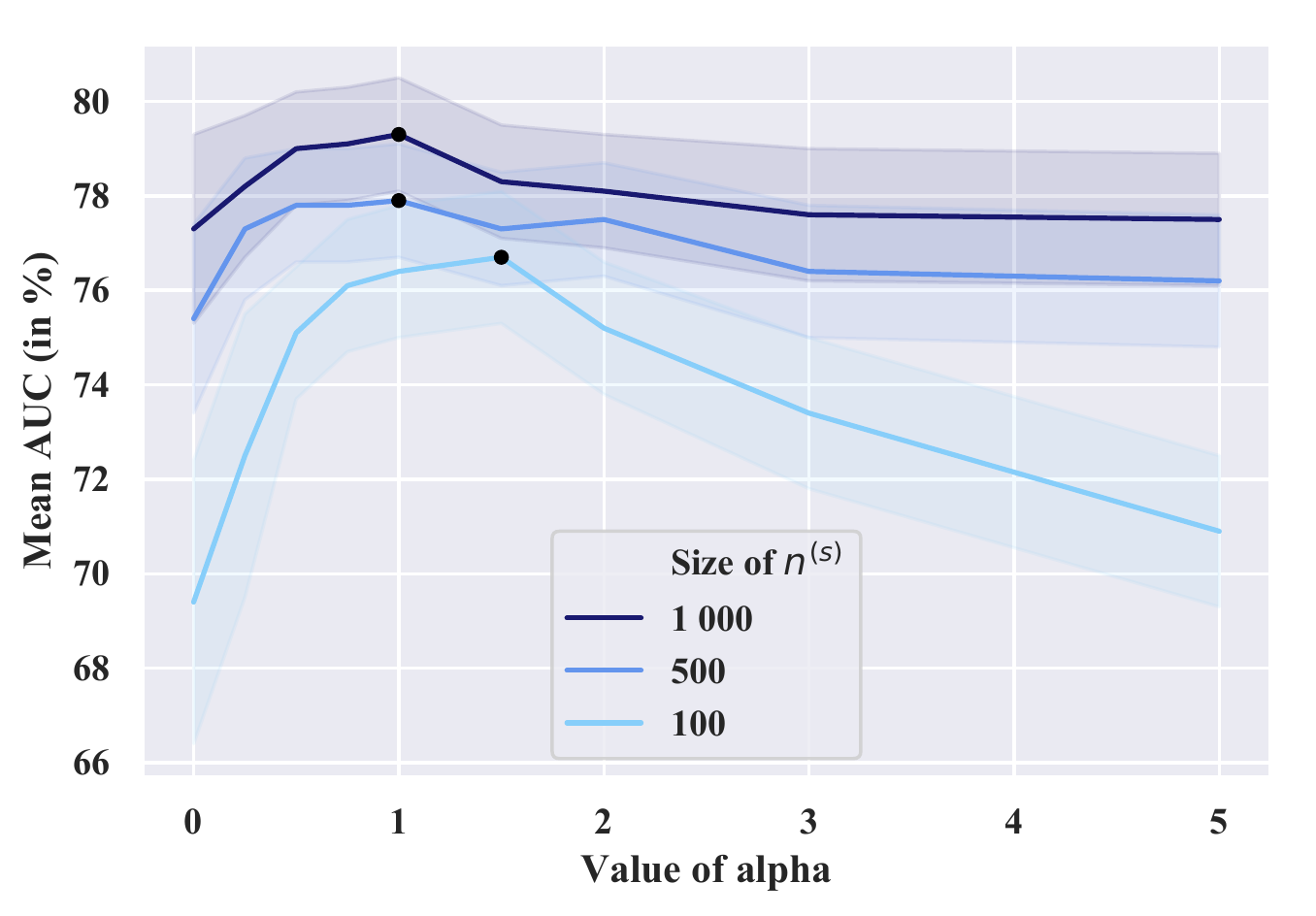}}}\subfigure[Pubmed - Degree Sampling]{
  \scalebox{0.3}{\includegraphics{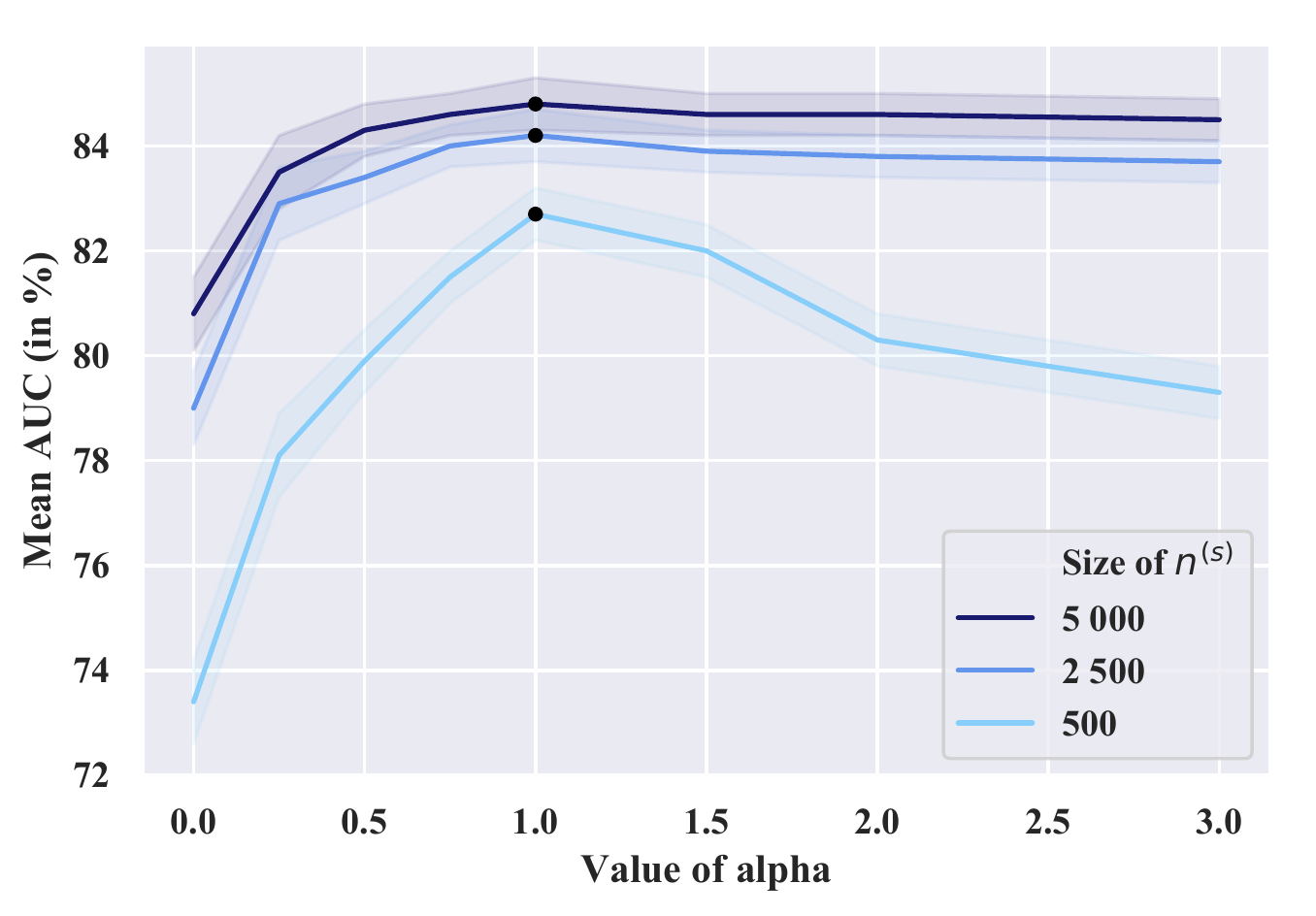}}}
   \subfigure[Cora - Core Sampling]{
  \scalebox{0.3}{\includegraphics{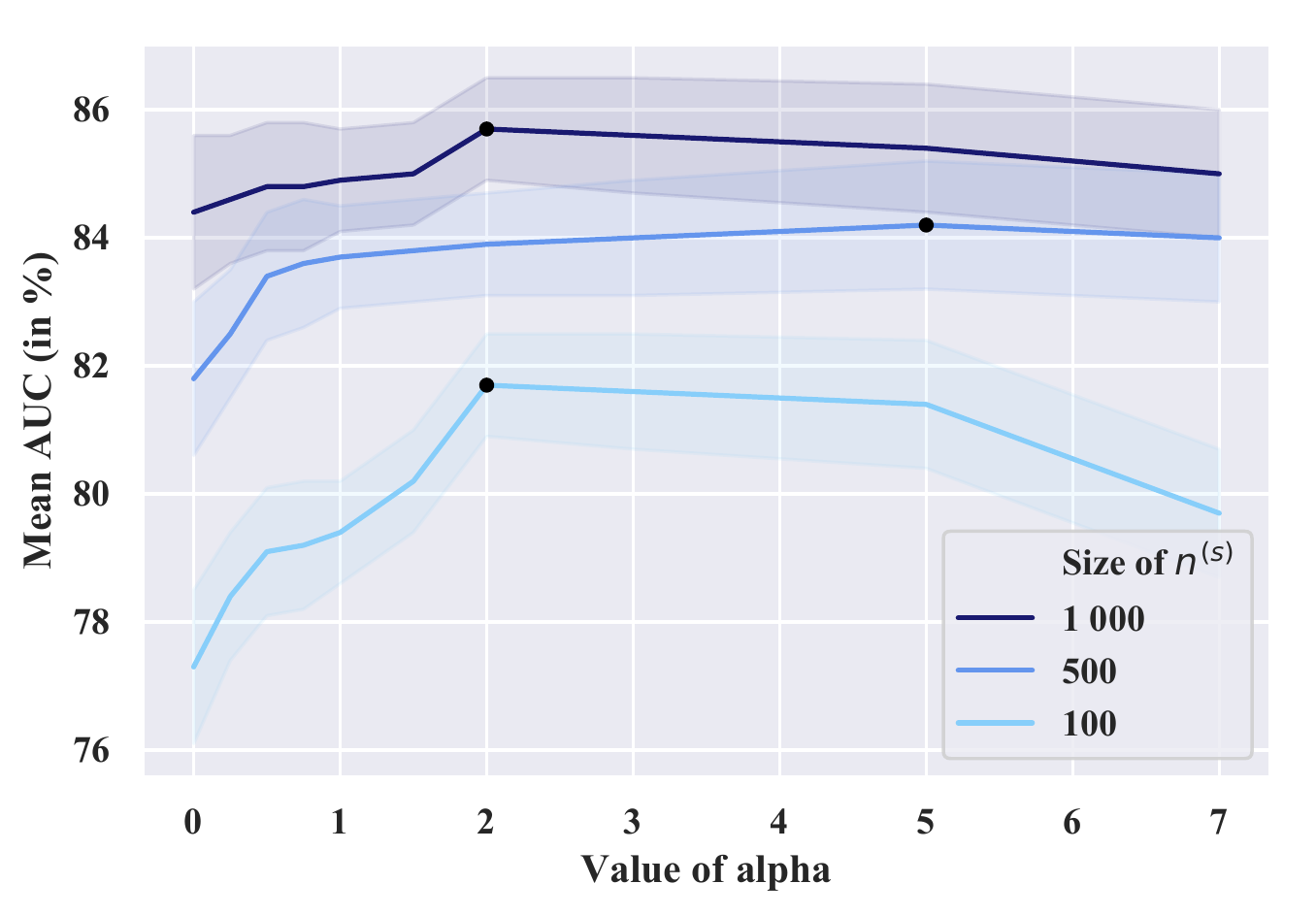}}}\subfigure[Citeseer - Core Sampling]{
  \scalebox{0.3}{\includegraphics{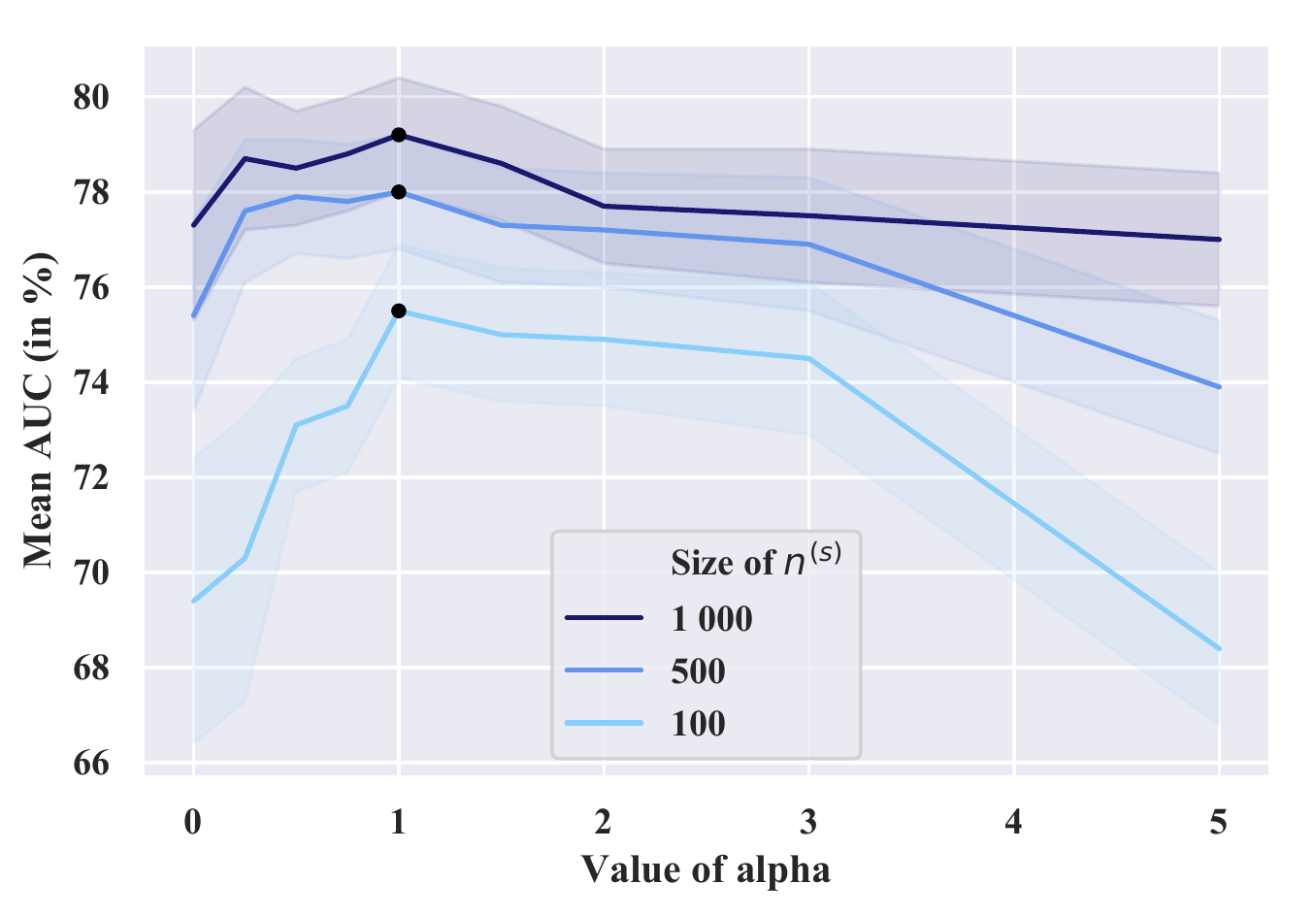}}}\subfigure[Pubmed - Core Sampling]{
  \scalebox{0.3}{\includegraphics{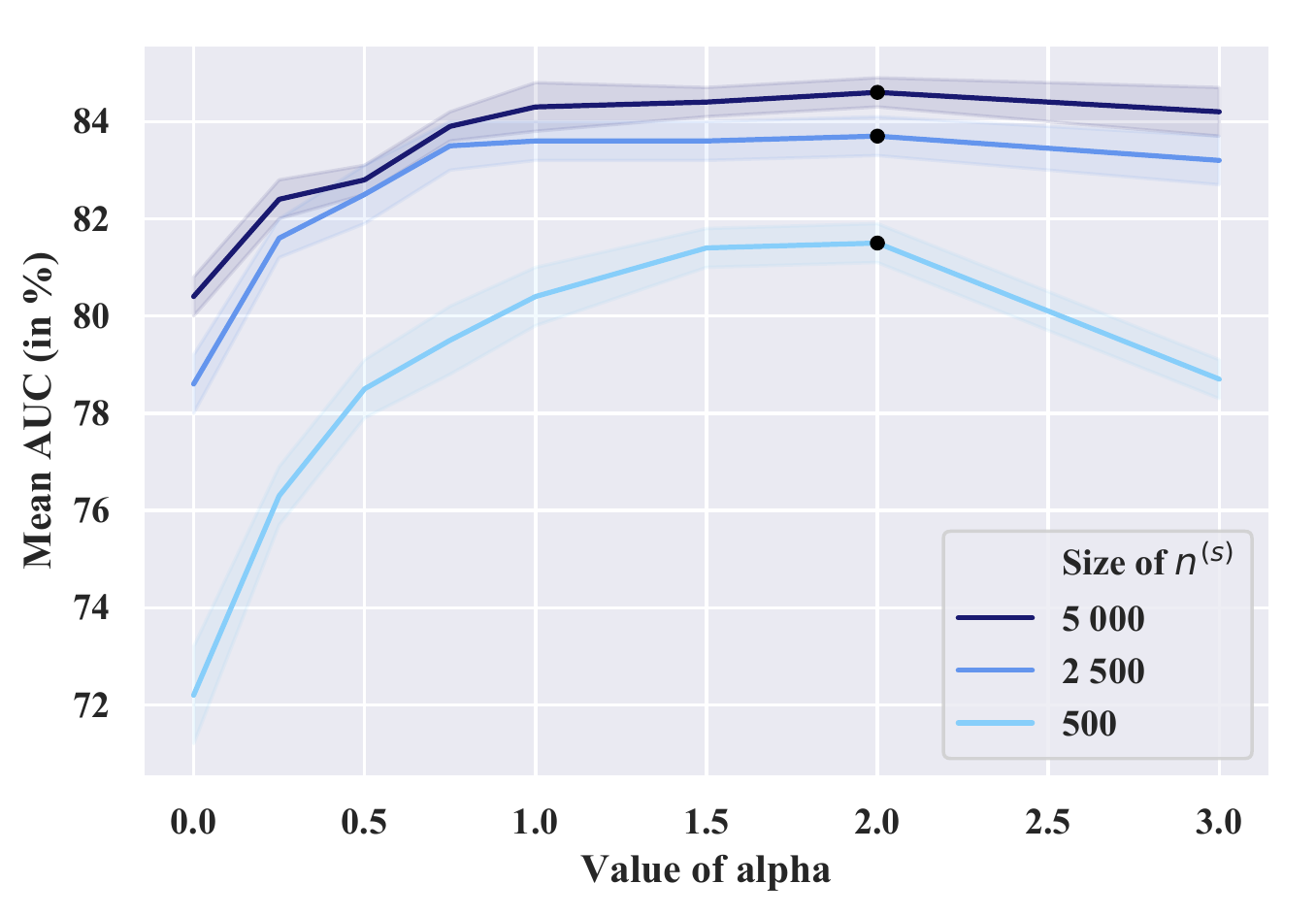}}}
     \subfigure[SBM - Degree Sampling]{
  \scalebox{0.3}{\includegraphics{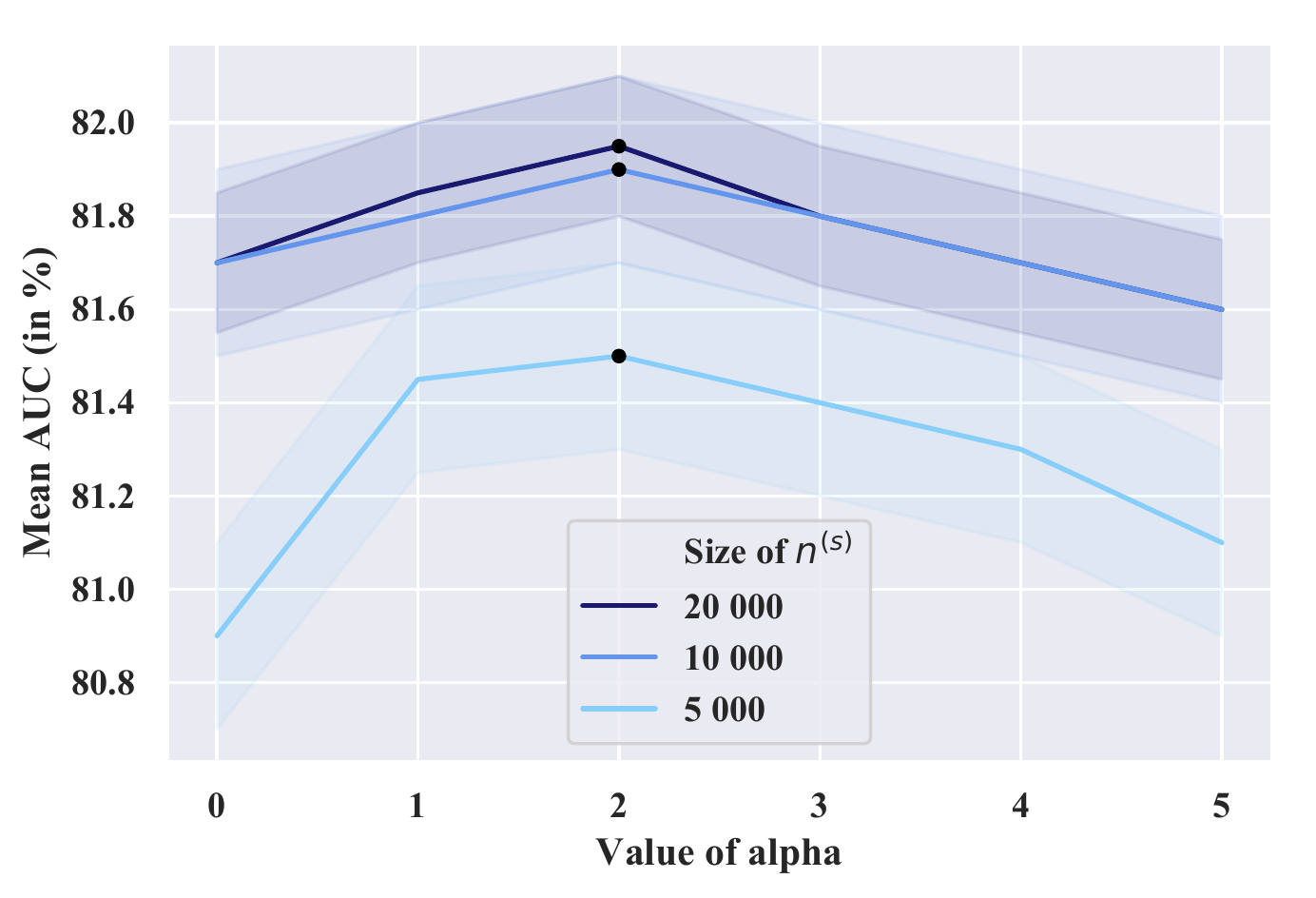}}}\subfigure[Google - Degree Sampling]{
  \scalebox{0.3}{\includegraphics{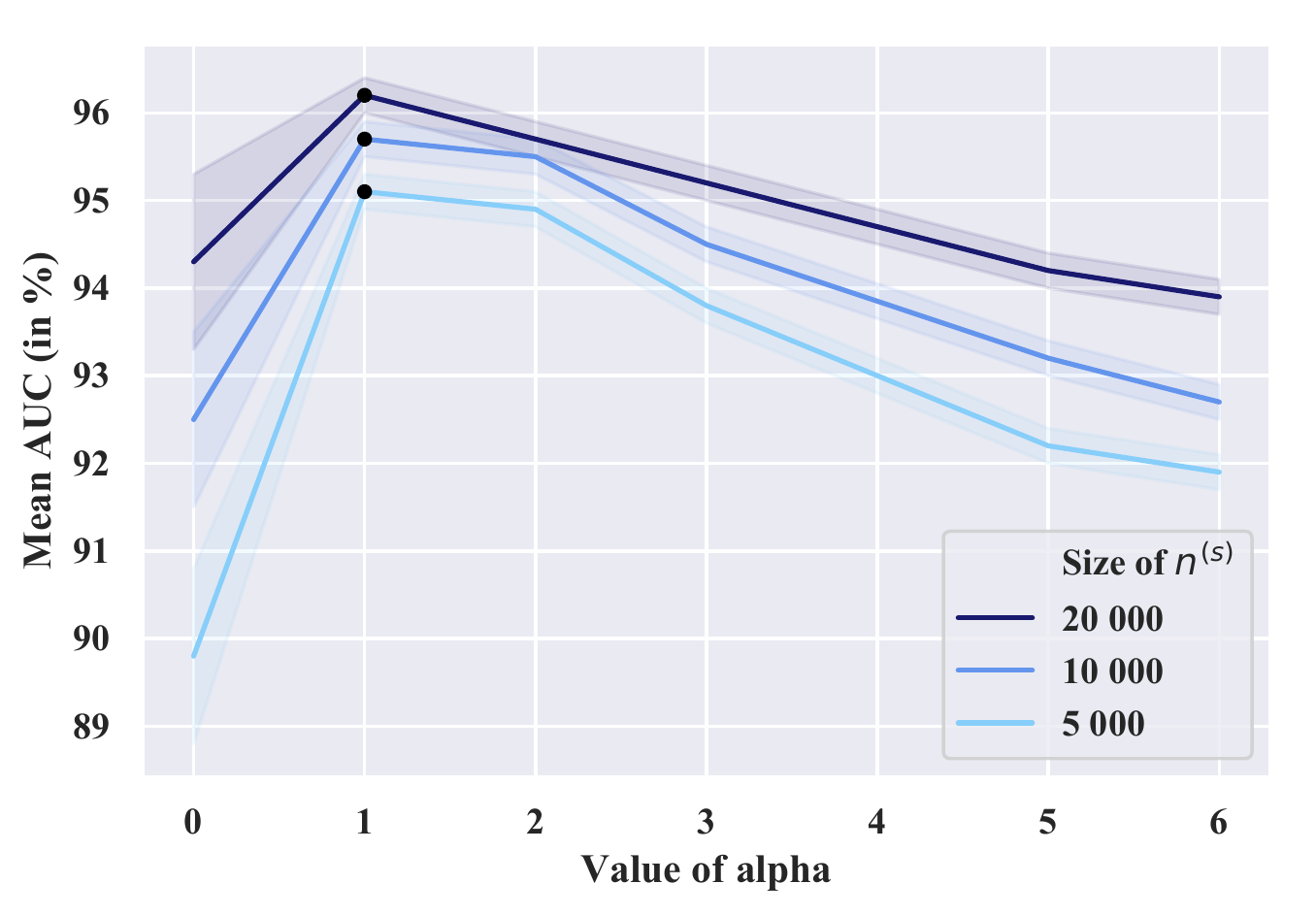}}}\subfigure[Youtube - Degree Sampling]{
  \scalebox{0.3}{\includegraphics{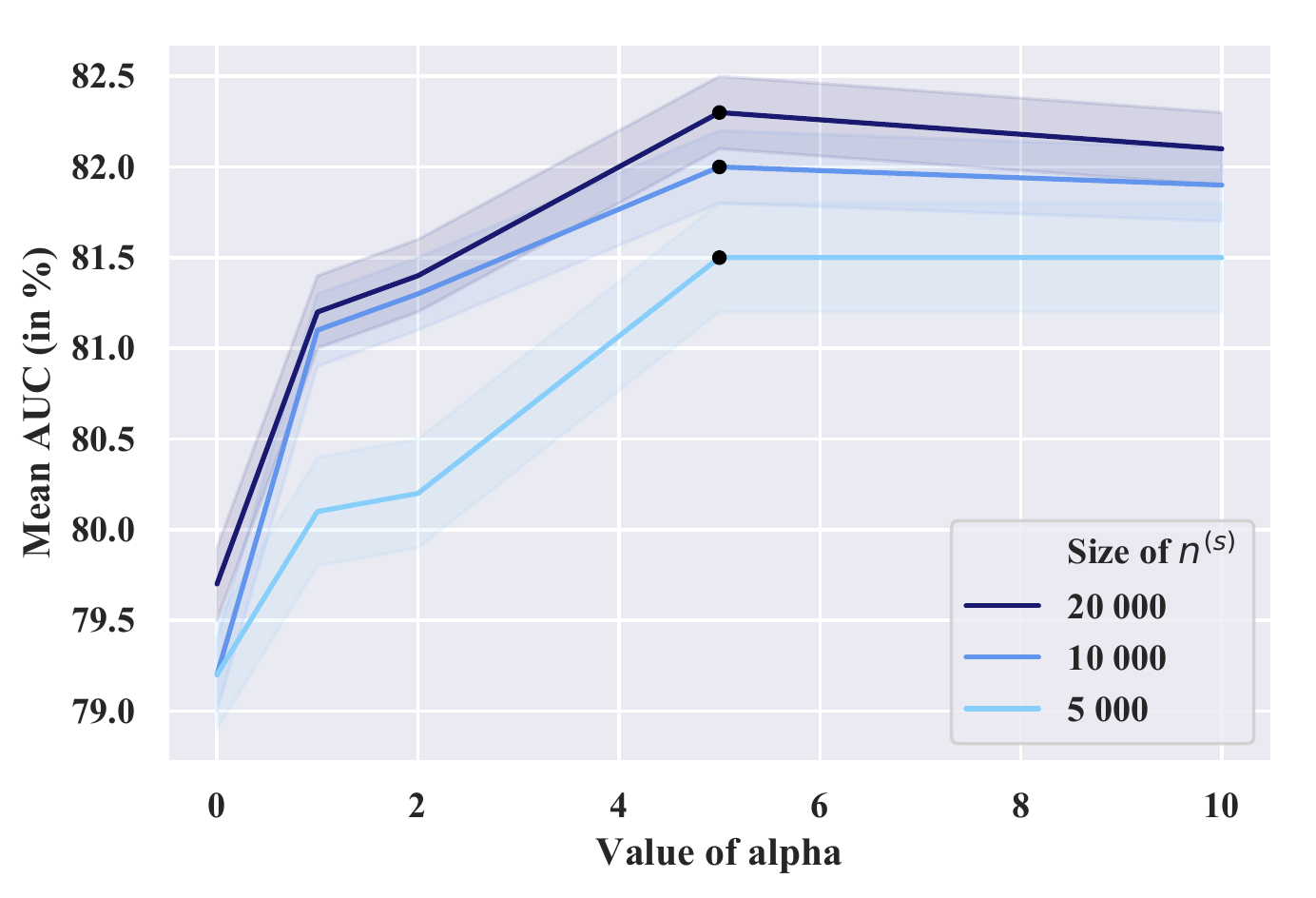}}}
     \subfigure[SBM - Core Sampling]{
  \scalebox{0.3}{\includegraphics{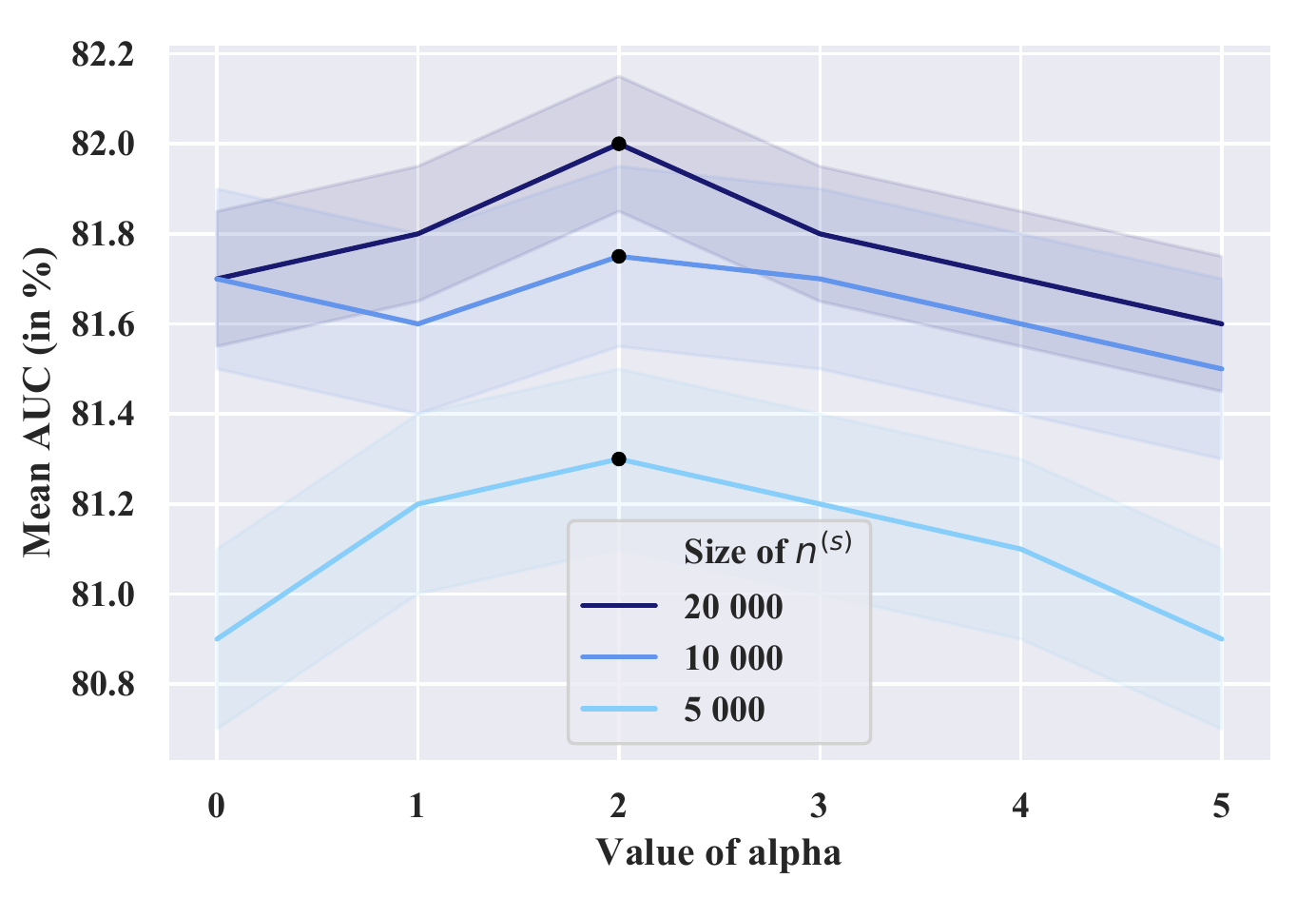}}}\subfigure[Google - Core Sampling]{
  \scalebox{0.3}{\includegraphics{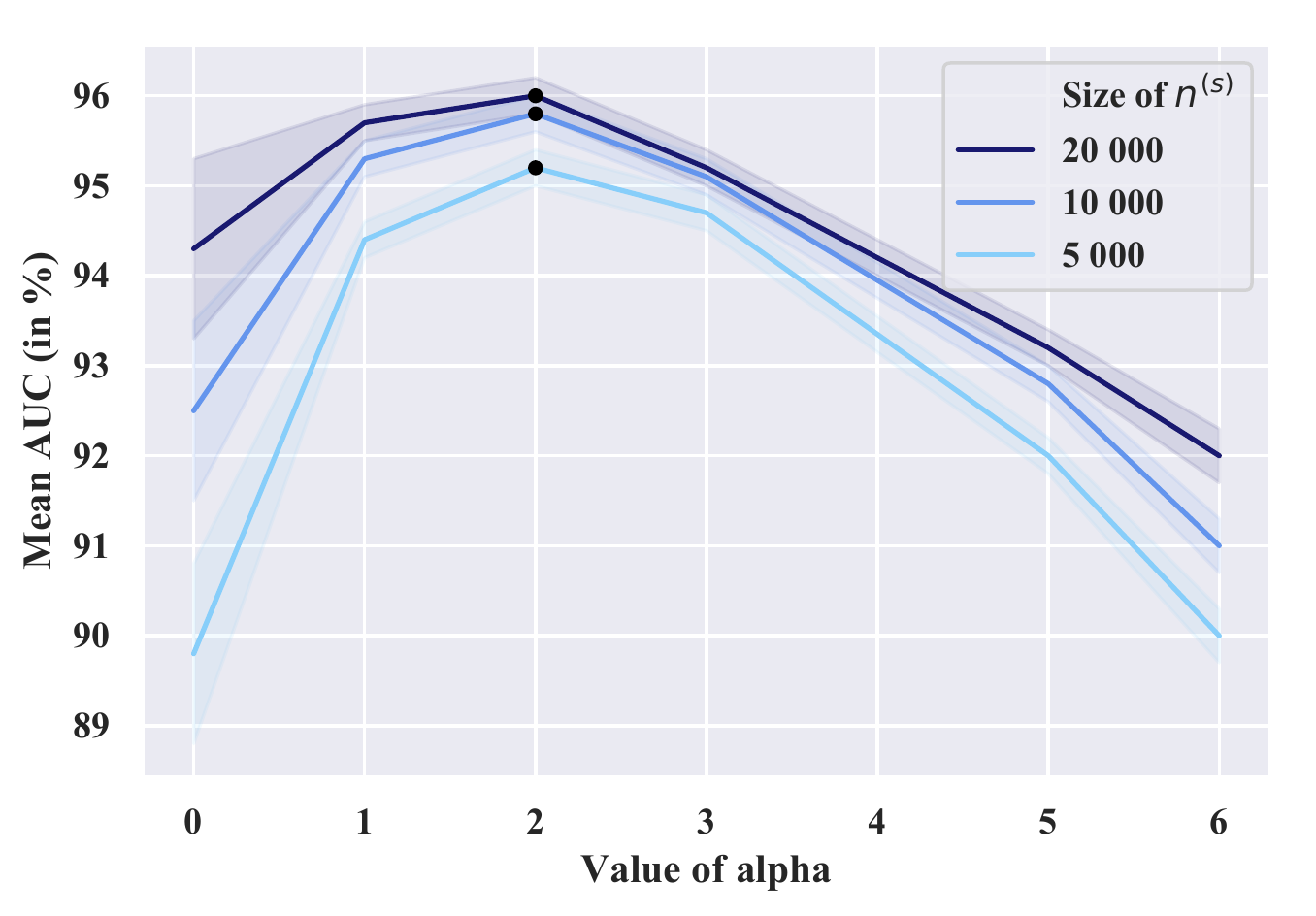}}}\subfigure[Youtube - Core Sampling]{
  \scalebox{0.3}{\includegraphics{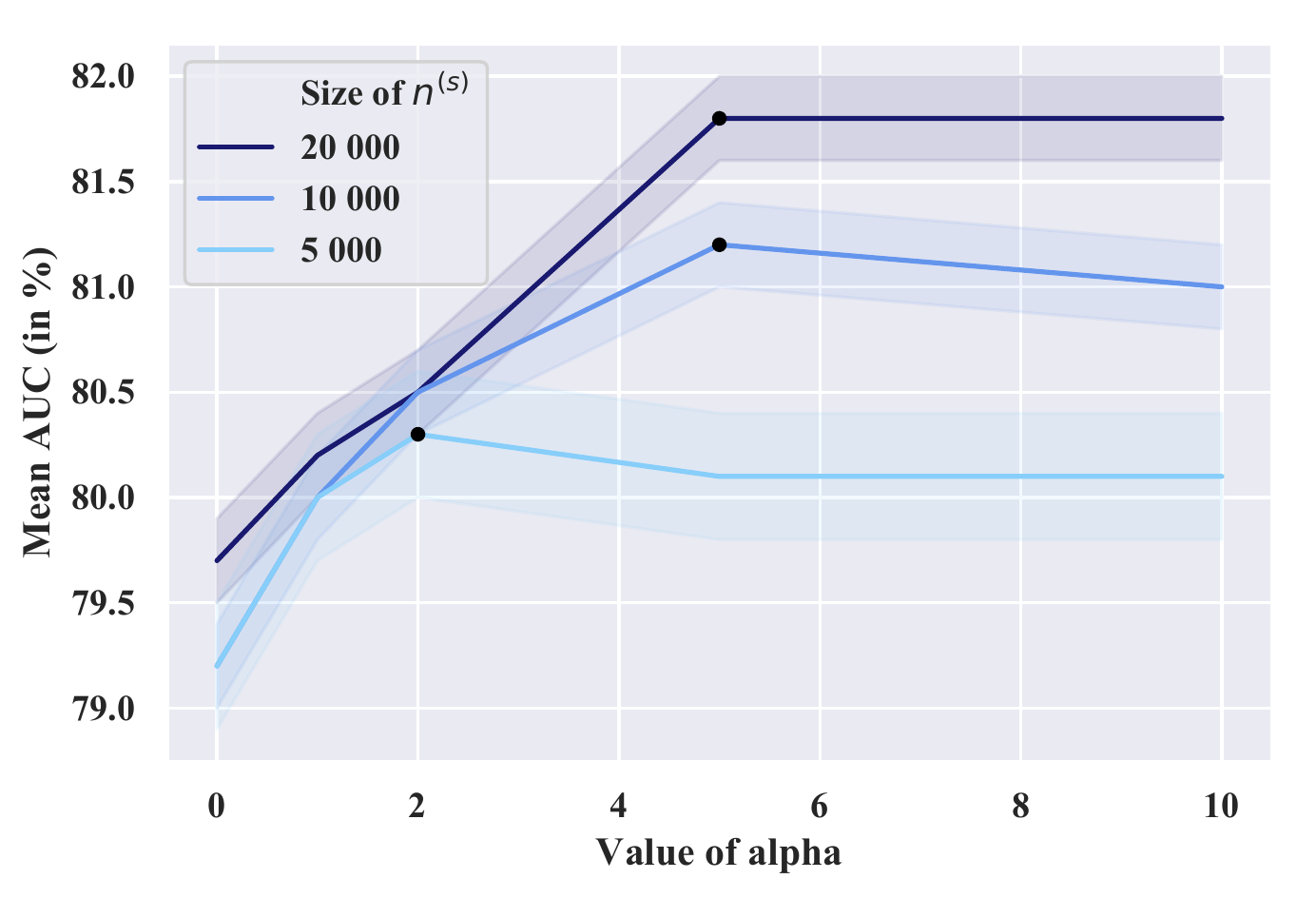}}}
    \subfigure[Patent - Degree Sampling]{
  \scalebox{0.3}{\includegraphics{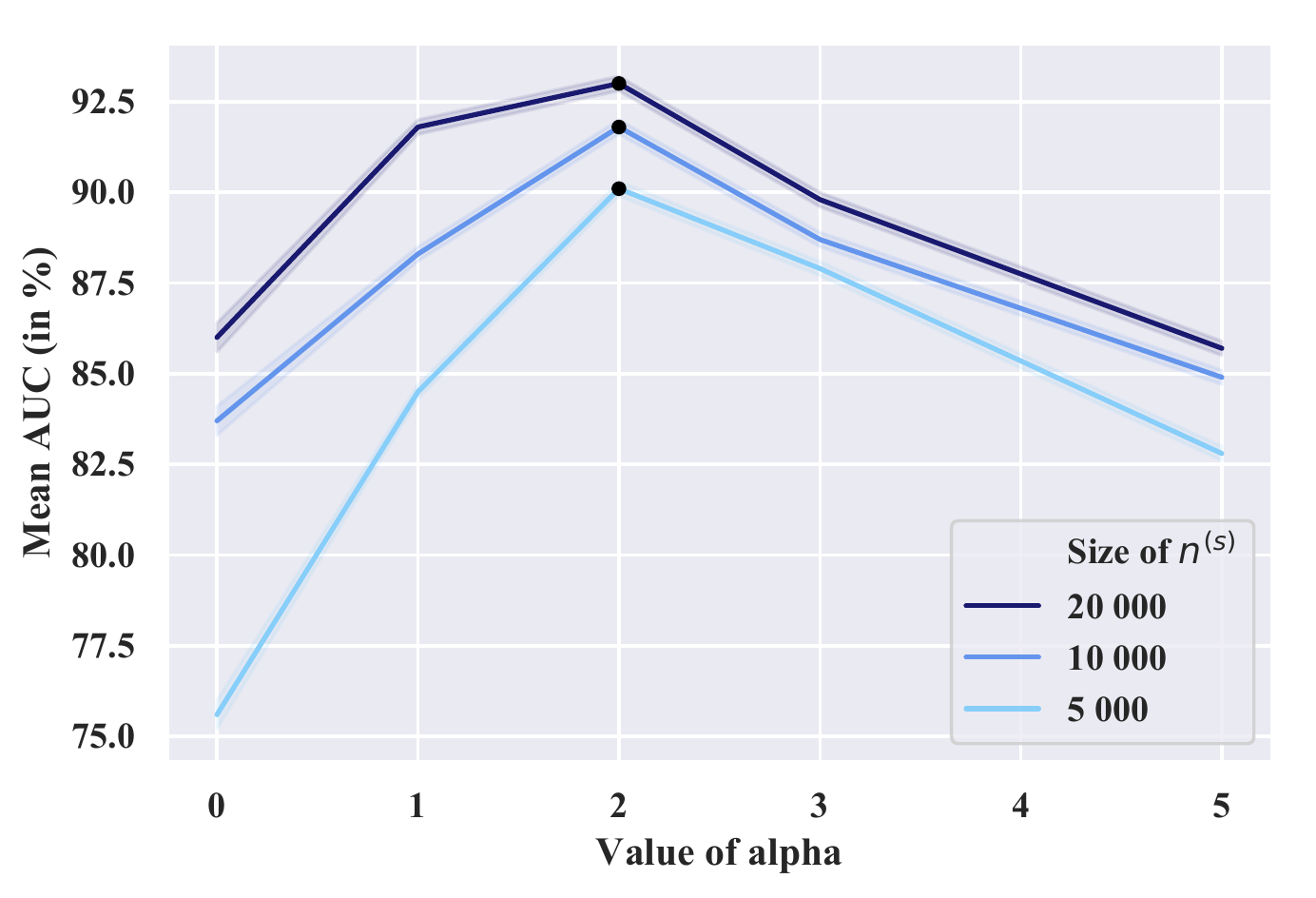}}}\subfigure[Patent - Core Sampling]{
  \scalebox{0.3}{\includegraphics{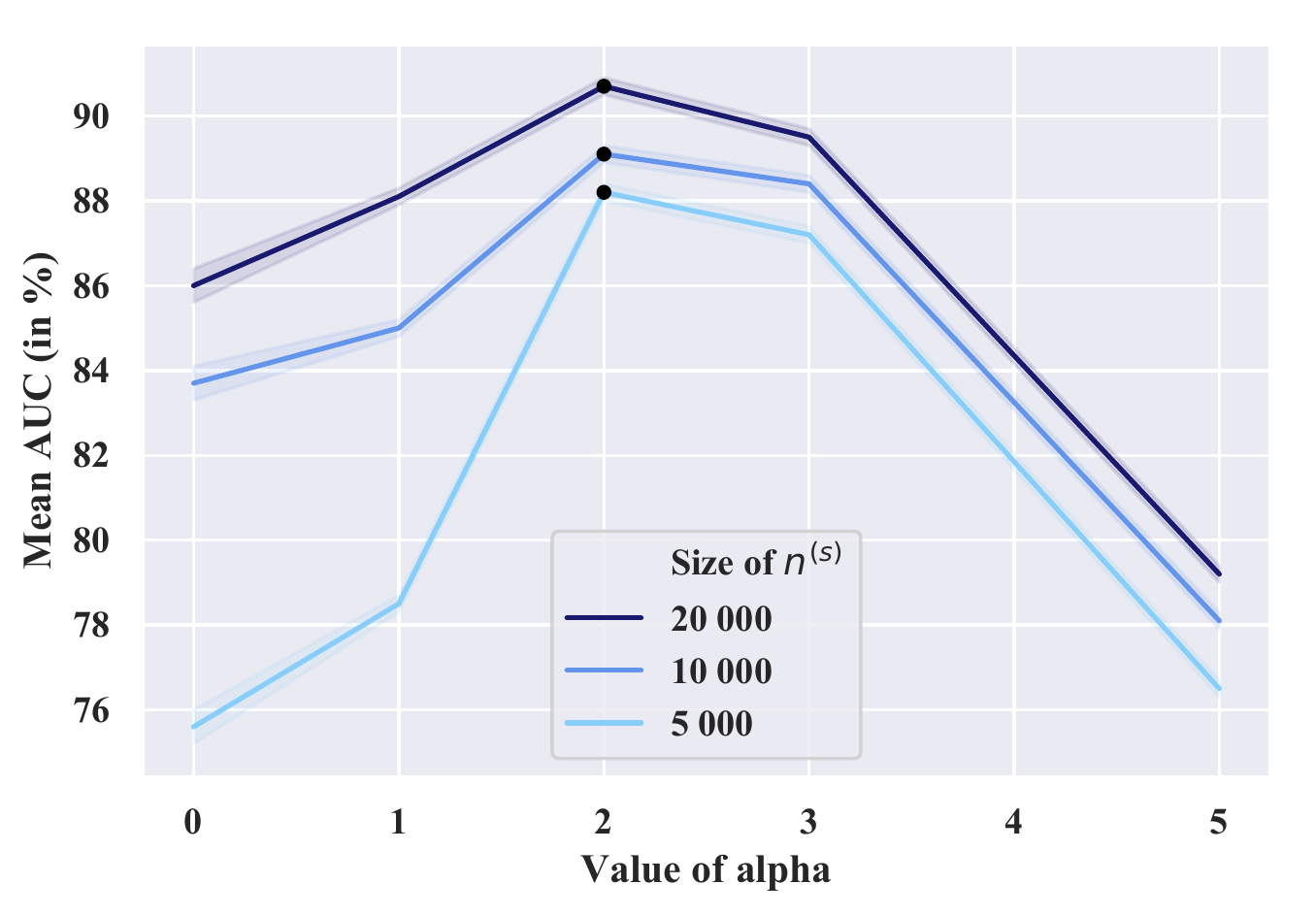}}}
  \caption{Optimal values of hyperparameter $\alpha$ for degree-based and core-based node sampling w.r.t. mean AUC scores on validation sets, for Variational FastGAE models and for all graphs.}
\end{figure*}

\clearpage



\bibliography{mybibfile}

\end{document}